\DeclareMathOperator*{\E}{\mathbb{E}}
\newcommand{\gf}[1]{\textcolor{red}{GF: #1}}
\newcommand{\gap}{\normalfont \text{Gap}} 
\newcommand{\prox}{\normalfont \text{prox}} 
\newtheorem{definition}{Definition}
\newtheorem{theorem}{Theorem}
\newtheorem{lemma}{Lemma}
\newtheorem{remark}{Remark}
\icmltitlerunning{Efficient Algorithms for Federated Saddle Point Optimization}
\begin{document}

\twocolumn[
\icmltitle{Efficient Algorithms for Federated Saddle Point Optimization}



\icmlsetsymbol{equal}{*}

\begin{icmlauthorlist}
\icmlauthor{Charlie Hou}{cmu}
\icmlauthor{Kiran K. Thekumparampil}{uiuc}
\icmlauthor{Giulia Fanti}{cmu}
\icmlauthor{Sewoong Oh}{uw}
\end{icmlauthorlist}

\icmlaffiliation{cmu}{Department of Electrical and Computer Engineering, Carnegie Mellon University, Pittsburgh, Pennsylvania, USA}
\icmlaffiliation{uiuc}{Department of Electrical and Computer Engineering, University of Illinois
at Urbana-Champaign, Urbana-Champaign, Illinois, USA}
\icmlaffiliation{uw}{Department of Computer Science and Engineering, University of Washington, Seattle, Washington, USA}

\icmlcorrespondingauthor{Charlie Hou}{charlieh@andrew.cmu.edu}
\icmlcorrespondingauthor{Kiran K. Thekumparampil}{thekump2@illinois.edu}
\icmlcorrespondingauthor{Giulia Fanti}{gfanti@andrew.cmu.edu}
\icmlcorrespondingauthor{Sewoong Oh}{sewoong@cs.washington.edu}
\icmlkeywords{Machine Learning, ICML, Optimization, Saddle-point, Federated Learning}

\vskip 0.3in
]



\printAffiliationsAndNotice{}  

\begin{abstract}
	We consider strongly convex-concave minimax problems in the federated setting, where the communication constraint is the main bottleneck. When clients are arbitrarily heterogeneous, a simple Minibatch Mirror-prox achieves the best performance. As the clients become more homogeneous, using multiple local gradient updates at the clients significantly improves upon Minibatch Mirror-prox by communicating less frequently. Our goal is to design an algorithm that can harness the benefit of similarity in the clients while recovering the Minibatch Mirror-prox performance under arbitrary heterogeneity (up to log factors). We give the first federated minimax optimization algorithm that achieves this goal. The main idea is to combine (i) SCAFFOLD (an algorithm that performs variance reduction across clients for convex optimization) to erase the worst-case dependency on heterogeneity and (ii) Catalyst (a framework for acceleration based on modifying the objective) to accelerate convergence without amplifying client drift. We prove that this algorithm achieves our goal, and include experiments to validate the theory.
%
\end{abstract}

\begin{table*}[t]
	
	\caption{Communication round complexities with strong convex-concavity.  For the convex optimization rates, $\mu$ denotes strong convexity and $\beta$ denotes smoothness.  (*) denotes a rate that assumes access to noiseless gradients once per communication round.  MD stands for Mirror Descent and MP stands for Mirror-prox.  A -S suffix means a gradient descent-ascent variant of the suffixed algorithm.  To compare rates using random synchronization to rates using deterministic synchronization, set $p = \frac{1}{\tau}$.  Let $z'$ be the final output of the algorithm and $z^*$ be the solution to the problem it is solving.}
	\renewcommand{\arraystretch}{1.4}
	\centering
	\begin{tabular}{ l l l }
		\hline 
		\textbf{Federated Algorithm} &  \textbf{Communication Complexity to reach $\E \|z' - z^*\|^2 < \epsilon$} \\ 
		\hline
		\textit{ Convex optimization} \\
		\hline
		\begin{tabular}{l} 
			FedAvg  {\small\cite{woodworth2020minibatch}}\\
			SCAFFOLD  {\small\cite{karimireddy2020scaffold}}\\
		\end{tabular}

		& \begin{tabular}{l}
			$\min\{\frac{\beta D}{\tau \mu \epsilon^{1/2}}, \frac{D^2}{\tau \mu \epsilon}\}+ \frac{\sigma^2 D^2}{n \tau \mu \epsilon^2} + \frac{\beta^{1/2} \sigma}{\tau^{1/2} \mu^{3/2} \epsilon^{1/2}} + \frac{\beta^{1/2} \zeta}{\mu^{3/2} \epsilon^{1/2}} $ \\
			$\frac{\sigma^2}{n \tau  \mu^2 \epsilon} + \frac{\beta}{\mu}$\\
		\end{tabular}
		\\\hline
		\textit{Convex-Concave Minimax optimization} \\
		\hline
		\begin{tabular}{l} 
			Minibatch MD {\small\cite{balamurugan2016stochastic}}\\
			Minibatch MP(*) {\small\cite{tseng1995linear}}\\
		\end{tabular}

		& \begin{tabular}{l}
			$ \frac{\sigma^2}{n \tau \mu^2 \epsilon} + \frac{\beta^2}{\mu^2}$ \\
			$\frac{\beta}{\mu} $ \\
		\end{tabular}
		\\\hline
		\begin{tabular}{l}
			FedAvg-S (Theorem~\ref{fedavgtheorem}) \\
			SCAFFOLD-S(*) (Theorem~\ref{scaffoldtheorem})\\
			SCAFFOLD-Catalyst-S(*) (Theorem~\ref{catalysttheorem})\\
		\end{tabular}

		&
		\begin{tabular}{l}
			$\frac{p \beta^2}{\mu^2} + \frac{p \sigma^2}{n \mu^2 \epsilon}  + \frac{p^{1/2} \beta \sigma}{\mu^{2}\epsilon^{1/2} }  + \frac{\beta \zeta}{ \mu^{2} \epsilon^{1/2} }$ \\
			$ \min\{\frac{\beta^2}{\mu^2},  \frac{p \beta^2}{\mu^2} + \frac{p \sigma^2}{n \mu^2 \epsilon}  + \frac{p^{1/2}\beta \sigma}{ \mu^{2} \epsilon^{1/2}} + \frac{\beta \zeta}{ \mu^{2} \epsilon^{1/2} }\}$\\
			$\min\{\frac{\beta}{\mu} , \frac{p \beta^2}{\mu^2} + \frac{p \sigma^2}{n \mu^2 \epsilon} + \frac{p^{1/2}\beta \sigma}{ \mu^{2} \epsilon^{1/2}}+ \frac{\beta \zeta}{ \mu^{2} \epsilon^{1/2} }\}  $\\
		\end{tabular}
		\\\hline 
	\end{tabular}
	\label{complexitytable}
\end{table*}
\begin{table}[t]
    \centering
    \caption{List of notations}
	\renewcommand{\arraystretch}{1.2}
	\begin{tabular}{ l l }
	\hline
	Symbol & Meaning \\
	\hline
	\begin{tabular}{ l }
		$\mu$ \\
        $\beta$ \\
        $\zeta$ \\
        $p$ \\
        $n$\\
        $\kappa$ \\
        $D$ \\
        $\tau$ \\
        $\sigma^2$\\
	\end{tabular} &
	\begin{tabular}{ l }
		Strong convex-(concavity)\\
        Smoothness \\
        Client heterogeneity \\
        Probability of synchronization \\
        Number of clients \\
        Condition number $\frac{\beta}{\mu}$ \\
        Upper bound on norm of optimal point \\
        Number of local steps per round \\
        Gradient query variance
	\end{tabular}
\end{tabular}
\end{table}
\section{Introduction}
\label{intro}
In federated learning \cite{mcmahan2017communication}, a set of distributed clients or devices interact with a central server to learn one or more models without directly sharing any party's data with the central server. 
For many real-world applications, the bottleneck in federated learning is communication \cite{karimireddy2020scaffold}, since large models such as deep neural networks can be expensive to transmit over slow or unreliable communication channels.  
Therefore, a central goal in federated learning is to use client (local) computation effectively to reduce the communication complexity of learning, even in the presence of heterogeneous clients with data drawn from different distributions.  

In this work, we study this problem for federated saddle point optimization problems over $n$ clients of the form 
\begin{align}
    \label{eq:objective}
    \min_{x \in \mathbb{R}^m} \max_{y \in \mathbb{R}^d} \left\{ f(x,y) := \frac{1}{n}\sum_{i=1}^n f_i(x,y) \right\}
\end{align}
where the functions $f_i: \mathbb{R}^m \times \mathbb{R}^d \to \mathbb{R}$ are strongly convex-concave.
In particular, we consider the heterogeneous setting where the $i$th client has access to data from distribution $\mathcal D_i$, and the $f_i$ of each client are of the form 
\begin{align}
\label{eq:losses}
f_i(x, y) = \mathbb{E}_{\xi \sim \mathcal D_i} [f_i(x,y,\xi)].
\end{align}

Federated minimax problems arise in many natural settings.  For instance, (not necessarily convex-concave) GANs \cite{goodfellow2014generative} have been applied in the federated setting \cite{augenstein2019generative}, where the goal is to train a model that will mimic data distributed among clients.  For (strongly) convex-concave examples, formulation (\ref{eq:objective}) can be used to represent federated robust optimization \cite{ben2009robust} and primal-dual optimization of federated supervised learning problems \cite{balamurugan2016stochastic}.

To date, research in federated learning has focused on federated \textit{minimization} problems.  The most well-known algorithm developed to solve federated minimization is FedAvg \cite{mcmahan2017communication}.  FedAvg is known to perform well when data is homogeneous or near-homogeneous across clients, but possibly performs badly when data is arbitrarily heterogeneous across clients \cite{woodworth2020minibatch}.  To remedy this, SCAFFOLD \cite{karimireddy2020scaffold} proposes variance reduction across clients to prevent client heterogeneity from hurting convergence.

However, the analyses of these algorithms, and other federated minimization algorithms, do not apply to the federated minimax setting.  All past analyses rely on a particular gradient co-coercivity property that is true for function values but not true for the primal-dual gap (the metric used to measure the quality of a solution to (\ref{eq:objective})).  As a result, there currently exist no algorithms with proven convergence guarantees for (\ref{eq:objective}).

One straightforward approach is to use FedAvg/SCAFFOLD to solve (\ref{eq:objective}), except replacing gradient descent with gradient ascent-descent.  We will refer to the gradient ascent-descent variants of these algorithms also as FedAvg-S and SCAFFOLD-S, respectively (``S'' for saddle-point).  However, for these algorithms we recover inferior worst-case communication complexity guarantees compared to Minibatch Mirror-prox \cite{nemirovski2004prox}.  Specifically, FedAvg-S continues to be slow the presence of large client heterogeneity, while the communication complexity of SCAFFOLD-S is worse than that of Minibatch Mirror-prox by a condition number in the strongly convex case.  This is perhaps not surprising, as FedAvg-S and SCAFFOLD-S are not accelerated algorithms, while mirror-prox can be viewed as an accelerated algorithm.  Naively, one might try to add acceleration to the local gradients.  However, this can lead to the amplification of \emph{client drift}, which describes the dispersion of client iterates away from each other.  In the heterogeneous client setting, client drift is unavoidable when taking local steps.  As a result, provable improvement with client-level acceleration has been limited to the homogeneous data case \cite{yuan2020federatedac}.  

Our solution is to apply the catalyst framework for acceleration \cite{lin2015universal} at the central server level to SCAFFOLD-S.  Intuitively, we ask SCAFFOLD-S to solve a series of regularized versions of the original objective.  If the regularization is large, the number of modified objectives to be solved is large, but the number of communication rounds required to solve each modified objective is small (and vice versa).  Because the regularization is applied uniformly across clients, client drift is not amplified.  By strategically balancing the number of iterations to solve the outer problem versus the inner problem, we can accelerate SCAFFOLD-S to achieve the same worst-case communication guarantee of Minibatch Mirror-prox while also providing an advantage over Minibatch Mirror-prox when the client data is similar.  
\subsection{Our Contributions}
\begin{itemize}
    \item We develop a novel analysis of federated optimization that can apply to federated minimax problems, which we use to derive convergence rates for SCAFFOLD-S and FedAvg-S in the minimax setting.  We also show that SCAFFOLD-S can take advantage of local computation to reduce communication complexity in the strongly convex(-concave) case, past only quadratics as \cite{karimireddy2020scaffold} had shown.
    \item We use a novel application of the catalyst framework on SCAFFOLD-S to develop a new algorithm, SCAFFOLD-Catalyst-S, which we prove achieves the same worst-case communication complexity as Minibatch Mirror-prox.  
    \item We also prove that our algorithm can take advantage of similarity in the client objectives, showing that local computation can reduce communication complexity under favorable conditions.
    \item Our application of the catalyst framework is the first acceleration of a federated algorithm, minimax or otherwise, in the presence of heterogeneous clients.  We conjecture this approach to the acceleration of federated optimization will generalize past the minimax setting to the minimization setting.    
\end{itemize}

\section{Related Work}
Developing and analyzing algorithms for federated learning has been an active area of research.  FedAvg's minimization convergence properties in the homogeneous client setting was first established by \cite{stich2018local}, and was tightened later by \cite{woodworth2020local} with accompanying lower bounds.  Later, \cite{khaled2020tighter} proved convergence rates for FedAvg in the heterogeneous client setting, which was later tightened by \cite{karimireddy2020scaffold} and then also by \cite{woodworth2020minibatch}.  A lower bound was also established by both \cite{karimireddy2020scaffold} and \cite{woodworth2020minibatch} showing that FedAvg's convergence rate necessarily scales with client heterogeneity.  

As a result, new algorithms based on variance reduction across clients were proposed to remove the convergence rate dependence on heterogeneity \cite{liang2019variance} \cite{karimireddy2020scaffold}, analogous to how standard variance reduction \cite{johnson2013accelerating} can remove the convergence rate dependence on gradient query variance.  SCAFFOLD, the most well known client variance-reduced algorithm, \cite{karimireddy2020scaffold} was able to match the worst-case rate of minibatch SGD (but notably, not accelerated minibatch SGD) under arbitrary client heterogeneity.  A lower bound on a broad class of federated minimization algorithms from \cite{woodworth2020minibatch} suggests that accelerated minibatch SGD's rate under arbitrary client heterogeneity is optimal, though notably this lower bound does not directly apply to SCAFFOLD.  Finally, \cite{gorbunov2020local} unifies the analysis of all the above algorithms, proving upper bounds for each.

Variance reduction has been an important research direction in minimax optimization as well, starting with the work of \cite{balamurugan2016stochastic}, which provided a framework for analyzing variance reduction in the minimax setting.  Furthermore, they established (up to log factors) the current state of the art rates by combining variance reduction together with the catalyst framework \cite{lin2015universal}.  Later \cite{carmon2019variance} combined \cite{nemirovski2004prox}'s prox-method with variance reduction to improve the convergence rate for matrix games.   

In light of some evidence that federated minimization algorithms have a lower bound equal to the accelerated minibatch SGD rate, our goal is to develop a federated minimax optimization algorithm that has the same worst-case communication complexity as Minibatch Mirror-prox \cite{nemirovski2004prox} (mirror-prox can be seen as the AGD of minimax optimization), while maintaining an advantage over Minibatch Mirror-prox when client data is similar.  We achieve this goal by combining techniques from the minimax variance reduction literature and the federated minimization literature cited above.

\section{Preliminaries}

We start with the following standard definitions for strongly-convex-concave minimax problems.
\begin{definition}
    $g$ is $\mu$-strongly convex-concave, with $\mu \geq 0$, if for any $x, x', y, y'$,
    \begin{align*}
        g(x',y) &\geq g(x,y) + \langle \nabla_x g(x, y), x' - x \rangle \\
        & \ \ \ + \frac{\mu}{2} \|x' - x\|^2 \\
        -g(x,y') &\geq -g(x,y) + \langle -\nabla_y g(x, y), y' - y \rangle \\
        & \ \ \ + \frac{\mu}{2} \|y' - y\|^2.
    \end{align*}
\end{definition}

\begin{definition}
    $g$ is $\beta$-smooth if for any $x, x', y, y'$,
    \begin{align*}
        \| \nabla_x g(x,y) - \nabla_x g(x',y)\| &\leq \beta \|x - x'\| \\
        \| \nabla_y g(x,y) - \nabla_y g(x,y')\| &\leq \beta \|y - y'\| \\
        \| \nabla_x g(x,y) - \nabla_x g(x,y')\| &\leq \beta \|y - y'\| \\
        \| \nabla_y g(x,y) - \nabla_y g(x',y)\| &\leq \beta \|x - x'\|.
    \end{align*}
\end{definition}
We let $\kappa := \frac{\beta}{\mu}$, and use the notation $z=(x,y)$, $z \in \mathbb{R}^m \times \mathbb{R}^d$ to refer to the concatenation of $x$ and $y$.
\begin{definition}
    The gradient mapping $G$ is defined as 
    \begin{align}
        G(z) = (\nabla_x f(z), -\nabla_y f(z)).
    \end{align}
    Similarly,
    \begin{align}
        G_i(z) = (\nabla_x f_i(z), -\nabla_y f_i(z)).
    \end{align}
    Let $\hat{G_i}$ be an unbiased estimate of $G_i(z)$ with variance $\sigma^2$.
\end{definition}
\begin{definition}
    A point $z^* = (x^*, y^*)$ is considered a minimax-optimal point for $g$ if for all $x \in \mathbb R^m$, $y\in \mathbb R^d$,
    \begin{align}
        g(x^*, y) \leq g(x^*, y^*) \leq g(x, y^*).
    \end{align}
\end{definition}
From now on, we will let $z^*=(x^*,y^*)$ denote the minimax-optimal point for $f$.  

Finally, we introduce a notion of solution quality for minimax optimization:
\begin{definition}
    The duality gap of point $z$ with respect to $z^*$ is defined as
    \begin{align}
        \text{\normalfont Gap}^*(z) = f(x, y^*) - f(x^*, y).
    \end{align}
\end{definition}
Next, we introduce a common notion of client heterogeneity (modified for minimax optimization), which was introduced in \cite{woodworth2020minibatch}.
\begin{definition}
    A set of functions $\{ f_i \}_{0 \leq i \leq n}$ are called $\zeta$-heterogeneous if for all $i,j \in [n]$ and $z \in \mathbb{R}^m \times \mathbb{R}^d$,
    \begin{align}
        \|G_i(z) - G_j(z)\|^2 \leq \zeta^2.
    \end{align}
\end{definition}
\begin{algorithm}[tb]
    \caption{Framework (\ref{eq:localframework})}
    \label{alg:framework}
 \begin{algorithmic}
	\STATE {\bfseries Server Input:} initial $z^0$, stepsizes $\gamma_l, \gamma_g$
	\STATE probability of communication $p$
    \STATE {\bfseries Client Input:} local function $f_i$
	\STATE {\bfseries set} $\tilde{z}^0 = z^0$
    \FOR{$k=0, 1, \dots$}
		\STATE Flip a coin $c_k$; $c_k = 1$ w.p. $p$, and $c_k = 0$ otherwise
        \FOR{each client $i$ in parallel}
            \STATE $z_i^{k+1} \gets z_i^{k} - \gamma_l g_i^k$
            \IF{$c_k = 1$}
                \STATE Clients {\bfseries communicate} $\sum_{l = k'}^k g_i^l$ to server
                \STATE Server {\bfseries broadcasts} $\frac{1}{n}\sum_{i=1}^n \sum_{l = k'}^k g_i^l$ 
			    \STATE $\tilde{z}^{k+1} \gets \tilde{z}^k - \gamma_g \frac{1}{n} \sum_{i=1}^n \sum_{l = k'}^k g_i^l$
                \STATE $z_i^{k+1} \gets \tilde{z}^{k+1}$
                \STATE Server {\bfseries broadcasts} $\tilde{z}^{k+1}$
                \STATE {\bfseries More communication} depending on algorithm 
		    \ENDIF
            \
        \ENDFOR
	\ENDFOR
 \end{algorithmic}
 \end{algorithm}

Throughout this work, we consider optimization algorithms that fit the update framework proposed in \cite{gorbunov2020local}:
\begin{align}
	\label{eq:localframework}
	z_i^{k+1} \gets
	\begin{cases}
		z_i^{k'} - \gamma_g \frac{1}{n} \sum_{i=1}^n \sum_{l = k'}^{k} g_i^l  &  $w.p. $ p \\
		z_i^k - \gamma_l g_i^k & $w.p  $ 1-p
	\end{cases}
\end{align}
Here $z_i^{k}$ is client $i$'s iterate after $k$ iterations, $g_i^k$ is the local first-order direction taken, $\gamma_l$ is the local stepsize, $\gamma_g$ is the global stepsize, $k'$ is the last iteration on which synchronication occurred, and $p$ is the probability of synchronizing at any given $k$.  We provide a more intuitive way of expressing (\ref{eq:localframework}) in Algorithm~\ref{alg:framework}, which explicitly shows what is communicated and when.

Note that most prior federated learning literature has focused on deterministic synchronization, where time is divided into \emph{rounds}, and each client takes a fixed number $\tau>0$ of local update steps per communication round before synchronizing globally.  
We are instead (a) letting $\tau$ be random, and (b) not explicitly counting the number of rounds (i.e., global synchronization updates). 
For comparison with prior work, we set $p = \frac{1}{\tau}$, so the expected number of communication rounds executed at iteration $k$ can be computed as $kp$.

\section{Baselines}
\label{sec:baselines}
In this section, we define and cover the baseline algorithms: Minibatch Mirror Descent, Minibatch Mirror-prox, FedAvg-S, SCAFFOLD-S, and present their convergence rates in the federated minimax optimization setting. 

\subsection{Minibatch Mirror Descent}
We present Minibatch Mirror Descent in the same way \cite{woodworth2020minibatch} presents Minibatch SGD.  During a communication round, each client takes $\tau$ (here we let $\tau$ be deterministic) stochastic gradient mappings at the same point.  These gradient mappings are then averaged within the client and then sent to the central server.  The server collects all $\tau n$ gradient mappings, averages them, and takes a step in the resulting direction.  In the language of framework (\ref{eq:localframework}), we let $g_i^k = \hat{G}(z_i^k)$, $\gamma_l = 0$, and $\gamma_g$ be the stepsize.  Because we are simply performing a mirror descent step with minibatch size $\tau n$ at each step, the communication complexity is $\tilde{\mathcal{O}}(\frac{\beta^2}{\mu^2} + \frac{\sigma^2}{\tau n \mu \epsilon}$) in the strongly convex-concave case \cite{balamurugan2016stochastic}.

\subsection{Minibatch Mirror-prox}
After Minibatch Mirror Descent, it is natural to consider its accelerated counterpart.  This algorithm doesn't fit neatly into the notation we have defined so far, so we define new notation.  Let $\hat{z}^r$ be the iterate held at the server at the $r$-th communication round.  Minibatch Mirror-prox updates as follows: first, the central server collects $\tau n$ gradient mappings evaluated at $\hat{z}^r$, producing $\hat{z}^{r + 1/2}$.  Then the algorithm collects $\tau n$ gradient mappings evaluated at $\hat{z}^{r + 1/2}$, producing $\bar{g}^{r+1/2}$.  Finally, the server updates as $\hat{z}^{r + 1} \gets \hat{z}^{r} - \eta \bar{g}^{r+1/2}$.  So in the language of framework (\ref{eq:localframework}), $g_i^k = G_i(\hat{z}^{r + 1/2})$ (that is, evaluating a full batch gradient), and $\eta_l = 0$, where $\hat{z}^{r + 1/2}$ was defined earlier.  Because we are just taking a mirror-prox step per two communication rounds, the communication complexity in the strongly convex-concave case is $\tilde{\mathcal{O}}(\frac{\beta}{\mu})$ \cite{tseng1995linear} (assuming $\sigma = 0$).

\subsection{FedAvg-S}
The FedAvg-S algorithm follows framework (\ref{eq:localframework}) by
taking $g_i^k = \hat{G}_i(z_i^k)$ and $\gamma_g= \gamma_l$.  We will now state our convergence result for this algorithm.
\begin{theorem}
	\label{fedavgtheorem}
	For $\beta$-smooth and $\mu$-strongly convex-concave functions $\{f_i\}$ ($\mu > 0)$, the output of FedAvg-S, $z'$, has $\E \text{\normalfont Gap}^*(z') \leq \epsilon$ after 
	\begin{align}
		\tilde{\mathcal{O}}(\frac{p \beta^2}{\mu^2} + \frac{p \sigma^2}{n \mu \epsilon}  + \frac{p^{1/2} \beta \sigma}{\mu^{3/2}\epsilon^{1/2} } + \frac{\beta \zeta}{ \mu^{3/2} \epsilon^{1/2} })
		\label{eq:fedavg-value}
   	\end{align}
	communication rounds, 
	and $\E \|z' - z^*\|^2 \leq \epsilon$ after 
	\begin{align}
		\tilde{\mathcal{O}}(\frac{p \beta^2}{\mu^2} + \frac{p \sigma^2}{n \mu^2 \epsilon}  + \frac{p^{1/2} \beta \sigma}{\mu^{2}\epsilon^{1/2} }  + \frac{\beta \zeta}{ \mu^{2} \epsilon^{1/2} })
	\end{align}
	communication rounds given an appropriate choice of stepsize, where $\tilde{\mathcal{O}}$ hides both logarithmic and constant factors. 
\end{theorem}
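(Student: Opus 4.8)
The plan is to work directly with the gradient mapping $G$ and the averaged (``virtual'') iterate $\bar z^k = \frac{1}{n}\sum_{i=1}^n z_i^k$, rather than with function values, since the function-value machinery of prior FedAvg analyses does not transfer. Because FedAvg-S sets $\gamma_g=\gamma_l$, the average obeys the single clean recursion $\bar z^{k+1}=\bar z^k-\gamma_l\,\bar g^k$ with $\bar g^k=\frac{1}{n}\sum_i \hat G_i(z_i^k)$, regardless of whether step $k$ is a synchronization step, so I can analyze one virtual sequence throughout. First I would establish the two deterministic facts that replace the usual co-coercivity argument: from strong convex-concavity, $\langle G(z),z-z^*\rangle\ge\mathrm{Gap}^*(z)+\tfrac{\mu}{2}\|z-z^*\|^2$ (obtained by summing the strong convexity/concavity inequalities of the first Definition evaluated against $x^*$ and $y^*$), and from smoothness together with $G(z^*)=0$, $\|G(z)\|^2=\|G(z)-G(z^*)\|^2\le 2\beta^2\|z-z^*\|^2$. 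These two inequalities let me control both the cross term and the second-moment term appearing below purely in terms of the distance $\|z-z^*\|^2$ and the gap.

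Next I would expand $\E\|\bar z^{k+1}-z^*\|^2=\|\bar z^k-z^*\|^2-2\gamma_l\langle\frac{1}{n}\sum_i G_i(z_i^k),\,\bar z^k-z^*\rangle+\gamma_l^2\,\E\|\bar g^k\|^2$, and decompose $\frac{1}{n}\sum_i G_i(z_i^k)=G(\bar z^k)+\frac{1}{n}\sum_i\left(G_i(z_i^k)-G_i(\bar z^k)\right)$. The first summand contributes the gap and the $\mu$-contraction through the inequality above, while the drift summand is bounded, after Cauchy--Schwarz, Young, and smoothness, by $\beta$ times the \emph{client drift} $\Xi^k:=\frac{1}{n}\sum_i\|z_i^k-\bar z^k\|^2$. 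For the stochastic second moment I would use independence across clients to write $\E\|\bar g^k\|^2\le\|\frac{1}{n}\sum_i G_i(z_i^k)\|^2+\sigma^2/n$, the $1/n$ reflecting the averaging of $n$ independent estimates; the deterministic part is then again bounded by $2\beta^2\|\bar z^k-z^*\|^2$ plus a $\beta^2\Xi^k$ drift term via Lipschitzness.

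The heart of the local-update analysis is a recursive bound on $\Xi^k$. Immediately after any synchronization $\Xi=0$; within a synchronization-free phase (geometric, of mean length $1/p$ under framework (\ref{eq:localframework})) each local step injects drift whose sources are the heterogeneity $\zeta^2$ (clients move along the differing directions $G_i$, which disagree by at most $\zeta$ by the last Definition), the variance $\sigma^2$, and the common gradient magnitude $\approx\beta^2\|\bar z^k-z^*\|^2$. Unrolling over the geometric phase yields $\E\Xi^k\lesssim\frac{1-p}{p}\,\gamma_l^2\big(\zeta^2+\sigma^2+\beta^2\,\E\|\bar z^k-z^*\|^2\big)$. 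Substituting this back, and applying Young's inequality to the cross-term involving the drift, produces a single Lyapunov recursion of the schematic form $\E\|\bar z^{k+1}-z^*\|^2\le(1-\Omega(\gamma_l\mu))\,\E\|\bar z^k-z^*\|^2-2\gamma_l\,\E\,\mathrm{Gap}^*(\bar z^k)+\gamma_l^2\tfrac{\sigma^2}{n}+(\text{higher-order terms in }\tfrac{1-p}{p}(\zeta^2+\sigma^2))$, valid once $\gamma_l\lesssim\mu/\beta^2$ so that the non-co-coercive $\beta^2$ contributions are absorbed into the $\mu$-contraction. I would then take the $\mu$-weighted average $z'=\sum_k w_k\bar z^k$ indicated in the FedAvg-S description, telescope to convert the per-step gap terms into $\E\,\mathrm{Gap}^*(z')$, and finally tune $\gamma_l$ while converting iterations to rounds (expected rounds $=kp$) to balance the four terms of (\ref{eq:fedavg-value}); the $\|z'-z^*\|^2$ bound follows from $\mathrm{Gap}^*(z)\ge\tfrac{\mu}{2}\|z-z^*\|^2$, which contributes the extra $1/\mu$ factors.

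The main obstacle is exactly the failure of co-coercivity for the minimax gradient mapping, which the paper flags as the reason prior federated analyses do not apply. Standard FedAvg/SCAFFOLD proofs use $\langle\nabla h(x)-\nabla h(x'),x-x'\rangle\ge\frac{1}{\beta}\|\nabla h(x)-\nabla h(x')\|^2$ to bound squared gradient norms by function suboptimality, which both yields the favorable $\kappa$ (rather than $\kappa^2$) dependence and keeps the drift recursion inexpensive. Here $G$ is monotone and Lipschitz but \emph{not} co-coercive, so I am forced to bound $\|G(z)\|^2\le 2\beta^2\|z-z^*\|^2$ by smoothness alone. This is precisely what forces $\mu>0$ (to obtain any contraction at all), injects the worse $p\beta^2/\mu^2$ term, and couples the drift back into the iteration through the squared-gradient term, making the Lyapunov recursion delicate. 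Producing a clean recursion that simultaneously controls the gap, the distance, and the drift \textbf{without} ever invoking co-coercivity is the technical core; the remainder is careful bookkeeping of the geometric synchronization schedule and the stepsize tuning.
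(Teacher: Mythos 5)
Your overall architecture matches the paper's own proof: the virtual averaged iterate, the strong-monotonicity inequality $\langle G(z),z-z^*\rangle \geq \text{Gap}^*(z)+\tfrac{\mu}{2}\|z-z^*\|^2$ in place of co-coercivity, the Fenchel--Young bound of the error term $\langle G(\bar z^k)-\bar g^k,\bar z^k-z^*\rangle$ by $\tfrac{\beta^2}{\mu}$ times the drift, the smoothness bound on $\E\|\bar g^k\|^2$ with the $\sigma^2/n$ variance term, the $\mu$-weighted averaging and telescoping, the stepsize tuning, the multiplication by $p$ to count rounds, and the conversion to distance via $\text{Gap}^*(z)\geq\tfrac{\mu}{2}\|z-z^*\|^2$. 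The genuine gap is in the step you yourself call the heart of the argument: the client-drift lemma. Your claimed bound
\begin{align*}
\E\,\Xi^k \;\lesssim\; \frac{1-p}{p}\,\gamma_l^2\bigl(\zeta^2+\sigma^2+\beta^2\,\E\|\bar z^k-z^*\|^2\bigr)
\end{align*}
is wrong in two consequential ways. First, the term $\beta^2\|\bar z^k-z^*\|^2$ should not be there: drift measures deviations of clients from their average, and in the difference $\hat G_i(z_i^k)-\tfrac1n\sum_j\hat G_j(z_j^k)$ the common gradient field cancels. Decomposing $G_i(z_i^k)=G_i(\bar z^k)+[G_i(z_i^k)-G_i(\bar z^k)]$ leaves only a heterogeneity part ($\leq\zeta$), Lipschitz-in-drift parts ($\leq\beta\|z_i^k-\bar z^k\|$), and noise; the magnitude of $G(\bar z^k)$ moves all clients coherently and creates no dispersion. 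The paper's drift lemma (Lemma~\ref{heterodrift}), proved via pairwise differences $\|z_i^k-z_j^k\|^2$, accordingly contains no $\|z^k-z^*\|^2$ term. This is not cosmetic: carrying your extra term through the Lyapunov recursion, the contribution $\tfrac{\beta^4\gamma^3}{\mu p}\E\|\bar z^k-z^*\|^2$ must be absorbed by the contraction $\gamma\mu\,\E\|\bar z^k-z^*\|^2$, which forces $\gamma\lesssim \mu\sqrt{p}/\beta^2$ rather than the paper's $\gamma\lesssim\mu/\beta^2$; the exponential phase then costs $\beta^2/(\sqrt{p}\,\mu^2)$ iterations, i.e.\ $\sqrt{p}\,\beta^2/\mu^2$ rounds, strictly worse than the claimed $p\beta^2/\mu^2$ for every $p<1$. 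So your proposal, as written, cannot recover the theorem's leading term.

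Second, your heterogeneity scaling $\zeta^2/p$ errs in the opposite direction. Heterogeneity acts as a bias, not a variance: over a synchronization-free phase of expected length $1/p$ the displacement it induces accumulates coherently ($\sim\gamma\zeta$ per step), so its contribution to the squared drift is $\gamma^2\zeta^2/p^2$, not $\gamma^2\zeta^2/p$; only the independent noise enjoys the variance-like $1/p$ scaling (the paper proves $\E V_k \leq \tfrac{4c\gamma^2\zeta^2}{p^2}+\tfrac{2\gamma^2\sigma^2}{p}$). If your $1/p$ scaling were provable it would yield a heterogeneity round complexity of $\sqrt{p}\,\beta\zeta/(\mu^{3/2}\epsilon^{1/2})$, strictly better than the theorem---a sign the bound is not attainable. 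The rest of your outline is sound and coincides with the paper's route; the proof goes through once the drift lemma is restated with the correct $(\zeta^2/p^2+\sigma^2/p)\gamma^2$ form and proved, as in the paper, through pairwise client differences in which the common field enters only via its Lipschitz increments.
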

The most natural comparison to make is between Minibatch Mirror Descent and FedAvg-S, as they are both unaccelerated, federated algorithms for minimax (similar to how FedAvg and Minibatch SGD are unaccelerated, federated algorithms for minimization).  In the case when $\sigma = 0, \zeta = 0$, FedAvg-S gets a speedup of $p$ in the first term vs Minibatch Mirror Descent: $\frac{\beta^2}{\mu^2} \to \frac{p \beta^2}{\mu^2}$.  When $\sigma = 0$ but $\zeta > 0$, FedAvg-S incurs an extra $\frac{\beta^{1/2} \zeta}{\mu^{3/2} \epsilon^{1/2}}$ term.  When $\zeta$ is small, FedAvg-S can be faster than batch mirror descent.  Otherwise, Minibatch Mirror Descent will outperform FedAvg-S.  When $\sigma > 0$, the noise terms $\frac{\sigma^2}{n \tau \mu^2 \epsilon}$ and $\frac{p \sigma^2}{n \mu^2 \epsilon}$ match; however, FedAvg-S incurs an extra term $\frac{p^{1/2} \beta \sigma}{\mu^{2}\epsilon^{1/2} }$ as variance in gradient queries also increase client drift.

We can compare this rate with the rate achieved by FedAvg for minimization using Table~\ref{complexitytable}.  In this case, the interesting comparisons are between the terms incurred by client drift, which are the last two terms for both rates in Table~\ref{complexitytable}.  Observe that in both terms, FedAvg-S loses a factor of $\sqrt{\kappa}$.  This is not surprising, as we will discuss later in Remark~\ref{losingkappa}.

\subsection{SCAFFOLD-S}
\begin{algorithm}[tb]
    \caption{SCAFFOLD-S($\{f_i\}$, p)}
    \label{alg:scaffold}
 \begin{algorithmic}
	\STATE {\bfseries Server Input:} initial $z^0$, probability of communication $p$
	\STATE {\bfseries Server Input:}  Stepsizes $\gamma_g, \gamma_l$
    \STATE {\bfseries Client Input:} local function $f_i$
	\STATE {\bfseries set} $\tilde{z}^0 = z^0$
    \FOR{$k=0, 1, \dots$}
		\STATE Flip a coin $c_k$; $c_k = 1$ w.p. $p$, and $c_k = 0$ otherwise
        \FOR{each client $i$ in parallel}
			\STATE $g_i^k \gets \hat{G_i}(z_i^k) - \hat{G_i}(\tilde{z}^k) + G(\tilde{z}^k)$
            \STATE $z_i^{k+1} \gets z_i^{k} - \gamma_l g_i^k$
            \IF{$c_k = 1$}
				\STATE Clients {\bfseries communicate} $\sum_{l = k'}^k g_i^l$ to server
				\STATE Server {\bfseries broadcasts} $\frac{1}{n}\sum_{i=1}^n \sum_{l = k'}^k g_i^l$ 
				\STATE $\tilde{z}^{k+1} \gets \tilde{z}^k - \gamma_g \frac{1}{n} \sum_{i=1}^n \sum_{l = k'}^k g_i^l$
				\STATE $z_i^{k+1} \gets \tilde{z}^{k+1}$
                \STATE Server {\bfseries broadcasts} $\tilde{z}^{k+1}$ 
                \STATE Clients {\bfseries communicate} $G_i(\tilde{z}^{k+1})$ to server
				\STATE Server {\bfseries broadcasts} $G(\tilde{z}^{k+1})$
		    \ENDIF
            \
        \ENDFOR
	\ENDFOR
 \end{algorithmic}
 \end{algorithm}
The SCAFFOLD-S algorithm is outlined in Algorithm \ref{alg:scaffold}.
Here, we take $g_i^k = \hat{G_i}(z_i^k) - \hat{G_i}(\tilde{z}^k) + G(\tilde{z}^k)$, where $\tilde{z}^k$ is the last synchronized iterate before or at iteration $k$.
Notice that this requires the full calculation of $G(\tilde{z}^k)$ at each synchronization event, which increases the communication cost per communication round by a constant factor.  
Also note that Algorithm \ref{alg:scaffold} is a simplified version of the SCAFFOLD algorithm in \cite{karimireddy2020scaffold}, though the simplified Algorithm \ref{alg:scaffold} is used for some of the analysis in \cite{karimireddy2020scaffold}, and the convergence properties of both variants are similar.

\begin{theorem}
	\label{scaffoldtheorem}
	Given that $\{f_i\}$ are all $\beta$-smooth and $\mu$-strongly convex for $\mu > 0$, the output of SCAFFOLD-S, $z'$, has $\E \text{\normalfont Gap}^*(z') \leq \epsilon$ in  
	\begin{align}
		\tilde{\mathcal{O}}(\frac{\beta^2}{\mu^2})
	\end{align}
	communication rounds, and, by applying strong convex-concavity of $f$, has $\E \|z' - z^*\|^2 \leq \epsilon$ in 
	\begin{align}
		\tilde{\mathcal{O}}(\frac{\beta^2}{\mu^2})
	\end{align}
	communication rounds with an appropriate choice of stepsizes. Furthermore, the same bounds that hold for FedAvg-S in Theorem \ref{fedavgtheorem} also hold for SCAFFOLD-S.
\end{theorem}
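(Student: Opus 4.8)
The plan is to prove a geometric contraction of the synchronized iterates $\tilde z^k$ toward $z^*$, with per-round rate $1 - \tilde\Theta(\mu^2/\beta^2)$, and then convert between the distance and gap targets. The obstacle flagged in the introduction is that the co-coercivity inequality driving the convex-minimization analyses of \cite{karimireddy2020scaffold,gorbunov2020local} fails for the gradient mapping $G$ of a convex-concave saddle problem. I would therefore discard it entirely and build the whole argument on the two properties that do survive: strong monotonicity, $\langle G(z), z - z^*\rangle \geq \mu\|z - z^*\|^2$ (using $G(z^*) = 0$), and the Lipschitz bound $\|G(z) - G(z')\| \leq \beta\|z - z'\|$ from smoothness.

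The first step is to exploit the SCAFFOLD correction. In the noiseless starred regime, $g_i^k = G_i(z_i^k) - G_i(\tilde z^k) + G(\tilde z^k)$, and the control-variate terms cancel in the client average: $\frac1n\sum_i g_i^k = \frac1n\sum_i G_i(z_i^k) = G(\bar z^k) + \frac1n\sum_i\big(G_i(z_i^k) - G_i(\bar z^k)\big)$, where $\bar z^k=\frac1n\sum_i z_i^k$. Aggregating a round of $\tau$ local steps as in Algorithm~\ref{alg:scaffold} then gives $\tilde z^{k+1} = \tilde z^k - \eta\,G(\tilde z^k) - \eta\,e^k$ with effective stepsize $\eta = \gamma_g\tau$ and a drift error $e^k$. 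Expanding $\|\tilde z^{k+1}-z^*\|^2$ and applying strong monotonicity and Lipschitzness yields the one-round inequality $\|\tilde z^{k+1}-z^*\|^2 \leq \big(1 - 2\eta\mu + O(\eta^2\beta^2)\big)\|\tilde z^k-z^*\|^2 + O(\eta)\,\|e^k\|^2$. The $\eta^2\beta^2$ term, which in the minimization case would be killed by co-coercivity, here forces $\eta = \Theta(\mu/\beta^2)$, producing a per-round factor $1 - \tilde\Theta(\mu^2/\beta^2)$; this is exactly the mechanism that turns the $\kappa$ of convex minimization into the $\kappa^2$ of the theorem, and it is $p$-independent because it is the effective stepsize $\gamma_g\tau$, not $\gamma_g$, that is capped by stability.

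The second ingredient is a drift lemma showing $\frac1n\sum_i\|z_i^k - \bar z^k\|^2$ stays small \emph{without any dependence on $\zeta$}, so that $\|e^k\|^2$ remains strictly subordinate to the $2\eta\mu$ contraction. This is where variance reduction pays off: in $g_i^k - \frac1n\sum_j g_j^k = \big(G_i(z_i^k) - G_i(\tilde z^k)\big) - \frac1n\sum_j\big(G_j(z_j^k) - G_j(\tilde z^k)\big)$ every gradient difference is evaluated \emph{at a single client}, so Lipschitzness bounds it by $\beta\|z_i^k - \tilde z^k\|$ and no heterogeneity term appears (contrast FedAvg-S, where $G_i(\tilde z^k) - G_j(\tilde z^k)$ injects $\zeta$). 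Unrolling the local recursion over the round and choosing $\gamma_l$ small enough keeps the accumulated drift negligible; substituting into the one-round inequality and telescoping gives $\E\|\tilde z^k - z^*\|^2 \leq (1 - \tilde\Theta(\mu^2/\beta^2))^{k}\|\tilde z^0 - z^*\|^2$, i.e. $\tilde{\mathcal O}(\beta^2/\mu^2)$ rounds for $\E\|z'-z^*\|^2 \leq \epsilon$. The gap statement follows from the sandwich $\tfrac\mu2\|z-z^*\|^2 \leq \text{Gap}^*(z) \leq \tfrac\beta2\|z-z^*\|^2$ (both bounds coming from strong-convex-concavity and smoothness evaluated at the saddle point, where $G(z^*)=0$), which trades the two targets for one another at a cost absorbed into $\tilde{\mathcal O}$; bounding the gap at a $\mu$-weighted average of the iterates rather than the last iterate lets me invoke a Jensen/telescoping argument suited to the saddle quantity.

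Finally, the claim that the FedAvg-S bounds of Theorem~\ref{fedavgtheorem} also hold for SCAFFOLD-S I would obtain by re-running that argument almost verbatim, this time bounding the extra control-variate terms crudely by $\zeta$ rather than cancelling them; since the correction only contributes terms dominated by the drift already estimated in Theorem~\ref{fedavgtheorem}, SCAFFOLD-S inherits every term of (\ref{eq:fedavg-value}), and taking the minimum with the $\zeta$-free $\tilde{\mathcal O}(\beta^2/\mu^2)$ rate gives the tabulated complexity. I expect the main obstacle to be precisely the drift analysis under the minimax geometry: lacking co-coercivity, one cannot convert drift into a negative (progress) term, so the bookkeeping must simultaneously hold $\|e^k\|^2$ below the $2\eta\mu$ contraction and respect the stability ceiling $\eta \lesssim \mu/\beta^2$, and checking that these constraints remain compatible across the full range of $p$ is the delicate step.
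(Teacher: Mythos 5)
Your proposal is correct, but for the worst-case $\tilde{\mathcal{O}}(\beta^2/\mu^2)$ claim it takes a genuinely different---and substantially harder---route than the paper's own proof. Per Remark~\ref{scaffold-s:remark}, the paper assembles Theorem~\ref{scaffoldtheorem} from two pieces: for the $\kappa^2$ bound it uses Theorem~\ref{scaffold:minibatch}, which simply sets $\gamma_l = 0$, so every client sits at $\tilde z^k$ for the entire round and SCAFFOLD-S degenerates into full-batch mirror descent with random effective stepsize $\gamma_g \tau_k$; the bound then follows in a few lines from strong monotonicity and $\|G(z)-G(z^*)\| \le \beta\|z-z^*\|$, with no drift analysis whatsoever (your observation that it is $\gamma_g\tau$, not $\gamma_g$, that is capped by stability is exactly how that proof cancels $p$: one sets $\gamma_g = p\mu/4\beta^2$ and the round count is $pK$). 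Your route instead keeps nontrivial local steps and controls drift through the control-variate cancellation $g_i^k - \frac{1}{n}\sum_j g_j^k = \bigl(G_i(z_i^k)-G_i(\tilde z^k)\bigr) - \frac{1}{n}\sum_j\bigl(G_j(z_j^k)-G_j(\tilde z^k)\bigr)$; this is precisely the program the paper carries out separately (with noise) in Theorem~\ref{scaffold:scaled} and Lemma~\ref{lemma:scaffolddrift}, where the drift is bounded by $O(\beta^2\gamma^2/p^2)\sum_k w_k \|z^k - z^*\|^2$ with no $\zeta$ under the constraints $\gamma \lesssim p/\beta$ and $\gamma \lesssim p\mu/\beta^2$---so the compatibility ``across the full range of $p$'' that you flag as the delicate step does hold, but the paper explicitly notes this theorem is \emph{not} needed for Theorem~\ref{scaffoldtheorem}. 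What each buys: the paper's route is short and elementary but proves the worst-case bound for a degenerate configuration in which clients do no local work between synchronizations; yours (and the paper's supplementary theorem) shows the same bound survives genuine local updates. Two bookkeeping corrections to your sketch: (i) absorbing the cross term $-2\eta\langle e^k, \tilde z^k - z^*\rangle$ into the $2\eta\mu$ contraction via Young's inequality costs $O(\eta/\mu)\|e^k\|^2$, not $O(\eta)\|e^k\|^2$, tightening the requirement to $\tau\gamma_l\beta^2 \lesssim \mu$ (still compatible, up to constants, with $\eta = \gamma_g\tau \approx \mu/\beta^2$); (ii) since $\tau$ is geometric rather than fixed, the round-aggregated recursion needs control of $\E\bigl[(1+\gamma_l\beta)^{2\tau}\bigr]$, which is why the paper runs the drift recursion per iteration as in Lemma~\ref{lemma:scaffolddrift}. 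Your handling of the ``furthermore'' clause coincides with the paper exactly: Theorem~\ref{scaffold:unscaled} reuses the FedAvg-S proof verbatim, with Lemma~\ref{heterodrift} bounding the extra control-variate terms crudely by $\zeta$ (constant $9$ in place of $4$), which is your proposed argument.
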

First, we compare SCAFFOLD-S to FedAvg-S.  FedAvg-S's upper bound on communication complexity applies to SCAFFOLD-S as well by Theorem~\ref{scaffoldtheorem}, so SCAFFOLD does at least as well as FedAvg-S.  In the case where $\zeta$ is of greater order than $\beta \epsilon^{1/2}$, then the SCAFFOLD federated minimax optimization rate outperforms FedAvg-S.  

Next, we can compare SCAFFOLD-S's federated minimax rate to Minibatch Mirror Descent's rate, as they are both unaccelerated algorithms for federated minimax optimization.  As FedAvg-S's rates apply to SCAFFOLD-S as well, the comparisons between FedAvg and batch mirror descent apply as well when $\zeta$ and $\sigma$ are small; that is, SCAFFOLD-S has an advantage.  On the other hand, when $\zeta$ is large, SCAFFOLD-S's rate matches that of Minibatch Mirror Descent.

Finally, we can compare SCAFFOLD-S with SCAFFOLD.  In the case where $\zeta$ is of order larger than $\beta \epsilon^{1/2}$ and $\sigma = 0$, then the rate for SCAFFOLD-S is worse than the rate for SCAFFOLD by a factor of $\kappa$.  This is expected, since in this regime SCAFFOLD is approximately performing a Minibatch SGD step \cite{woodworth2020minibatch} on the global objective per communication round, and SCAFFOLD-S in the federated minimax setting is approximately performing a Minibatch Mirror Descent step on the global objective per communication round.  Now allowing $\sigma > 0$, we see that the noise terms match as well.

SCAFFOLD's rates are missing a ``min'' expression compared to our SCAFFOLD-S rate.  This is because \cite{karimireddy2020scaffold} did not prove SCAFFOLD's ability to take advantage of local steps past the quadratic case.  We conjecture that a similar ``min'' expression should also be true for SCAFFOLD's federated minimization rate.

Altogether, we have demonstrated federated minimax optimization rates for FedAvg-S and SCAFFOLD-S.  In particular, we have shown that SCAFFOLD-S under arbitrary client heterogeneity matches the Minibatch Mirror Descent rate.  However, SCAFFOLD-S cannot achieve Minibatch Mirror-prox's communication complexity.  This is expected, because SCAFFOLD-S does not employ acceleration, while Minibatch Mirror-prox is an accelerated algorithm.  In what follows, we show how to accelerate SCAFFOLD-S to obtain the same  worst-case communication complexity as Minibatch Mirror-prox under arbitrary client heterogeneity, while still being able to perform better than Minibatch Mirror-prox if client data is similar.

\begin{remark}
	\label{losingkappa}
	\normalfont 
	In the centralized (non-federated) setting, upper bounds for the minimax setting are often larger by a factor of $\kappa$ compared to their counterparts in the minimization setting.  For instance, mirror descent achieves an iteration complexity of $\tilde{O}(\kappa^2)$ for minimax optimization, while its counterpart for minimization, gradient descent, achieves a complexity of $\tilde{O}(\kappa)$ for minimization.  Similarly, for accelerated algorithms, mirror-prox has an iteration complexity of $\tilde{O}(\kappa)$ and Nesterov AGD has an iteration complexity of $\tilde{O}(\sqrt{\kappa})$ \cite{bubeck2014convex}.   

	Improvements of $\kappa$ or $\sqrt{\kappa}$ in the convergence rates of (centralized) minimization problems compared to minimax problems can be derived from the \emph{gradient co-coercivity property} for $G(z) = \nabla_z f(z)$, i.e. the property that for all $z,z'$,
	\begin{align}
		\|G(z) - G(z')\|^2 \leq \beta (G(z) - G(z'))^T(z - z').
	\end{align}  
	However, this property is not true in the minimax setting for $G(x,y) = (\nabla_x f(x,y), -\nabla_y f(x,y))$, where $z = (x,y)$.
	Furthermore, this observed gap between minimization and minimax was verified by a lower bound \cite{zhang2019lower}.
	
	In most analysis of first-order federated minimization algorithms (e.g., FedAvg, SCAFFOLD, or the unified analysis of \cite{gorbunov2020local}), gradient co-coercivity of $\nabla_z f(z)$ is used to bound the error that arises from not synchronizing iterates in every round.  
	Hence, 
	we find the lack of gradient co-coercivity for minimax problems causes a loss of $\kappa$ factors in the federated setting as well.  We lose a factor of $\kappa$ simply to the gap mentioned earlier (as we are working with unaccelerated algorithms); we also lose a factor of $\sqrt{\kappa}$ in the term accounting for client drift (if present).  
\end{remark}

\section{SCAFFOLD-Catalyst-S}
\begin{algorithm}[tb]
    \caption{SCAFFOLD-Catalyst-S}
    \label{alg:catalyst}
 \begin{algorithmic}
    \STATE {\bfseries Server Input:} regularization $\theta$, initial meta-iterate $\bar{z}^0$,
    \STATE probability of communication $p \in (0,1]$
    \STATE {\bfseries Client Input:} local function $f_i$ 
    \FOR{$t=0, 1, \dots$}
        \STATE {\bfseries communicate} $\bar{z}^t$ to all clients 
        \FOR{each client $i$ in parallel}
            \STATE {\bfseries set} $f_i^{\theta}(z, \xi) = f_i(z, \xi) + \frac{\theta}{2}\|x - \bar{x}^t\|^2 - \frac{\theta}{2}\|y - \bar{y}^t\|^2$
        \ENDFOR
        \STATE $\bar{z}^{t+1} \gets \text{SCAFFOLD-S}(\{f_i^{\theta}\}, p)$
    \ENDFOR
 \end{algorithmic}
 \end{algorithm}
We utilize the catalyst framework for acceleration \cite{lin2015universal} in a similar fashion to \cite{balamurugan2016stochastic} to the SCAFFOLD-S algorithm.  We ask the SCAFFOLD-S algorithm to solve a series of regularized federated minimax optimization problems, where at meta-iteration $t$ the client losses are regularized using parameter $\theta$ as follows:
\begin{align}
    \label{eq:regularizedclient}
    f_i^{\theta}(x,y) := f_i(x,y) + \frac{\theta}{2} \|x - \bar{x}^t\|^2 - \frac{\theta}{2} \|y - \bar{y}^t\|^2
\end{align}
which implies that SCAFFOLD-S is solving the following minimax optimization problem at meta-iteration $t$:
\begin{align}
    \label{eq:regularizedglobal}
    f^{\theta}(x,y) := f(x,y) + \frac{\theta}{2} \|x - \bar{x}^t\|^2 - \frac{\theta}{2} \|y - \bar{y}^t\|^2
\end{align}
The SCAFFOLD-S algorithm solves problem (\ref{eq:regularizedglobal}) to adequete precision (after a precise number of communication rounds depending on $\theta$), and the meta-iterate $\bar{z}^{t+1} := (\bar{x}^{t+1}, \bar{y}^{t+1})$ is set to the solution that is found.  Notice that applying this regularization increases the smoothness of the problem that SCAFFOLD-S has to solve from to $\beta + \theta$, while increasing the strong convex-concavity to $\mu + \theta$.  This gives us a new condition number of $\frac{\beta + \theta}{\mu + \theta}$ for the subproblem.  When $\mu$ is small, as it often is in practice, applying this regularization can speed up SCAFFOLD-S significantly.

This creates a balance between how many inner communication rounds needed to solve problem (\ref{eq:regularizedglobal}) to proper precision, and how many meta-iterations needed to solve the original problem (\ref{eq:objective}), controlled by $\theta$.  If $\theta$ is small, then SCAFFOLD-S will have a harder time finding a solution to (\ref{eq:regularizedglobal}), but the meta-iterations will converge quickly.  For example, if we simply set $\theta = 0$, then we are solving the original objective, and so we only need one meta-iteration.  If $\theta$ is large, then SCAFFOLD-S can find the solution to (\ref{eq:regularizedglobal}) quickly, but more meta-iterations will be needed.  

Our application of the catalyst framework stands in contrast to other attempts to accelerate federated (minimization) algorithms \cite{yuan2020federatedac} \cite{karimireddy2020mime} which accelerate client iterates.  In fact, it was proven in \cite{yuan2020federatedac} that Nesterov accelerated iterates are not inital-value stable; that is, two different instances of Nesterov acceleration may drift exponentially far apart in a finite number of local steps.  

Using the catalyst framework to accelerate federated learning does not suffer from the same issues.  Let $G_i^{\theta}(z;\bar{z}^t)$ be the gradient mapping for the $i$-th client at the $t$-th meta iterate evaluated at $z$.  
First, observe that the client heterogeneity doesn't change by using catalyst:
\begin{align*}
    & \ \ \ \|G_i^{\theta_t}(z;\bar{z}^t) - G_j^{\theta}(z; \bar{z}^t) \|^2 \\
    &= \|G_i(z) - \theta(z - \bar{z}^t) - G_j(z) + \theta(z-\bar{z}^t)\|^2 \\
    &= \|G_i(z)  - G_j(z) \|^2
\end{align*}
Second, under regularization, client iterates are encouraged to be closer together.  When using SCAFFOLD-S, the gradient mapping direction approximates the global gradient mapping.  So for two clients $i,j$, after one iteration, as long as $\gamma_l \leq \frac{1}{\beta + \theta}$,
\begin{align*}
    & \ \ \ \|z_i^{k+1} - z_j^{k+1}\|^2 \\
    &= \|z_i^{k+1} - \gamma_l g_i^k - z_j^{k+1} + \gamma_l g_j^k\|^2 \\
    &\approx \|z_i^k - \gamma_l G^{\theta}(z_i^k; \bar{z}^t) - z_j^k + \gamma_l G^{\theta}(z_j^k; \bar{z}^t)\|^2 \\
    &\leq (1 - \gamma_l(\theta + \mu)) \|z_i^k - z_j^k\|^2
\end{align*}
Where the last inequality follows from contractivity of the gradient.  Therefore, increasing $\theta$ can actually \textit{reduce} the error from client drift.  In fact, in the limiting case where we take $\theta \to \infty$, all clients are solving the same quadratic minimization problem, and will converge to the same point.

With this intuition and an appropriate choice of $\theta$, we get the following guarantee:
\begin{theorem}
    \label{catalysttheorem}
    Given that $\{f_i\}$ are all $\beta$-smooth and $\mu$-strongly convex for $\mu > 0$, the output of SCAFFOLD-Catalyst-S, $z'$, has $\E \|z' - z^*\|^2 \leq \epsilon$ in
    \begin{align}
        \tilde{O}(\min\{\frac{\beta}{\mu}, \frac{p \beta^2}{\mu^2}  + \frac{p \sigma^2}{n \mu^2 \epsilon}+ \frac{p^{1/2}\beta \sigma}{ \mu^{2} \epsilon^{1/2}}+ \frac{\beta \zeta}{ \mu^{2} \epsilon^{1/2} } \})
    \end{align}
    communication rounds, given appropriate choices of $\theta$ and stepsizes.  
\end{theorem}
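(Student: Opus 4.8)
The plan is to establish the two upper bounds comprising the minimum separately; SCAFFOLD-Catalyst-S then attains their minimum by choosing $\theta$ (and hence its mode of operation) according to the problem parameters. The second argument, $\frac{p\beta^2}{\mu^2}+\frac{p\sigma^2}{n\mu^2\epsilon}+\frac{p^{1/2}\beta\sigma}{\mu^2\epsilon^{1/2}}+\frac{\beta\zeta}{\mu^2\epsilon^{1/2}}$, is essentially free: setting $\theta=0$ collapses the regularizer $f_i^\theta=f_i$ so that the outer loop is a single exact call of SCAFFOLD-S on the original objective, and this bound is then exactly Theorem~\ref{scaffoldtheorem}. The genuinely new content is the worst-case bound $\tilde{\mathcal{O}}(\beta/\mu)$, which I would obtain from the catalyst (inexact proximal-point) outer loop wrapped around SCAFFOLD-S.

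For the outer loop, I would first use that $\mu$-strong convex-concavity makes the saddle gradient mapping $G$ strongly monotone, $\langle G(z)-G(z'),z-z'\rangle\ge\mu\|z-z'\|^2$. The exact solution $z_\star^{t+1}$ of the regularized subproblem (\ref{eq:regularizedglobal}) satisfies the optimality condition $G(z_\star^{t+1})+\theta(z_\star^{t+1}-\bar z^t)=0$, i.e. it is the proximal-point step with parameter $1/\theta$. Pairing this with $G(z^*)=0$, testing against $z_\star^{t+1}-z^*$, and expanding the inner product yields the contraction $\|z_\star^{t+1}-z^*\|^2\le\frac{\theta}{\theta+2\mu}\|\bar z^t-z^*\|^2$. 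Since SCAFFOLD-S returns only an inexact solution with $\E\|\bar z^{t+1}-z_\star^{t+1}\|^2\le\epsilon_t$, I would combine the contraction with a triangle/Young inequality to obtain a recursion $\E\|\bar z^{t+1}-z^*\|^2\le(1+\delta)\frac{\theta}{\theta+2\mu}\,\E\|\bar z^t-z^*\|^2+(1+1/\delta)\epsilon_t$ and unroll it under a geometrically decreasing accuracy schedule $\epsilon_t$. The crucial point, and the reason the outer rate is $\tilde{\mathcal{O}}(\theta/\mu)$ rather than the $\tilde{\mathcal{O}}(\sqrt{\theta/\mu})$ one hopes for from minimization-style catalyst, is that the contraction factor is $1-\Theta(\mu/\theta)$: the saddle operator admits no co-coercivity (Remark~\ref{losingkappa}), so extrapolation of the meta-iterates does not accelerate the outer loop.

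For the inner cost, regularization raises smoothness to $\beta+\theta$ and strong convex-concavity to $\mu+\theta$ while leaving the variance $\sigma^2$ and, as shown in the displayed identity preceding the theorem, the heterogeneity $\zeta$ unchanged (the regularizer in fact damps client drift, per the second display). Applying Theorem~\ref{scaffoldtheorem} to each subproblem with these substituted constants and target accuracy $\epsilon_t$ bounds the per-meta-iteration cost by $\tilde{\mathcal{O}}\big((\beta+\theta)^2/(\mu+\theta)^2\big)$ in the worst case. Multiplying by the $\tilde{\mathcal{O}}(\theta/\mu)$ meta-iterations gives $\tilde{\mathcal{O}}\big(\frac{\theta}{\mu}\frac{(\beta+\theta)^2}{(\mu+\theta)^2}\big)$, which I would minimize over $\theta$; the choice $\theta=\beta$ makes every subproblem $O(1)$-conditioned and yields $\tilde{\mathcal{O}}(\beta/\mu)$, matching Minibatch Mirror-prox. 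Taking the better of this and the $\theta=0$ bound produces the stated minimum.

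I expect the main obstacle to be the careful propagation of inexactness through the stochastic outer loop: choosing the accuracies $\epsilon_t$ and the slack $\delta$ so that $(1+\delta)\frac{\theta}{\theta+2\mu}$ remains a genuine contraction in expectation, and then summing the inner costs across meta-iterations. The $\epsilon_t$-independent terms merely multiply by $T_{\mathrm{out}}$, but the noise and heterogeneity terms scale as $\epsilon_t^{-1}$ and $\epsilon_t^{-1/2}$, so their sum over a geometric schedule must be controlled by the final accuracy $\epsilon$ (a geometric series dominated by its last term) and verified to stay below the worst-case $\tilde{\mathcal{O}}(\beta/\mu)$ after the $\theta=\beta$ substitution. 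Folding the $\log(1/\epsilon_t)$ factors from each inner linear-convergence phase into the overall $\tilde{\mathcal{O}}$ is routine but tedious.
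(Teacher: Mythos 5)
Your proposal is correct and follows essentially the same route as the paper's proof: the appendix likewise treats the two branches of the $\min$ separately, choosing $\theta = \beta - \mu$ in the worst case so that each regularized subproblem is $O(1)$-conditioned and solved by the zero-local-stepsize SCAFFOLD-S in $\tilde{\mathcal{O}}(1)$ rounds (Theorem~\ref{catalyst:zero}), and $\theta = c\mu$ (with $c$ allowed to be $0$, matching your $\theta=0$ choice) for the low-heterogeneity branch (Theorem~\ref{heterocatalyst}), then running an inexact proximal-point outer loop with contraction factor $1 - \frac{\mu}{\theta+\mu}$ over $\tilde{\mathcal{O}}\left(\frac{\theta+\mu}{\mu}\right)$ meta-iterations. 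The only differences are technical bookkeeping: the paper invokes the nonexpansiveness property of the strongly convex-concave proximal operator rather than rederiving the contraction from strong monotonicity as you do, and it combines inexactness with contraction via Minkowski's inequality under a \emph{fixed} per-subproblem accuracy $\left(\frac{\mu}{2(\theta+\mu)}\right)^2\epsilon$ (absorbing the accumulated error through the geometric series) instead of your Young-inequality slack $\delta$ with a geometrically decreasing schedule $\epsilon_t$ — both of which are standard, equivalent ways to close the recursion.
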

\textbf{Proof Sketch.}  First assume that $\sigma = 0$ and that $\zeta$ is arbitrarily large.  Then if we set $\theta = \beta - \mu$, SCAFFOLD-S will find a good solution to the regularized objective (\ref{eq:regularizedglobal}) in $\tilde{O}(\frac{(\beta - (\beta - \mu))^2}{(\mu -(\beta - \mu))^2}) = \tilde{O}(1)$ number of communication rounds.  On the other hand, the meta-iterates are iterates of the proximal point algorithm \cite{rockafellar1970monotone} which will converge in $\tilde{\mathcal{O}}(\frac{\theta + \mu}{\mu}) $ meta-iterations.  Using that $\theta = \beta - \mu$, we get $\tilde{\mathcal{O}}(\frac{\beta}{\mu})$ meta-iterations.  Therefore, under arbitrary heterogeneity, the communication complexity of SCAFFOLD-Catalyst-S is $\tilde{\mathcal{O}}(\frac{\beta}{\mu})$.

In the low client heterogeneity case, (precisely, $\zeta = o(\mu \epsilon^{1/2})$, we can set $\theta = c \mu$ for some constant $c$ (that can include $0$), we get $\tilde{\mathcal{O}}(\frac{\theta + \mu}{\mu}) = \tilde{\mathcal{O}}(1)$ meta-iterations.  On the other hand, SCAFFOLD-S will find a good solution to the regularized objective (\ref{eq:regularizedglobal}) in $\tilde{O}(\frac{p \beta^2}{(c+1)^2\mu^2} + \frac{\beta \zeta}{ (c+1)^2\mu^{2} \epsilon^{1/2}})$ iterations (under low client heterogeneity).
Therefore in this setting, SCAFFOLD-Catalyst-S has a communication complexity of $\tilde{\mathcal{O}}( \frac{p \beta^2}{\mu^2} + \frac{\beta \zeta}{ \mu^{2} \epsilon^{1/2}})$.

We can now compare to the Minibatch Mirror-prox rate in Table~\ref{complexitytable}.  
In the regime of $\sigma=0$, which is where the Minibatch Mirror-prox rate is valid, we can see that SCAFFOLD-Catalyst-S matches it under arbitrary client heterogeneity, and can improve past the Minibatch Mirror-prox rate when client heterogeneity is low.
\begin{remark}
    \normalfont Handling noise and client heterogeneity in the proof of Theorem~\ref{catalysttheorem} requires additional work beyond the proofs of \cite{lin2015universal} or \cite{balamurugan2016stochastic}, which analyze catalyst only on algorithms with linear rates (i.e. exponentially decreasing function value or distance to optimum).  
\end{remark}

\section{Experiments}
\begin{figure*}[t] 
	\begin{minipage}[b]{0.32\linewidth}
		\centering
		\includegraphics[width=\textwidth]{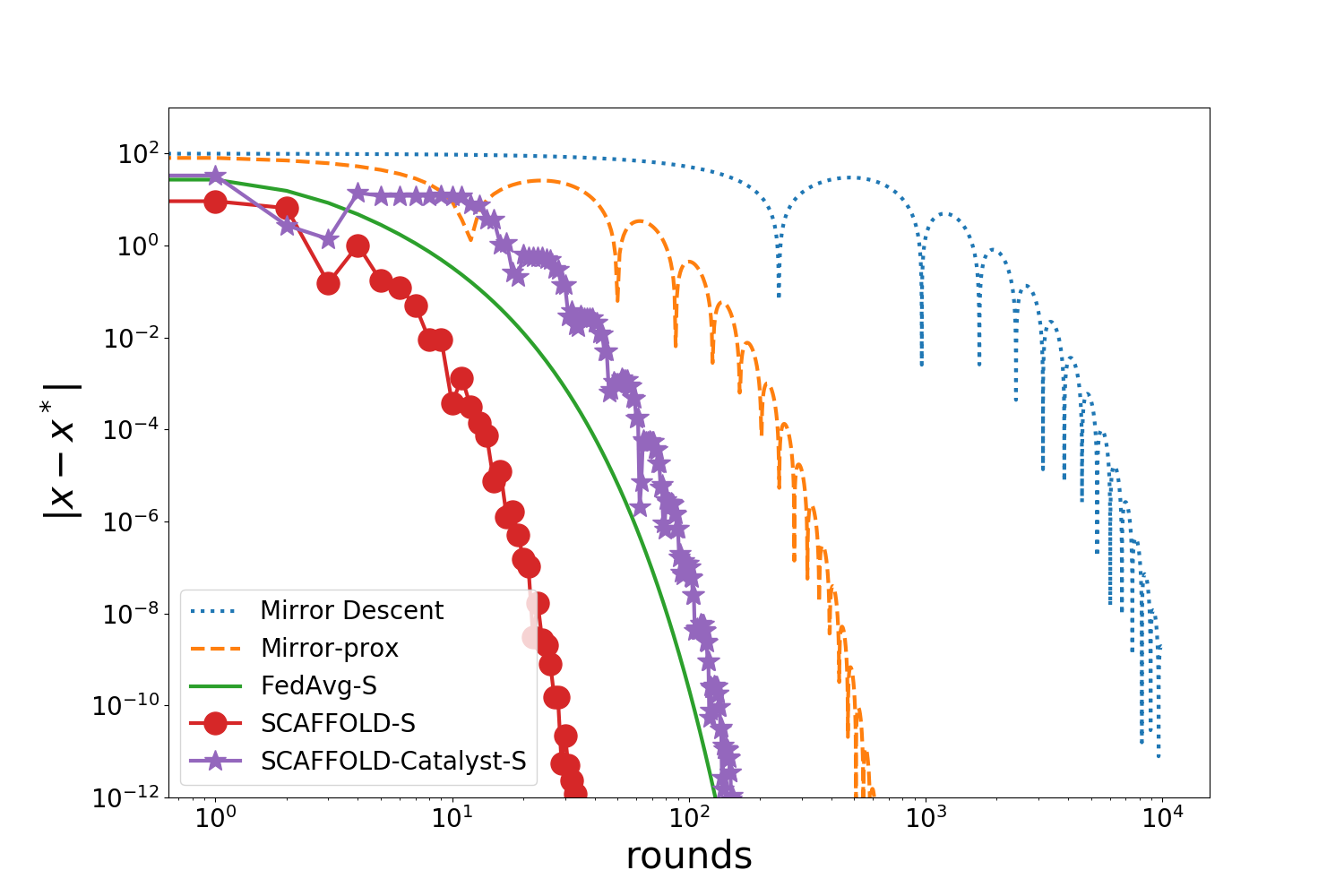}
	\end{minipage}
	~~
	\begin{minipage}[b]{0.32\linewidth}
		\centering
		\includegraphics[width=\textwidth]{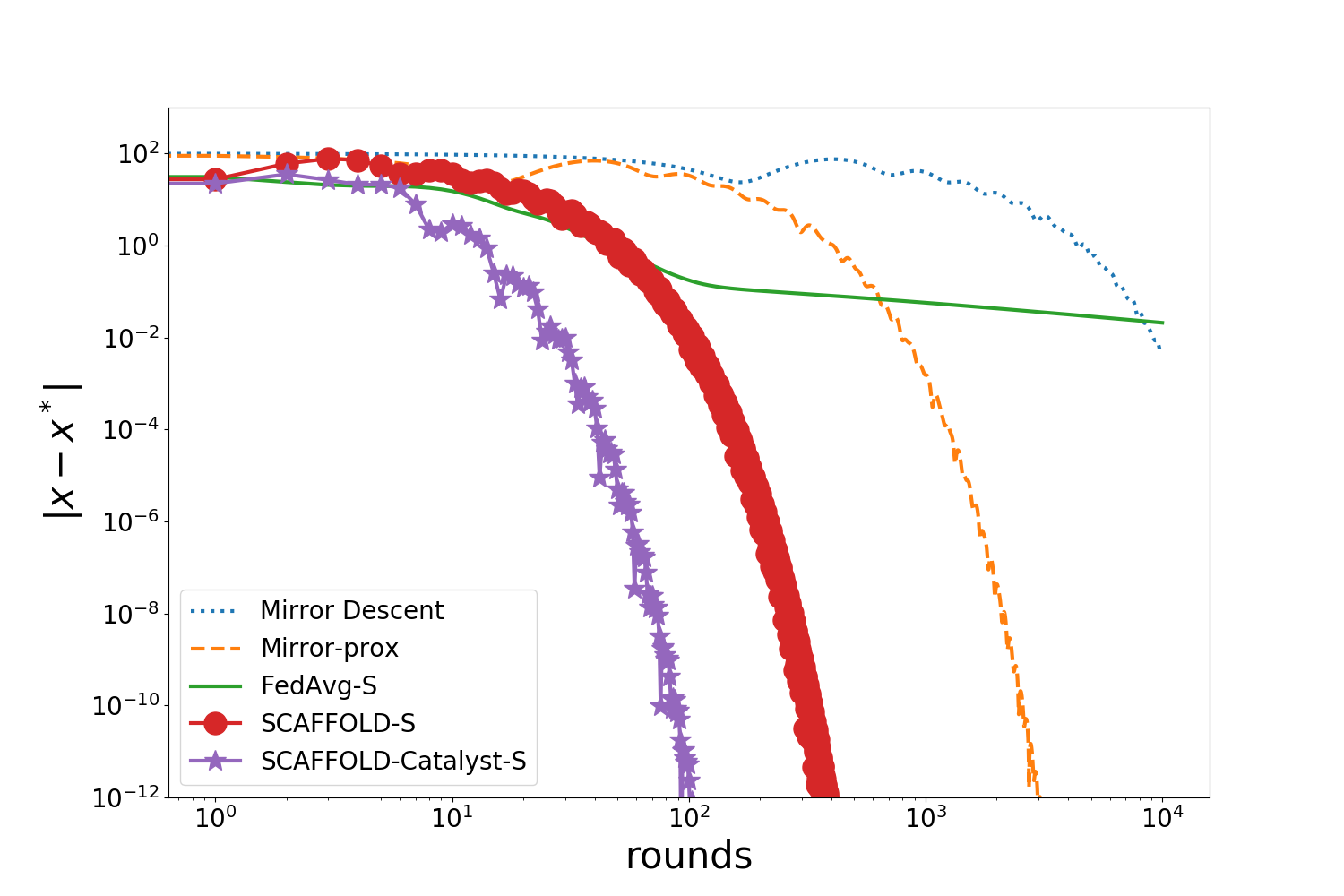}
	\end{minipage}
    ~~
    \begin{minipage}[b]{0.32\linewidth}
        \centering
            \includegraphics[width=\textwidth]{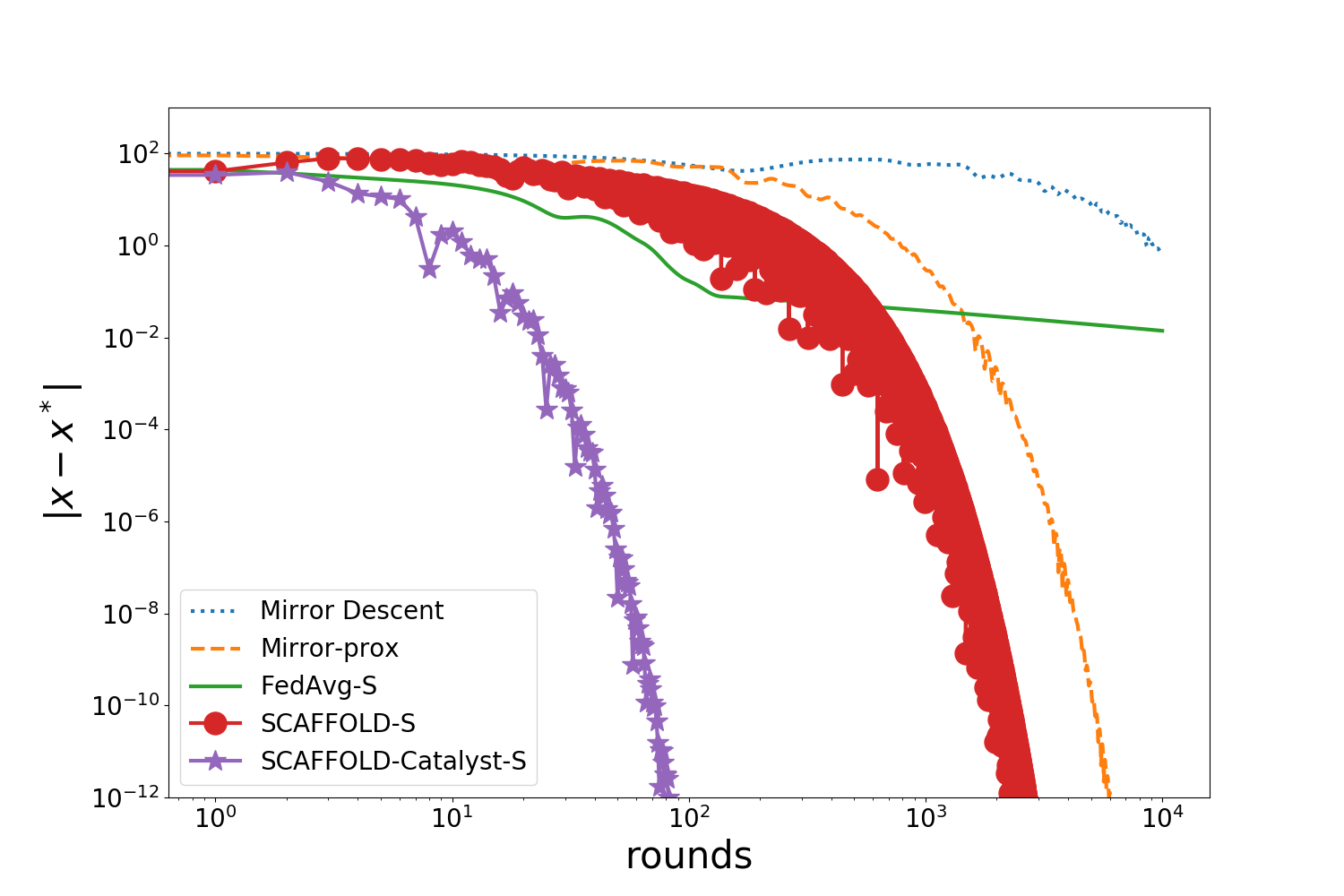}
    \end{minipage}
    \caption{From left to right, all five federated minimax optimization algorithms on $s = 0$ (i.e. clients homogeneous, condition number 1), $s=5$, and $s = 10$.  Each curve is the best curve obtained from the three stepsize choices $\gamma_l = \{0.1, 0.05, 0.01\}/\max(s,1)$.  SCAFFOLD-Catalyst-S maintains its performance even as the client data becomes more heterogeneous and the condition number increases.  }
    \label{fig:trainingcurves}
\end{figure*}
\begin{figure}[t]
    \centering
    \includegraphics[width=0.85\columnwidth]{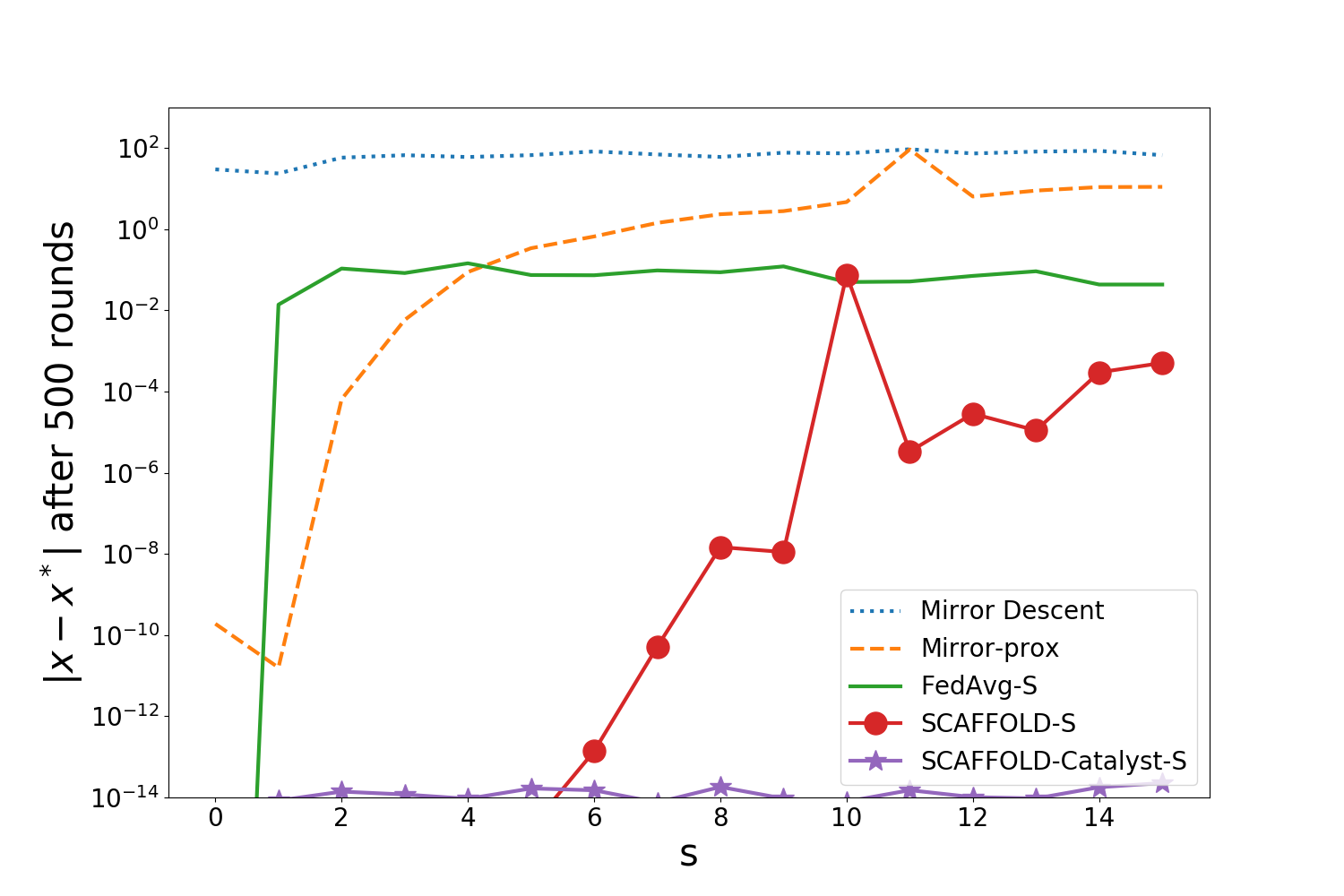}
        \caption{Comparison of the five federated minimax optimization algorithms after 500 communication rounds over $s$.  SCAFFOLD-Catalyst-S maintains its performance over all $s \in [0,15]$, while the other algorithms degrade as client heterogeneity and condition number increase.}
    \label{fig:vshet}
\end{figure}

We run experiments to compare the communication round complexities of Minibatch Mirror Descent (MD), Minibatch Mirror-prox (MP), FedAvg-S, SCAFFOLD-S, and SCAFFOLD-Catalyst-S. 
The experiments show that (i) SCAFFOLD-Catalyst-S is more resilient to client data heterogeneity and ill-conditioning (i.e, larger $\kappa$) than other methods, and (ii) local steps can increase convergence speed with  cross-client variance reduction. 

\paragraph{Setup}
We study the performance of Minibatch MD, Minibach MP, FedAvg-S, SCAFFOLD-S, and SCAFFOLD-Catalyst-S on the following federated minimax problem:
\begin{align}
    \min_{x \in \mathbb{R}^m} \max_{y \in \mathbb{R}^d} -\frac{1}{2} [\frac{1}{n} \sum_{i=1}^n \|y\|^2 - b_i^T y + y^T A_i x] + \frac{\lambda}{2} \|x\|^2
\end{align}
in the language of our original formulation (\ref{eq:objective}),
\begin{align}
    f_i(x,y) := -\frac{1}{2} [ \|y\|^2 - b_i^T y + y^T A_i x] + \frac{\lambda}{2} \|x\|^2
\end{align}
This is a saddle-point formulation of linear regression: 
\begin{align}
    \min_{x \in \mathbb{R}^m} \frac{1}{2} \|\frac{1}{n} \sum_{i=1}^n A_i x - b_i\|^2
\end{align}
with $L_2$ regularization, as shown in \cite{du2019linear}.  This fact was used for experimental evaluation in \cite{mokhtari2020unified}.  We generate our dataset as follows: first, choose a value for parameter $s$: $s$ will control how ill-conditioned the problem is and how heterogeneous the clients are.  Next we sample $b_i' \sim \mathcal{N}(0, s^2 I_d)$, and set $b_i = b_i' - \frac{1}{n} \sum_{i=1}^n b_i'$.  Then we sample $A_i$ by first generating a vector $a_i \sim \mathcal{N}(1^d, s^2 I_d)$ (where $1^d$ is the vector of all ones of dimension $d$), and we threshold $a_i$ so that each entry is at least 1.  Finally, we set $A_i = \text{diag}(a_i)$, where $\text{diag}(v)$ is the diagonalization of $v$ into a matrix.  Here, $d = 10$ and $n = 10$.  One thing to observe is that $\frac{1}{n}\sum_{i=1}^n b_i = 0$, so solution quality is measured by $x$'s distance from the $d$-dimensional zero vector.  We set $\lambda = 0.00001$.

From this, we can see that $s$ controls (1) client data heterogeneity by controlling the heterogeneity in the solutions to each $f_i$, and (2) controls the condition number of $A_i$ by controlling the ratio between the maximum and minimum singular values.

Learning rates for each algorithm were chosen as constant with $\gamma_g = \gamma_l = \{0.1, 0.05, 0.01\}/\max(s,1)$, except for FedAvg-S, which had its stepsizes set as $\gamma_l/(\sqrt{k} + 1)$ (where $k$ is defined as in Section~\ref{sec:baselines}), as we observed empirically that constant stepsize FedAvg-S caused FedAvg-S's convergence to stall out prematurely.  $\tau$ is set deterministically for simplicity as $\tau = 20$.  Furthermore, all gradients are calculated noiselessly, i.e. $\sigma = 0$.  For SCAFFOLD-Catalyst-S, we set $\theta = 1$ in all experiments.

Our training curve results are in Figure~\ref{fig:trainingcurves}.  We condense the curves into Figure~\ref{fig:vshet}, where we can see a clear story: as $s$ increases, all methods other than SCAFFOLD-Catalyst-S degrade.  As predicted by the theory, SCAFFOLD-Catalyst-S is resilient against ill-conditioning and client data heterogeneity.  Furthermore, SCAFFOLD-Catalyst-S can take advantage of local steps, which we can observe from its outperformance of Minibatch Mirror-prox.

\section{Conclusion}
In this work, we have established communication complexity upper bounds for the widely-used federated optimization algorithms FedAvg and SCAFFOLD in the minimax setting.  We find that these complexity bounds are worse than the Minibatch Mirror-prox baseline by a factor of $\kappa$ because both FedAvg and SCAFFOLD are not accelerated algorithms.  We then proposed an accelerated federated minimax optimization algorithm, SCAFFOLD-Catalyst-S, and showed it enjoyed fast convergence properties both in theory and in experiments.  
Future work includes proving tight lower bounds for federated minimax optimization and extending the catalyst acceleration framework to the federated minimization setting.  
Some care will be needed to extend these results 
to the federated minimization setting in full generality (with noise and heterogeneity terms), as the analysis requires the use of Nesterov acceleration machinery.


\nocite{langley00}

\bibliography{ref}

\begin{thebibliography}{27}
\providecommand{\natexlab}[1]{#1}
\providecommand{\url}[1]{\texttt{#1}}
\expandafter\ifx\csname urlstyle\endcsname\relax
  \providecommand{\doi}[1]{doi: #1}\else
  \providecommand{\doi}{doi: \begingroup \urlstyle{rm}\Url}\fi

\bibitem[Augenstein et~al.(2019)Augenstein, McMahan, Ramage, Ramaswamy,
  Kairouz, Chen, Mathews, et~al.]{augenstein2019generative}
Augenstein, S., McMahan, H.~B., Ramage, D., Ramaswamy, S., Kairouz, P., Chen,
  M., Mathews, R., et~al.
\newblock Generative models for effective ml on private, decentralized
  datasets.
\newblock \emph{arXiv preprint arXiv:1911.06679}, 2019.

\bibitem[Balamurugan \& Bach(2016)Balamurugan and
  Bach]{balamurugan2016stochastic}
Balamurugan, P. and Bach, F.
\newblock Stochastic variance reduction methods for saddle-point problems.
\newblock \emph{arXiv preprint arXiv:1605.06398}, 2016.

\bibitem[Ben-Tal et~al.(2009)Ben-Tal, El~Ghaoui, and Nemirovski]{ben2009robust}
Ben-Tal, A., El~Ghaoui, L., and Nemirovski, A.
\newblock \emph{Robust optimization}.
\newblock Princeton university press, 2009.

\bibitem[Beznosikov et~al.(2020)Beznosikov, Samokhin, and
  Gasnikov]{beznosikov2020local}
Beznosikov, A., Samokhin, V., and Gasnikov, A.
\newblock Local sgd for saddle-point problems.
\newblock \emph{arXiv preprint arXiv:2010.13112}, 2020.

\bibitem[Bubeck(2014)]{bubeck2014convex}
Bubeck, S.
\newblock Convex optimization: Algorithms and complexity.
\newblock \emph{arXiv preprint arXiv:1405.4980}, 2014.

\bibitem[Carmon et~al.(2019)Carmon, Jin, Sidford, and Tian]{carmon2019variance}
Carmon, Y., Jin, Y., Sidford, A., and Tian, K.
\newblock Variance reduction for matrix games.
\newblock \emph{arXiv preprint arXiv:1907.02056}, 2019.

\bibitem[Du \& Hu(2019)Du and Hu]{du2019linear}
Du, S.~S. and Hu, W.
\newblock Linear convergence of the primal-dual gradient method for
  convex-concave saddle point problems without strong convexity.
\newblock In \emph{The 22nd International Conference on Artificial Intelligence
  and Statistics}, pp.\  196--205. PMLR, 2019.

\bibitem[Goodfellow et~al.(2014)Goodfellow, Pouget-Abadie, Mirza, Xu,
  Warde-Farley, Ozair, Courville, and Bengio]{goodfellow2014generative}
Goodfellow, I.~J., Pouget-Abadie, J., Mirza, M., Xu, B., Warde-Farley, D.,
  Ozair, S., Courville, A., and Bengio, Y.
\newblock Generative adversarial networks.
\newblock \emph{arXiv preprint arXiv:1406.2661}, 2014.

\bibitem[Gorbunov et~al.(2020)Gorbunov, Hanzely, and
  Richt{\'a}rik]{gorbunov2020local}
Gorbunov, E., Hanzely, F., and Richt{\'a}rik, P.
\newblock Local sgd: Unified theory and new efficient methods.
\newblock \emph{arXiv preprint arXiv:2011.02828}, 2020.

\bibitem[Johnson \& Zhang(2013)Johnson and Zhang]{johnson2013accelerating}
Johnson, R. and Zhang, T.
\newblock Accelerating stochastic gradient descent using predictive variance
  reduction.
\newblock \emph{Advances in neural information processing systems},
  26:\penalty0 315--323, 2013.

\bibitem[Karimireddy et~al.(2020{\natexlab{a}})Karimireddy, Jaggi, Kale, Mohri,
  Reddi, Stich, and Suresh]{karimireddy2020mime}
Karimireddy, S.~P., Jaggi, M., Kale, S., Mohri, M., Reddi, S.~J., Stich, S.~U.,
  and Suresh, A.~T.
\newblock Mime: Mimicking centralized stochastic algorithms in federated
  learning.
\newblock \emph{arXiv preprint arXiv:2008.03606}, 2020{\natexlab{a}}.

\bibitem[Karimireddy et~al.(2020{\natexlab{b}})Karimireddy, Kale, Mohri, Reddi,
  Stich, and Suresh]{karimireddy2020scaffold}
Karimireddy, S.~P., Kale, S., Mohri, M., Reddi, S., Stich, S., and Suresh,
  A.~T.
\newblock Scaffold: Stochastic controlled averaging for federated learning.
\newblock In \emph{International Conference on Machine Learning}, pp.\
  5132--5143. PMLR, 2020{\natexlab{b}}.

\bibitem[Khaled et~al.(2020)Khaled, Mishchenko, and
  Richt{\'a}rik]{khaled2020tighter}
Khaled, A., Mishchenko, K., and Richt{\'a}rik, P.
\newblock Tighter theory for local sgd on identical and heterogeneous data.
\newblock In \emph{International Conference on Artificial Intelligence and
  Statistics}, pp.\  4519--4529. PMLR, 2020.

\bibitem[Liang et~al.(2019)Liang, Shen, Liu, Pan, Chen, and
  Cheng]{liang2019variance}
Liang, X., Shen, S., Liu, J., Pan, Z., Chen, E., and Cheng, Y.
\newblock Variance reduced local sgd with lower communication complexity.
\newblock \emph{arXiv preprint arXiv:1912.12844}, 2019.

\bibitem[Lin et~al.(2015)Lin, Mairal, and Harchaoui]{lin2015universal}
Lin, H., Mairal, J., and Harchaoui, Z.
\newblock A universal catalyst for first-order optimization.
\newblock \emph{arXiv preprint arXiv:1506.02186}, 2015.

\bibitem[McMahan et~al.(2017)McMahan, Moore, Ramage, Hampson, and
  y~Arcas]{mcmahan2017communication}
McMahan, B., Moore, E., Ramage, D., Hampson, S., and y~Arcas, B.~A.
\newblock Communication-efficient learning of deep networks from decentralized
  data.
\newblock In \emph{Artificial Intelligence and Statistics}, pp.\  1273--1282.
  PMLR, 2017.

\bibitem[Mokhtari et~al.(2020)Mokhtari, Ozdaglar, and
  Pattathil]{mokhtari2020unified}
Mokhtari, A., Ozdaglar, A., and Pattathil, S.
\newblock A unified analysis of extra-gradient and optimistic gradient methods
  for saddle point problems: Proximal point approach.
\newblock In \emph{International Conference on Artificial Intelligence and
  Statistics}, pp.\  1497--1507. PMLR, 2020.

\bibitem[Nemirovski(2004)]{nemirovski2004prox}
Nemirovski, A.
\newblock Prox-method with rate of convergence o (1/t) for variational
  inequalities with lipschitz continuous monotone operators and smooth
  convex-concave saddle point problems.
\newblock \emph{SIAM Journal on Optimization}, 15\penalty0 (1):\penalty0
  229--251, 2004.

\bibitem[Parikh \& Boyd(2014)Parikh and Boyd]{parikh2014proximal}
Parikh, N. and Boyd, S.
\newblock Proximal algorithms.
\newblock \emph{Foundations and Trends in optimization}, 1\penalty0
  (3):\penalty0 127--239, 2014.

\bibitem[Rockafellar(1970)]{rockafellar1970monotone}
Rockafellar, R.~T.
\newblock Monotone operators associated with saddle-functions and minimax
  problems.
\newblock \emph{Nonlinear functional analysis}, 18\penalty0 (part 1):\penalty0
  397--407, 1970.

\bibitem[Stich(2018)]{stich2018local}
Stich, S.~U.
\newblock Local sgd converges fast and communicates little.
\newblock \emph{arXiv preprint arXiv:1805.09767}, 2018.

\bibitem[Tseng(1995)]{tseng1995linear}
Tseng, P.
\newblock On linear convergence of iterative methods for the variational
  inequality problem.
\newblock \emph{Journal of Computational and Applied Mathematics}, 60\penalty0
  (1-2):\penalty0 237--252, 1995.

\bibitem[Woodworth et~al.(2020{\natexlab{a}})Woodworth, Patel, and
  Srebro]{woodworth2020minibatch}
Woodworth, B., Patel, K.~K., and Srebro, N.
\newblock Minibatch vs local sgd for heterogeneous distributed learning.
\newblock \emph{arXiv preprint arXiv:2006.04735}, 2020{\natexlab{a}}.

\bibitem[Woodworth et~al.(2020{\natexlab{b}})Woodworth, Patel, Stich, Dai,
  Bullins, Mcmahan, Shamir, and Srebro]{woodworth2020local}
Woodworth, B., Patel, K.~K., Stich, S., Dai, Z., Bullins, B., Mcmahan, B.,
  Shamir, O., and Srebro, N.
\newblock Is local sgd better than minibatch sgd?
\newblock In \emph{International Conference on Machine Learning}, pp.\
  10334--10343. PMLR, 2020{\natexlab{b}}.

\bibitem[Yang et~al.(2020)Yang, Zhang, Kiyavash, and He]{yang2020catalyst}
Yang, J., Zhang, S., Kiyavash, N., and He, N.
\newblock A catalyst framework for minimax optimization.
\newblock Technical report, 2020.

\bibitem[Yuan \& Ma(2020)Yuan and Ma]{yuan2020federatedac}
Yuan, H. and Ma, T.
\newblock Federated accelerated stochastic gradient descent.
\newblock \emph{arXiv preprint arXiv:2006.08950}, 2020.

\bibitem[Zhang et~al.(2019)Zhang, Hong, and Zhang]{zhang2019lower}
Zhang, J., Hong, M., and Zhang, S.
\newblock On lower iteration complexity bounds for the saddle point problems.
\newblock \emph{arXiv preprint arXiv:1912.07481}, 2019.

\end{thebibliography}
\bibliographystyle{icml2021}

\onecolumn
\appendix
\section{Discussion of \cite{beznosikov2020local}}
While preparing our manuscript, we became aware of \cite{beznosikov2020local}, which is a preliminary work on local step methods for federated minimax optimization.  Their work obtains via their Extra Step Local SGD (disregarding constant factors)
\begin{align}
    \E \|z'- z^*\|^2 \leq \|z^0 - z^*\|^2 \exp(-\mu \gamma K) + \frac{\gamma \sigma^2}{n \mu} + \frac{\tau^3 \gamma^2 \beta^2(\sigma^2 + \zeta^2)}{ \mu^2}
\end{align}
On the other hand, we obtain using FedAvg-S
\begin{align}
    \E \|z' - z^*\|^2 \leq \frac{\beta^2}{\mu^2}\|z^0 - z^*\|^2 \exp(-\mu \gamma K) + \frac{\gamma \sigma^2}{n \mu} + \frac{\gamma^2 \beta^2 \zeta^2}{p^2 \mu^2} + \frac{\gamma^2 \beta^2 \sigma^2}{p \mu^2}
\end{align}
where $z'$ is the output of the algorithms.  That is, they lose a factor of $p$ in the heterogeneity term and two factors of $p$ with respect to the second noise term (setting $p = \frac{1}{\tau}$ for comparison).  Losing factors of $p$ (or $\tau$), makes communication complexity worse, and we can see that our FedAvg-S result strictly improves over their Extra Step Local SGD result (disregarding log factors).

Extra Step Local SGD is just FedAvg-S with the local Mirror Descent steps replaced by local Mirror-prox steps, and Mirror-prox is an accelerated algorithm.  Losing a factor of $p$ in the term bounding client drift shows that they found client drift hard to control when using this accelerated algorithm at the client level.  This suggests that our intuition about client-level acceleration might be true: client-level acceleration in the presence of heterogeneous clients can be harmful to convergence and communication complexity due to how it can amplify client drift.  
\begin{algorithm}[tb]
    \caption{FedAvg-S($\{f_i\}$, p)}
    \label{alg:fedavg}
 \begin{algorithmic}
	\STATE {\bfseries Server Input:} initial $z^0$, probability of communication $p$
	\STATE {\bfseries Server Input:}  Stepsize $\gamma$
    \STATE {\bfseries Client Input:} local function $f_i$
	\STATE {\bfseries set} $\tilde{z}^0 = z^0$
    \FOR{$k=0, 1, \dots$}
		\STATE Flip a coin $c_k$; $c_k = 1$ w.p. $p$, and $c_k = 0$ otherwise
        \FOR{each client $i$ in parallel}
			\STATE $g_i^k \gets \hat{G_i}(z_i^k)$
            \STATE $z_i^{k+1} \gets z_i^{k} - \gamma g_i^k$
            \IF{$c_k = 1$}
				\STATE Clients {\bfseries communicate} $\sum_{l = k'}^k g_i^l$ to server
				\STATE Server {\bfseries broadcasts} $\frac{1}{n}\sum_{i=1}^n \sum_{l = k'}^k g_i^l$ 
				\STATE $\tilde{z}^{k+1} \gets \tilde{z}^k - \gamma_g \frac{1}{n} \sum_{i=1}^n \sum_{l = k'}^k g_i^l$
                \STATE Server {\bfseries broadcasts} $\tilde{z}^{k+1}$
				\STATE $z_i^{k+1} \gets \tilde{z}^{k+1}$
		    \ENDIF
            \
        \ENDFOR
	\ENDFOR
 \end{algorithmic}
 \end{algorithm}
\section{FedAvg-S}
In this section, we let $V_k = \frac{1}{n} \sum_{i=1}^n \|z^k - z_i^k\|^2$, $\sigma_k = \frac{1}{n} \sum_{i=1}^n \| G_i(\tilde{z}^k) - G_i(z^*)\|^2$, where $\tilde{z}^k$ is the last synchronized iterate at or before iteration $k$.  We also use the shorthand $\E_k[\cdot]$ as taking expectation conditioned on everything up to iteration $k$.  We present a detailed description of FedAvg-S in Algorithm~\ref{alg:fedavg}.
\begin{theorem}
    \label{fedavg:proof}
	If we set 
	\begin{align}
		\gamma_l = \gamma_g = 
			\gamma = \min \{\gamma_{\max}, \frac{\log(\max\{2, \min \{\frac{a \mu^2 K^2}{c_1}, \frac{a \mu^3 K^3}{c_2}\} \})}{c_3 \mu K}\}
	\end{align}
	where $a = \frac{\|z^0 - z^*\|^2}{2}$, $c_3 = \frac{1}{4}$, $c_1 = \frac{\sigma^2}{2n}$, $c_2 = \frac{2 \beta^2}{\mu}( \frac{4 c \zeta^2}{p^2} + \frac{2 \sigma^2}{p})$, $\gamma_{\max} = \frac{\mu}{4 \beta^2}$, and let $w_k = (1 - \frac{\gamma \mu}{4})^{1-k}$ such that we return $\frac{1}{W_K} \sum_{k=0}^K w_k z^k$, $c = 9$, and $W_K = \sum_{k=0}^K w_k$, then FedAvg-S an upper bound on expected communication complexity of 
	\begin{align}
		 \tilde{\mathcal{O}}(\frac{p \beta^2}{\mu^2} + \frac{p \sigma^2}{n \mu \epsilon}  + \frac{p^{1/2} \beta \sigma}{\mu^{3/2}\epsilon^{1/2} } + \frac{\beta \zeta}{ \mu^{3/2} \epsilon^{1/2} })
	\end{align}
	with respect to $\gap^*(\cdot)$, and an expected communication complexity of 
	\begin{align}
		\tilde{\mathcal{O}}(\frac{p \beta^2}{\mu^2} + \frac{p \sigma^2}{n \mu^2 \epsilon}  + \frac{p^{1/2} \beta \sigma}{\mu^{2}\epsilon^{1/2} }  + \frac{\beta \zeta}{ \mu^{2} \epsilon^{1/2} })
	\end{align}
	with respect to distance to optimum.
\end{theorem}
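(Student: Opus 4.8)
The plan is to run the ``virtual iterate'' analysis, adapted to the minimax geometry. First I would introduce the averaged server sequence $z^k = \frac{1}{n}\sum_{i=1}^n z_i^k$, and observe that it evolves as $z^{k+1} = z^k - \gamma \frac{1}{n}\sum_i \hat{G_i}(z_i^k)$ \emph{regardless} of whether a synchronization occurs, since the resetting $z_i^{k+1}\gets \tilde z^{k+1}$ in Algorithm~\ref{alg:fedavg} only replaces each client iterate by this same average and hence leaves $z^k$ untouched. The engine of the entire argument is the substitute for the gradient co-coercivity that fails in the minimax case (cf. Remark~\ref{losingkappa}): adding the two strong-convex-concavity inequalities of Definition~1 evaluated at $(x^*,y^*)$ gives
\begin{align}
 \langle G(z), z - z^* \rangle \;\geq\; \gap^*(z) + \tfrac{\mu}{2}\|z-z^*\|^2 ,
\end{align}
and separately $\gap^*(z) \geq \tfrac{\mu}{2}\|z-z^*\|^2$. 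The first inequality simultaneously produces a contraction factor and a handle on the gap; the second is what I will use at the very end to pass from a gap bound to a distance bound.

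Next I would derive a one-step recursion for the virtual sequence. Expanding $\|z^{k+1}-z^*\|^2$ and taking $\E_k$, the cross term splits into $\langle G(z^k), z^k-z^*\rangle$, to which I apply the displayed inequality, plus a client-drift residual $\langle \frac1n\sum_i(G_i(z_i^k)-G_i(z^k)),\, z^k-z^*\rangle$, which I bound by $\beta$-smoothness and Young's inequality using $V_k$ and a $\tfrac{\mu}{4}\|z^k-z^*\|^2$ slack. The second-moment term $\gamma^2\E_k\|\frac1n\sum_i\hat{G_i}(z_i^k)\|^2$ contributes the variance $\tfrac{\sigma^2}{n}$ of the averaged estimate plus a bias piece that I again control via smoothness and $G(z^*)=0$ in terms of $\|z^k-z^*\|^2$, $V_k$, and $\sigma_k$. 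Combining, with the $\tfrac\mu4$ slacks absorbed (this is where $c_3=\tfrac14$ comes from), yields a recursion of the form
\begin{align}
 \E_k\|z^{k+1}-z^*\|^2 \;\leq\; \Bigl(1-\tfrac{\gamma\mu}{4}\Bigr)\|z^k-z^*\|^2 - \gamma\,\gap^*(z^k) + \gamma^2\tfrac{\sigma^2}{n} + \tfrac{2\gamma\beta^2}{\mu}\,\E_k V_k + 2\gamma^2 \E_k\sigma_k .
\end{align}

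The technically hardest step, and the one I expect to be the main obstacle, is the client-drift lemma bounding $\E V_k$. Clients starting from a common $\tilde z^k$ drift apart because their gradients $G_i$ differ; the differences are controlled by heterogeneity $\zeta$ and by noise $\sigma$, while the number of local steps since the last synchronization is geometric with mean $1/p$. Establishing the lemma requires a coupled recursion in $\{V_k,\sigma_k,\|z^k-z^*\|^2\}$ and a careful accounting that produces the $\tfrac{\zeta^2}{p^2}$ (from the squared local-step count) and $\tfrac{\sigma^2}{p}$ terms seen in $c_2 = \tfrac{2\beta^2}{\mu}\bigl(\tfrac{4c\zeta^2}{p^2}+\tfrac{2\sigma^2}{p}\bigr)$. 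Unlike the minimization setting, I cannot tighten this with co-coercivity, which is precisely why the bound carries the extra $\kappa$ and $\sqrt{\kappa}$ factors highlighted in Remark~\ref{losingkappa}. Feeding this $\E V_k$ bound back into the recursion leaves a clean recursion in $\E\|z^k-z^*\|^2$ with effective constants $c_1=\tfrac{\sigma^2}{2n}$ on the $\gamma$-term and the $c_2$ above on the $\gamma^2$-term.

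Finally I would telescope against the increasing weights $w_k=(1-\tfrac{\gamma\mu}{4})^{1-k}$: rearranging the recursion to isolate $\gamma\,\gap^*(z^k)$, multiplying by $w_k$, and summing makes the $\|z^k-z^*\|^2$ terms telescope, so that $\tfrac{1}{W_K}\sum_k w_k\,\E\gap^*(z^k)$ is bounded by an exponentially decaying initial term $\tfrac{\mu}{2}\|z^0-z^*\|^2(1-\tfrac{\gamma\mu}4)^{K}$ plus $c_1\gamma + c_2\gamma^2$. Jensen's inequality applied to the convex map $\gap^*$ transfers this to the returned average $z'=\tfrac1{W_K}\sum_k w_k z^k$, giving the gap guarantee. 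The stated stepsize $\gamma$, with its nested $\min$ and logarithm, is exactly the output of the standard tuning lemma (as in \cite{karimireddy2020scaffold,stich2018local}) that balances the exponential term against whichever of $c_1\gamma,\,c_2\gamma^2$ dominates, capped at $\gamma_{\max}=\tfrac{\mu}{4\beta^2}$; substituting and multiplying the resulting iteration count $K$ by $p$ (the expected syncs per iteration) yields the claimed gap complexity. The distance complexity then follows with no extra work from $\gap^*(z')\geq\tfrac{\mu}{2}\|z'-z^*\|^2$, which replaces $\epsilon$ by $\tfrac{\mu}{2}\epsilon$ and thereby inserts the extra $\tfrac1\mu$ into each $\epsilon$-dependent term.
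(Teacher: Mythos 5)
Your overall architecture coincides with the paper's proof: the averaged (virtual) iterate, the strong-monotonicity inequality $\langle G(z), z-z^*\rangle \ge \gap^*(z) + \tfrac{\mu}{2}\|z-z^*\|^2$ substituting for co-coercivity (this is exactly how the paper's Lemma~\ref{gradientdescent} is proved), the second-moment bound via smoothness (Lemma~\ref{scaffoldlemma}), the weighted telescoping with $w_k=(1-\tfrac{\gamma\mu}{4})^{1-k}$ plus Jensen, the tuning lemma (Lemma~\ref{linearconvergence}) that produces the nested-$\min$ stepsize, the multiplication of the iteration count by $p$, and the final conversion to distance via $\gap^*(z')\ge\tfrac{\mu}{2}\|z'-z^*\|^2$. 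All of that matches the paper step for step.

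The genuine gap is in the step you yourself single out as the crux: the client-drift bound. You propose to establish it via a coupled recursion in $\{V_k,\sigma_k,\|z^k-z^*\|^2\}$. That is the SCAFFOLD-style drift analysis (the paper's Lemma~\ref{lemma:scaffolddrift}), and it is precisely what the paper does \emph{not} do for FedAvg-S. If the drift bound retains a $\beta^2\|z^k-z^*\|^2$ (or $\sigma_k$) coupling, then feeding it back into the main recursion multiplied by $\tfrac{2\beta^2\gamma}{\mu}$ forces an extra stepsize restriction of the form $\gamma \lesssim p\mu/\beta^2$ so that the coupled term can be absorbed into the $\tfrac{\gamma\mu}{4}$ contraction; with $\gamma_{\max}\sim p\mu/\beta^2$ the optimized horizon is $K\sim \beta^2/(p\mu^2)$, and after multiplying by $p$ the leading communication term becomes $\tfrac{\beta^2}{\mu^2}$ --- which is the paper's Theorem~\ref{scaffold:scaled} rate, not the claimed $\tfrac{p\beta^2}{\mu^2}$. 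For FedAvg-S no coupling is needed: the paper's Lemma~\ref{heterodrift} bounds \emph{pairwise} distances $\E\|z_i^k-z_j^k\|^2$ directly, using (i) nonexpansiveness of the common-operator step $z\mapsto z-\gamma G(z)$ for the allowed stepsizes (strong monotonicity supplies a $-2\gamma\mu\|z_i-z_j\|^2$ term that dominates $\gamma^2\|G(z_i)-G(z_j)\|^2$), and (ii) the fact that the residual directions $G_i(z_i^k)-G(z_i^k)$ are bounded by $\zeta$ outright, plus noise; unrolling the geometric synchronization then gives the uniform, fully decoupled bound $\E V_k \le \tfrac{4c\gamma^2\zeta^2}{p^2}+\tfrac{2\gamma^2\sigma^2}{p}$, and this decoupling is what permits $\gamma_{\max}=\tfrac{\mu}{4\beta^2}$ independent of $p$ and hence the $\tfrac{p\beta^2}{\mu^2}$ leading term. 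The spurious $2\gamma^2\E_k\sigma_k$ term in your one-step recursion is a symptom of the same conflation: $\sigma_k$ is a control-variate quantity that plays no role in FedAvg-S. Replace the coupled recursion with the direct pairwise argument and your proof goes through; as written, the crux step would land you on the weaker theorem rather than the one claimed.
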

\begin{proof}
	From Lemma~\ref{gradientdescent}, we know that 
	\begin{align}
		\gap^*(z^k) \leq \frac{(1 - \gamma \mu + \frac{\gamma \mu}{\alpha})\|z^k - z^*\|^2 - \E_k \|z^{k+1} - z^*\|^2}{2 \gamma} + \frac{\alpha \beta^2}{2 \mu} V_k + \frac{\gamma}{2} \E_k \|g^k\|^2
	\end{align}
	Taking full expectation, using Lemma~\ref{scaffoldlemma}, and setting $\alpha = 2$,
	\begin{align}
		\E \gap^*(z^k) &\leq \frac{(1 - \frac{\gamma\mu}{2})\E \|z^k - z^*\|^2 - \E \|z^{k+1} - z^*\|^2}{2 \gamma} + \frac{ \beta^2}{ \mu} \E V_k \\
		& \ \ \ + \frac{\gamma}{2} [ 2 \beta^2 \E V_k + 2 \beta^2 \E \|z^k - z^*\|^2 + \frac{\sigma^2}{n}] \\
		&= \frac{(1 - \frac{\gamma\mu}{2} + \gamma^2 \beta^2)\E \|z^k - z^*\|^2 - \E \|z^{k+1} - z^*\|^2}{2 \gamma} + (\frac{ \beta^2}{ \mu} + \beta^2 \gamma) \E V_k + \frac{\gamma \sigma^2}{2n}
	\end{align}
	With our stepsize choice $\gamma \leq \frac{1}{\beta} \leq \frac{1}{\mu}$,
	\begin{align}
		\E \gap^*(z^k) &\leq \frac{(1 - \frac{\gamma\mu}{2} + \gamma^2 \beta^2)\E \|z^k - z^*\|^2 - \E \|z^{k+1} - z^*\|^2}{2 \gamma} + \frac{ 2\beta^2}{ \mu} \E V_k + \frac{\gamma \sigma^2}{2n}
	\end{align}
	Using Lemma~\ref{heterodrift},
	\begin{align}
		\E \gap^*(z^k) &\leq \frac{(1 - \frac{\gamma\mu}{2} + \gamma^2 \beta^2)\E \|z^k - z^*\|^2 - \E \|z^{k+1} - z^*\|^2}{2 \gamma} + \frac{ 2\beta^2 \gamma^2}{ \mu} [\frac{4 c  \zeta^2}{p^2} + \frac{2 \sigma^2}{p}] + \frac{\gamma \sigma^2}{2n}
	\end{align}
	If we let $\gamma \leq \frac{\mu}{4\beta^2}$,
	\begin{align}
		\E \gap^*(z^k) &\leq \frac{(1 - \frac{\gamma\mu}{4})\E \|z^k - z^*\|^2 - \E \|z^{k+1} - z^*\|^2}{2 \gamma}  + \frac{ 2\beta^2 \gamma^2}{ \mu} [\frac{4 c  \zeta^2}{p^2} + \frac{2 \sigma^2}{p}] + \frac{\gamma \sigma^2}{2n}
	\end{align}
	Taking a weighted average with $w_k = (1 - \frac{\gamma \mu}{4})^{1 - k}$ and $\sum_{k = 0}^K w_k = W_K$, we get that
	\begin{align}
		\frac{1}{W_K} \sum_{k=0}^K w_k \E \gap^*(z^k) &\leq \frac{\|z^0 - z^*\|^2}{2 \gamma W_K}  + \frac{\gamma \sigma^2}{2n}  + \frac{ 2\beta^2 \gamma^2}{ \mu} [\frac{4 c  \zeta^2}{p^2} + \frac{2 \sigma^2}{p}]
	\end{align}
	Now using Lemma~\ref{linearconvergence} and convex-concavity, we have for $z' = \frac{1}{W_K}\sum_{k=0}^K w_k z^K$,
	\begin{align}
		\E \gap^*(z') =
			\tilde{O}(\frac{a \exp(- c_3 \mu \gamma_{\max} K)}{\gamma_{\max}} + \frac{c_1}{c_3 \mu K} + \frac{c_2}{c_3^2 \mu^2 K^2})
	\end{align}
	where $a = \frac{\|z^0 - z^*\|^2}{2}$, $c_3 = \frac{1}{4}$, $c_1 = \frac{\sigma^2}{2n}$, $c_2 = \frac{2 \beta^2}{\mu}( \frac{4 c \zeta^2}{p^2} + \frac{2 \sigma^2}{p})$, $\gamma_{\max} = \frac{\mu}{4 \beta^2}$, and 
	\begin{align}
		\gamma = \min \{\gamma_{\max}, \frac{\log(\max\{2, \min \{\frac{a \mu^2 K^2}{c_1}, \frac{a \mu^3 K^3}{c_2}\} \})}{c_3 \mu K}\}
	\end{align}
	By strong convex-concavity, we also have that 
	\begin{align}
		\E \|z' - z^*\|^2 =
			\tilde{O}(\frac{a \exp(- c_3 \mu \gamma_{\max} K)}{\mu \gamma_{\max}} + \frac{c_1}{c_3 \mu^2 K} + \frac{c_2}{c_3^2 \mu^3 K^2})
	\end{align}
	By solving both of these convergence rates for $\epsilon$ and multiplying by $p$, we get the communication complexities in the theorem statement.
\end{proof}
\section{SCAFFOLD-S}

In this section, we let $V_k = \frac{1}{n} \sum_{i=1}^n \|z^k - z_i^k\|^2$, $\sigma_k = \frac{1}{n} \sum_{i=1}^n \| G_i(\tilde{z}^k) - G_i(z^*)\|^2$, where $\tilde{z}^k$ is the last synchronized iterate at or before iteration $k$.  We also use the shorthand $\E_k[\cdot]$ as taking expectation conditioned on everything up to iteration $k$.

We have three theorems for SCAFFOLD-S, which correspond to different choices for $\gamma_l$ and $\gamma_g$.  Altogether, they combine to give us the guarantee in Theorem~\ref{scaffoldtheorem}.  

\begin{remark}
    \label{scaffold-s:remark}
    \normalfont Note that we can obtain the statement of Theorem~\ref{scaffoldtheorem} with only Theorem~\ref{scaffold:minibatch} and Theorem~\ref{scaffold:unscaled}.  However, we provide Theorem~\ref{scaffold:scaled} to show that the communication complexity is still quite good in the worst case even under a non-trivial $\gamma_l$ setting.  For comparison with prior work, SCAFFOLD \cite{karimireddy2020scaffold} in fact sets $\gamma_l$ to be extremely low--nearly zero if $\sigma^2$ is large--to get their rate of $\frac{\beta}{\mu} + \frac{\sigma^2}{n \tau \mu^2 \epsilon}$.  By doing this, they avoid incurring a term that is roughly $\frac{\beta^{1/2} \sigma}{\tau^{1/2} \mu^{3/2} \epsilon^{1/2}}$, which is exactly the sort of extra term we incur in our Theorem~\ref{scaffold:scaled} where we do not set $\gamma_l$ extremely small (up to $\sqrt{\kappa}$, which is expected as mentioned in the main paper).
\end{remark}
\subsection{Arbitrary Heterogeneity: Zero Local Stepsize}
\begin{theorem}
	\label{scaffold:minibatch}
	If we set 
	\begin{align}
		\gamma_l = 0, \gamma_g = \frac{p \mu}{4 \beta^2}
	\end{align}
	and $w_k = (1 - \frac{p \mu^2}{4 \beta^2})^{1 - k}$ then SCAFFOLD-S has a communication complexity of 
	\begin{align}
		\tilde{O}(\frac{\beta^2}{\mu^2})
	\end{align}
	wrt both $\gap^*$ and distance to optimum.
\end{theorem}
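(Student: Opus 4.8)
The plan is to exploit the fact that the zero local stepsize completely eliminates client drift and, together with the SVRG-style control variate, collapses SCAFFOLD-S into exact gradient descent-ascent on the global objective $f$. First I would observe that with $\gamma_l = 0$ no client iterate ever moves between synchronizations, so $z_i^k = \tilde z^k$ for every $i$ and $k$, and therefore the drift quantity $V_k = \frac{1}{n}\sum_{i=1}^n \|z^k - z_i^k\|^2$ is identically zero. Moreover, evaluating the SCAFFOLD direction at $z_i^k = \tilde z^k$ with a common stochastic sample (the standard variance-reduction convention) makes the two stochastic terms cancel exactly, so $g_i^k = \hat G_i(\tilde z^k) - \hat G_i(\tilde z^k) + G(\tilde z^k) = G(\tilde z^k)$, the exact (noiseless) global gradient mapping. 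Hence the only nontrivial update is the server step, which applies a scaled multiple of $G(\tilde z^{k'})$ at the last synchronized anchor $\tilde z^{k'}$, and the whole process is free of any dependence on $\sigma$ or $\zeta$.

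Next I would plug $V_k = 0$ and the exact-gradient identity into the descent inequality of Lemma~\ref{gradientdescent} together with the gradient-norm bound of Lemma~\ref{scaffoldlemma}. Because both the drift term (from Lemma~\ref{heterodrift}) and the gradient-noise term now vanish, the per-iteration recursion collapses to a pure contraction of the form $\E\|z^{k+1}-z^*\|^2 \leq (1 - \gamma_g\mu)\,\E\|z^k - z^*\|^2$, valid once $\gamma_g$ is small enough that smoothness cannot overwhelm strong convex-concavity. The choice $\gamma_g = \frac{p\mu}{4\beta^2}$ is precisely the threshold guaranteeing this, and it matches the contraction factor $1 - \frac{p\mu^2}{4\beta^2}$ encoded in the weights $w_k = (1-\frac{p\mu^2}{4\beta^2})^{1-k}$. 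I would then invoke Lemma~\ref{linearconvergence} to solve the resulting geometric recursion, and use strong convex-concavity (together with convexity of the weighted average $\frac{1}{W_K}\sum_k w_k z^k$) to transfer the distance bound into a matching bound on $\gap^*(\cdot)$.

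Solving $(1-\frac{p\mu^2}{4\beta^2})^K \|z^0 - z^*\|^2 \leq \epsilon$ yields $K = \tilde{O}(\frac{\beta^2}{p\mu^2})$ framework iterations; since each iteration triggers a communication round with probability $p$, the expected communication complexity is $pK = \tilde{O}(\frac{\beta^2}{\mu^2})$, and since convergence is purely linear (no additive noise), the identical rate holds for both $\gap^*$ and distance to optimum, as claimed.

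I expect the main obstacle to be the random accumulation in the server step: a synchronization applies $N_k$ copies of the gradient evaluated at the single stale anchor $\tilde z^{k'}$, where $N_k$ is geometric with mean $1/p$ and a heavy second moment of order $1/p^2$. Showing that the amortized per-iteration contraction remains $(1-\gamma_g\mu)$, rather than being degraded by this ``overshoot'' at the stale point, is the delicate part and is exactly why $\gamma_g$ must carry the factor $p$: this keeps the effective per-synchronization stepsize $\gamma_g/p \approx \mu/(4\beta^2)$ inside the stable range for strongly-convex-concave gradient descent-ascent, where the absence of gradient co-coercivity (Remark~\ref{losingkappa}) forces stepsizes of order $\mu/\beta^2$ instead of $1/\beta$.
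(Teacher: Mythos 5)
Your proposal is correct and takes essentially the same route as the paper: the paper likewise reduces the $\gamma_l=0$ case to exact gradient descent-ascent performed only at synchronizations, handles the random accumulation by writing the update as $z^k - \gamma_g \tau_k G(z^k)$ with $\E \tau_k = 1/p$ and $\E \tau_k^2 = O(1/p^2)$ (exactly the ``overshoot'' you flag, and exactly why $\gamma_g$ carries the factor $p$), and obtains the linear contraction $(1-\Theta(\gamma_g\mu))$ giving $K = \tilde{O}(\frac{\beta^2}{p\mu^2})$ iterations and $pK = \tilde{O}(\frac{\beta^2}{\mu^2})$ rounds. Two minor caveats: Lemma~\ref{gradientdescent} does not literally apply here (it assumes the single-stepsize update $z^{k+1} = z^k - \gamma g^k$ at every iteration), so the paper replaces it with the direct computation you sketch in your final paragraph; and transferring a distance bound into a $\gap^*$ bound requires smoothness ($\gap^*(z) \leq \frac{\beta}{2}\|z - z^*\|^2$), not strong convex-concavity, which gives the reverse inequality $\gap^*(z) \geq \frac{\mu}{2}\|z-z^*\|^2$ --- the paper avoids this by bounding $\gap^*$ of the weighted average first and then deducing the distance bound, though with a purely linear rate either ordering changes the complexity only by logarithmic factors.
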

\begin{proof}

	Let $\tau_k$ be the number of steps since the last synchronized iterate on iterate $k$.  
	\begin{align}
		\E_k \| z^{k+1} - z^*\|^2 &\leq (1 - p)\|z^k - z^*\|^2 + p \E_k \|z^k - \gamma_g \tau_k G(z^k) - z^*\|^2 \\
		&= \|z^k - z^*\|^2 - 2 \gamma_g p \tau_k \langle G(z^k), z^k - z^* \rangle + p \gamma_g^2 \tau_k^2 \| G(z^k) - G(z^*)\|^2 \\
	\end{align}
	Taking full expectation and noting that $p = \frac{1}{\E \tau_k}$,
	\begin{align}
		\E \| z^{k+1} - z^*\|^2 \leq \E \|z^k - z^*\|^2 - 2 \gamma_g \E \langle G(z^k), z^k - z^* \rangle +  \frac{\gamma_g^2  \beta^2}{p} \E \| z^k - z^*\|^2
	\end{align}
	Using strong convex-concavity,
	\begin{align}
		\E \gap^*(z^k) \leq \frac{\E \|z^k - z^*\|^2 - \E \| z^{k+1} - z^*\|^2}{2 \gamma_g} - \frac{\mu}{2} \E \|z^k - z^*\|^2 + \frac{\gamma_g \beta^2}{p} \E \| z^k - z^*\|^2
	\end{align}
	If we take $\gamma_g = \frac{p \mu}{4 \beta^2}$, then we get 
	\begin{align}
		\E \gap^*(z^k) &\leq \frac{\E \|z^k - z^*\|^2 - \E \| z^{k+1} - z^*\|^2}{2 \gamma_g} - \frac{\mu}{4} \E \|z^k - z^*\|^2 \\
		&= \frac{(1 - \frac{\gamma_g \mu}{2})\E \|z^k - z^*\|^2 - \E \| z^{k+1} - z^*\|^2}{2 \gamma_g}
	\end{align}
	So by setting $w_k = (1 - \frac{\gamma_g \mu}{2})^{1-k}$, we get that 
	\begin{align}
		\frac{1}{W_K}\sum_{k=0}^K w_k \E \gap^*(z^k) \leq \frac{\|z^0 - z^*\|^2}{2 \gamma_g} \exp(-\gamma_g \mu K)
	\end{align}
	Using convex-concavity we have that if $z' := \frac{1}{W_K} \sum_{k=0}^K w_k z^k$,
	\begin{align}
		\E \gap^*(z') \leq \frac{\|z^0 - z^*\|^2}{2 \gamma_g} \exp(-\gamma_g \mu K)
	\end{align}
	with our setting of $\gamma_g = \frac{p \mu}{4 \beta^2}$, 
	\begin{align}
		\E \gap^*(z') \leq \frac{2 \beta^2 \|z^0 - z^*\|^2}{p \mu} \exp(-\frac{p \mu^2}{4 \beta^2}K)
	\end{align}
	Which leads to the communication complexity in the theorem statement, by solving for $\epsilon$ and multiplying by $p$.  With respect to distance from optimum, we can again use strong convex-concavity to get 
	\begin{align}
		\E \|z' - z^*\|^2 \leq \frac{4 \beta^2 \|z^0 - z^*\|^2}{p \mu^2} \exp(-\frac{p \mu^2}{4 \beta^2}K)
	\end{align}
	which leads to the same communication complexity.
\end{proof}
\subsection{Arbitrary Heterogeneity: Scaled Stepsizes}
\begin{theorem}
    \label{scaffold:scaled}
	If we set $\gamma := \gamma_l = \gamma_g$, such that 
	\begin{align}
		\gamma = \min \{\gamma_{\max}, \frac{\log(\max\{2, \min \{\frac{a \mu^2 K^2}{c_1}, \frac{a \mu^3 K^3}{c_2}\} \})}{c_3 \mu K}\}
	\end{align} 
	where $a = \frac{\|z^0 - z^*\|^2}{2} + \frac{2 \beta^2 }{\mu} H \gamma^3 \sigma_0^2$, $c_1 = \frac{\sigma^2}{2n}$, $c_2 = \frac{8 \beta^2 \sigma^2}{p \mu}$, $\gamma_{\max} = \frac{p \mu}{80 \beta^2}$, $c_3 = \frac{1}{8}$, $H = \frac{64 (1 - p)(2 + p)(8 + p)}{12 p^3}$, 
	and set $w_k = (1 - \frac{\gamma \mu}{8})^{1-k}$,
	SCAFFOLD-S has an upper bound on expected communication complexity of 
	\begin{align}
		\tilde{\mathcal{O}}(\frac{p \sigma^2}{n \mu \epsilon} + \frac{p^{1/2}\beta \sigma}{ \mu^{3/2} \epsilon^{1/2}} +  \frac{\beta^2}{\mu^2})
	\end{align}
	with respect to $\gap^*(\cdot)$
	and 
	\begin{align}
		\tilde{\mathcal{O}}(\frac{p \sigma^2}{n \mu^2 \epsilon} + \frac{p^{1/2}\beta \sigma}{ \mu^{2} \epsilon^{1/2}} +  \frac{\beta^2}{\mu^2})
	\end{align}
	with respect to distance to optimum.
\end{theorem}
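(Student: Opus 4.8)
The plan is to mirror the template of the FedAvg-S analysis in Theorem~\ref{fedavg:proof} essentially verbatim, substituting the SCAFFOLD-specific drift and gradient-variance estimates in place of the FedAvg ones. I would start from the generic per-iteration descent inequality (Lemma~\ref{gradientdescent}), which holds for \emph{any} update direction $g^k$ and reads
\begin{align*}
	\gap^*(z^k) \leq \frac{(1 - \gamma \mu + \tfrac{\gamma \mu}{\alpha})\|z^k - z^*\|^2 - \E_k \|z^{k+1} - z^*\|^2}{2 \gamma} + \frac{\alpha \beta^2}{2 \mu} V_k + \frac{\gamma}{2} \E_k \|g^k\|^2.
\end{align*}
The algorithm enters only through the second moment $\E_k\|g^k\|^2$ and the client-drift term $V_k=\frac1n\sum_i\|z^k-z_i^k\|^2$, and this is precisely where SCAFFOLD's cross-client variance reduction earns its keep: the control variates $-\hat G_i(\tilde z^k)+G(\tilde z^k)$ make the relevant quantities depend on the control-variate staleness $\sigma_k=\frac1n\sum_i\|G_i(\tilde z^k)-G_i(z^*)\|^2$ rather than on the heterogeneity $\zeta$.

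\textbf{Key steps, in order.} (i) Take full expectation and bound $\E\|g^k\|^2$ by the SCAFFOLD analogue of Lemma~\ref{scaffoldlemma}, producing a $\beta^2\E V_k$ term, a $\beta^2\E\|z^k-z^*\|^2$ term, a staleness term in $\E\sigma_k$, and the irreducible sampling term $\sigma^2/n$ --- crucially, with \emph{no} $\zeta^2$ contribution, which is why $c_2$ in the statement carries only $\sigma^2$ and not the $\zeta^2/p^2$ that appeared for FedAvg-S. (ii) Fix $\alpha=2$ and bound $\E V_k$ by the SCAFFOLD drift lemma (the analogue of Lemma~\ref{heterodrift}), whose right-hand side is again free of $\zeta^2$; absorb the $\frac{2\beta^2}{\mu}\E V_k$ and $\frac{\gamma}{2}\E\|g^k\|^2$ terms using the stepsize restrictions $\gamma\le\gamma_{\max}=\frac{p\mu}{80\beta^2}$, which also forces the coefficient of $\E\|z^k-z^*\|^2$ below $1-\frac{\gamma\mu}{8}$. (iii) Take the weighted average with $w_k=(1-\frac{\gamma\mu}{8})^{1-k}$, telescope, and feed the resulting inequality $\frac{1}{W_K}\sum_k w_k\E\gap^*(z^k)\le \frac{a}{\gamma W_K}+\frac{c_1}{\dots}$ into Lemma~\ref{linearconvergence} with $a,c_1,c_2,c_3,\gamma_{\max}$ as stated, giving $\E\gap^*(z')=\tilde O\big(\frac{a}{\gamma_{\max}}\exp(-c_3\mu\gamma_{\max}K)+\frac{c_1}{c_3\mu K}+\frac{c_2}{c_3^2\mu^2K^2}\big)$. (iv) Solve each term for $\epsilon$ and multiply by $p$: the exponential term yields $\frac{\beta^2}{\mu^2}$, $\frac{c_1}{\mu K}$ yields $\frac{p\sigma^2}{n\mu\epsilon}$, and $\frac{c_2}{\mu^2K^2}$ yields $\frac{p^{1/2}\beta\sigma}{\mu^{3/2}\epsilon^{1/2}}$, matching the $\gap^*$ claim. (v) Apply $\frac{\mu}{2}\|z'-z^*\|^2\le\E\gap^*(z')$ from strong convex-concavity to convert to the distance-to-optimum rate, which loses one factor of $\mu$ in each term.

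\textbf{Main obstacle.} The hard part is \emph{not} the averaging or the final bookkeeping but establishing the two SCAFFOLD estimates used in steps (i)--(ii), because in SCAFFOLD the three quantities $\E\|z^k-z^*\|^2$, the drift $\E V_k$, and the staleness $\E\sigma_k$ are mutually coupled: the drift accumulated over the roughly $1/p$ local steps since the last synchronization feeds $\sigma_k$ through smoothness ($\sigma_k\le\beta^2\|\tilde z^k-z^*\|^2$ up to constants), and $\sigma_k$ in turn inflates the next block's drift. I would resolve this the way Karimireddy et al. do for SCAFFOLD, by designing a Lyapunov potential that combines $\|z^k-z^*\|^2$ with a suitably weighted running estimate of the control-variate variance so that the cross-terms telescope into a clean one-step contraction. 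This is exactly the source of the combinatorial constant $H=\frac{64(1-p)(2+p)(8+p)}{12p^3}$ (the $1/p^3$ comes from summing drift over a synchronization block together with the geometric amplification of staleness, the $(1-p)$ from the probability of not synchronizing) and of the transient $\frac{2\beta^2}{\mu}H\gamma^3\sigma_0^2$ that is folded into the effective initial distance $a$. The same coupling is what shrinks $\gamma_{\max}$ by a factor of order $p$ relative to FedAvg-S, and hence why the leading communication term is $\frac{\beta^2}{\mu^2}$ with no $p$-speedup in the worst case.
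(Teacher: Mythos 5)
Your plan is correct and, at the theorem level, coincides with the paper's own proof of Theorem~\ref{scaffold:scaled}: Lemma~\ref{gradientdescent} with $\alpha=2$, the second-moment bound from Lemma~\ref{scaffoldlemma}, weighted telescoping with $w_k=(1-\frac{\gamma\mu}{8})^{1-k}$, absorption of the drift under the cap $\gamma_{\max}=\frac{p\mu}{80\beta^2}$, Lemma~\ref{linearconvergence} with exactly the stated $a,c_1,c_2,c_3$, then solving for $\epsilon$, multiplying by $p$, and converting to distance via strong convex-concavity --- your bookkeeping in steps (iii)--(v) matches term for term. The genuine difference is how you would prove the coupled drift estimate, which is the paper's Lemma~\ref{lemma:scaffolddrift} and which you correctly single out as the main obstacle. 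You propose a Karimireddy-style Lyapunov potential combining $\|z^k-z^*\|^2$, $\E V_k$, and the staleness $\E\sigma_k^2$ to obtain a one-step contraction. The paper instead follows \cite{gorbunov2020local}: it derives separate recursions $\E V_{k+1}\leq(1-\tfrac{p}{4})\E V_k+\tfrac{(1-p)(2+p)\gamma^2}{p}[4\beta^2\E\|z^k-z^*\|^2+2\E\sigma_k^2]+(1-p)\gamma^2\sigma^2$ and $\E\sigma_{k+1}^2\leq(1-p)\E\sigma_k^2+p\beta^2\E\|z^k-z^*\|^2$, unrolls both, and tames the resulting double sums using $\gamma\leq\frac{p}{2\mu}$ so that the weights $w_k$ grow slower than the $(1-\tfrac{p}{4})^{k-l}$ decay; this yields a \emph{summed} bound $\sum_k w_k\E V_k\lesssim\frac{\beta^2\gamma^2}{p^2}\sum_k w_k\E\|z^k-z^*\|^2+H\gamma^2\sigma_0^2+\frac{\gamma^2\sigma^2 W_K}{p}$, whose coupled term is then absorbed into the contraction by the same $\gamma\lesssim p\mu/\beta^2$ restriction you invoke. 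Both routes require the same stepsize regime and give the same constant structure (the $1/p^3$ in $H$, the $\gamma^3\sigma_0^2$ transient inside $a$, the loss of the $p$-speedup in the leading term), all of which you predicted; your potential approach buys a cleaner modular per-step statement, while the paper's summed bound plugs directly into the weighted-averaging machinery without tuning potential coefficients. One minor bookkeeping discrepancy: you place a staleness term $\E\sigma_k$ in the bound on $\E\|g^k\|^2$, but in the paper the averaged control variates cancel, so $\E\|g^k\|^2\leq 2\beta^2\E V_k+2\beta^2\E\|z^k-z^*\|^2+\frac{\sigma^2}{n}$ carries no staleness; $\E\sigma_k^2$ enters only through the per-client bound $\frac{1}{n}\sum_i\E\|\E_k g_i^k\|^2$ inside the drift recursion. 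This is harmless (it only relocates where the $\sigma_0^2$ transient surfaces), not a gap.
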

\begin{proof}
	From Lemma~\ref{gradientdescent}, we know that 
	\begin{align}
		\gap^*(z^k) \leq \frac{(1 - \gamma \mu + \frac{\gamma \mu}{\alpha})\|z^k - z^*\|^2 - \E_k \|z^{k+1} - z^*\|^2}{2 \gamma} + \frac{\alpha \beta^2}{2 \mu} V_k + \frac{\gamma}{2} \E_k \|g^k\|^2
	\end{align}
	Taking full expectation and using Lemma~\ref{scaffoldlemma} to bound the last term,
	\begin{align}
		\E \gap^*(z^k) &\leq \frac{(1 - \gamma \mu + \frac{\gamma \mu}{\alpha} +  \gamma^2 \beta^2)\E \|z^k - z^*\|^2 - \E \|z^{k+1} - z^*\|^2}{2 \gamma} + (\frac{\alpha \beta^2}{2 \mu} +  \gamma \beta^2 ) \E V_k   + \frac{\gamma \sigma^2}{2 n}
	\end{align}
	This implies that if we choose $\alpha = 2$, 
	\begin{align}
		\label{plugbackdrift}
		\sum_{k=0}^K w_k \E \gap^*(z^K) &\leq \sum_{k=0}^K w_k [\frac{(1 - \frac{\gamma \mu}{2} +  \gamma^2 \beta^2)\E \|z^k - z^*\|^2 - \E \|z^{k+1} - z^*\|^2}{2 \gamma}] \\
		& \ \ \  + (\frac{ \beta^2}{ \mu} +  \gamma \beta^2) \sum_{k=0}^K w_k \E V_k \\
		& \ \ \ +  \frac{\gamma \sigma^2}{2 n} W_K \\
	\end{align}
	Now using Lemma~\ref{lemma:scaffolddrift}, recall that with $H = \frac{64 (1 - p)(2 + p)(8 + p)}{12 p^3}$ and stepsize restrictions $\gamma \leq \frac{p}{4 \beta\sqrt{(1-p)(2+p)}}$, $\gamma \leq \frac{p}{2\mu}$,
	\begin{align}
		\sum_{k=0}^k w_k \E V_k &\leq \frac{64 (1 - p)(2 + p) \beta^2 \gamma^2 }{3 p^2}(2 + \frac{8}{1 - p}) \sum_{k=0}^K w_k \|z^k - z^*\|^2 + H \gamma^2 \sigma_0^2  + \frac{4 (1-p) \gamma^2 \sigma^2 W_K}{p}
	\end{align}
	We add more stepsize restrictions: we choose $\gamma$ such that 
	\begin{align}
	\frac{64 (1 - p)(2 + p) \beta^4 \gamma^2 }{3 p^2 \mu }(2 + \frac{8}{1 - p}) \leq \frac{\mu}{16} \implies \gamma \leq \frac{p \mu \sqrt{3}}{\beta^2 \sqrt{1024 (1 - p)(2+p) (2 + \frac{8}{1-p})}}
	\end{align} 
	Then we get 
	\begin{align}
		\sum_{k=0}^k w_k \E V_k &\leq \frac{\mu^2}{16\beta^2} \sum_{k=0}^K w_k \|z^k - z^*\|^2 + H \gamma^2  \sigma_0^2  + \frac{4 (1-p) \gamma^2 \sigma^2 W_K}{p}
	\end{align}
	Plugging this back into (\ref{plugbackdrift})
	\begin{align}
		\sum_{k=0}^K w_k \E \gap^*(z^k) &\leq \sum_{k=0}^K w_k [\frac{(1 - \frac{\gamma \mu}{2} +  \gamma^2 \beta^2 + \frac{\gamma \mu}{8} + \frac{\gamma^2 \mu^2}{8})\E \|z^k - z^*\|^2 - \E \|z^{k+1} - z^*\|^2}{2 \gamma}] \\
		& \ \ \ +  \frac{\gamma \sigma^2}{2 n} W_K +  (\frac{ \beta^2}{ \mu} +  \gamma \beta^2) H \gamma^2  \sigma_0^2  + (\frac{ \beta^2}{ \mu} +  \gamma \beta^2) \frac{4 (1-p) \gamma^2 \sigma^2 W_K}{p} \\
	\end{align}
	If we also make sure $\gamma^2 \beta^2 \leq \frac{\gamma \mu}{8} \implies \gamma \leq \frac{\mu}{8 \beta^2}$ and $\gamma \leq \frac{1}{\beta} \leq \frac{1}{\mu}$, then 
	\begin{align}
		\sum_{k=0}^K w_k \E \gap^*(z^k) &\leq \sum_{k=0}^K w_k [\frac{(1 - \frac{\gamma \mu}{8})\E \|z^k - z^*\|^2 - \E \|z^{k+1} - z^*\|^2}{2 \gamma}] \\
		& \ \ \ +  \frac{\gamma \sigma^2}{2 n} W_K +  \frac{ 2 \beta^2}{ \mu} H \gamma^2 \sigma_0^2  + \frac{ 2 \beta^2}{ \mu} \frac{4 (1-p) \gamma^2 \sigma^2 W_K}{p}
	\end{align}
	With the choice of $w_k = (1 - \frac{\gamma \mu}{8})^{1 - k}$, this implies along with convexity and letting the output be $z' = \frac{1}{W_K}\sum_{k=0}^K w_k z^k$ with $\sum_{k = 0}^K w_k = W_K$,
	\begin{align}
		\label{withgamma}
		 \E \gap^*(z') &\leq   [\frac{\|z^0 - z^*\|^2}{2 \gamma} +  \frac{ 2 \beta^2}{ \mu} H  \gamma^2 \sigma_0^2] \exp( - \frac{\gamma \mu K}{8})  +  \frac{\gamma \sigma^2}{2 n}    + \frac{ 8 \beta^2 \sigma^2 \gamma^2}{ p \mu}
	\end{align}
	With the stepsize constraints $\gamma \leq \min \{\frac{1}{\beta}, \frac{\mu}{8 \beta^2}, \frac{p \mu \sqrt{3}}{\beta^2 \sqrt{2048(1 - p)(2+p) + 4096 (2+p)}}, \frac{p}{2\mu},  \frac{p}{4 \beta\sqrt{(1-p)(2+p)}} \}$. Choosing $\gamma \leq \frac{p \mu}{80 \beta^2}$ satisfies all constraints.

	We now apply Lemma~\ref{linearconvergence}.  If we then let $a = \frac{\|z^0 - z^*\|^2}{2} + \frac{2 \beta^2 }{\mu} H \gamma^3 \sigma_0^2$, $c_1 = \frac{\sigma^2}{2n}$, $c_2 = \frac{8 \beta^2 \sigma^2}{p \mu}$, $\gamma_{\max} = \frac{p \mu}{80 \beta^2}$, $c_3 = \frac{1}{8}$ and choose 
	\begin{align}
		\gamma = \min \{\gamma_{\max}, \frac{\log(\max\{2, \min \{\frac{a \mu^2 K^2}{c_1}, \frac{a \mu^3 K^3}{c_2}\} \})}{c_3 \mu K}\}
	\end{align}
	then we get via convex-concavity 
	\begin{align}
		\E \gap^*(z') = \tilde{O}(\frac{a \exp(- c_3 \mu \gamma_{\max} K)}{\gamma_{\max}} + \frac{c_1}{c_3 \mu K} + \frac{c_2}{c_3^2 \mu^2 K^2})
	\end{align}
	and by strong convex-concavity 
	\begin{align}
		\E \|z' - z^*\|^2  = \tilde{O}(\frac{a \exp(- c_3 \mu \gamma_{\max} K)}{\mu \gamma_{\max}} + \frac{c_1}{c_3 \mu^2 K} + \frac{c_2}{c_3^2 \mu^3 K^2})
	\end{align}
	Solving for $\epsilon$ and multiplying by $p$ for both gives the communication complexity bounds given in the theorem.
\end{proof}
\subsubsection{Client Drift Under Scaled Stepsizes}
\begin{lemma}
	\label{lemma:scaffolddrift}
	For SCAFFOLD-S, with stepsize $\gamma_l = \gamma_g = \gamma \leq \frac{p}{4 \beta\sqrt{(1-p)(2+p)}}$, $\gamma \leq \frac{p}{2\mu}$,
	\begin{align}
		\sum_{k=0}^K w_k \E V_k &\leq \frac{64 (1 - p)(2 + p) \beta^2 \gamma^2 }{3 p^2}(2 + \frac{8}{1 - p}) \sum_{k=0}^K w_k \|z^k - z^*\|^2 + H \gamma^2 \E \sigma_0^2  + \frac{4 (1-p) \gamma^2 \sigma^2 W_K}{p}
	\end{align}
\end{lemma}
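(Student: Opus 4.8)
The plan is to derive a one-step recursion for the drift and then sum it against the weights $w_k$. First I would condition on the synchronization coin at step $k$. With probability $p$ a synchronization resets every client to the common iterate, so $V_{k+1}=0$; with probability $1-p$ each client takes the local step $z_i^{k+1}=z_i^k-\gamma g_i^k$ while the (conceptual) average evolves as $z^{k+1}=z^k-\gamma\bar g^k$ with $\bar g^k=\frac1n\sum_i g_i^k$. Writing $e_i^k=z^k-z_i^k$ (so that $\frac1n\sum_i e_i^k=0$), the no-sync drift is $\frac1n\sum_i\|e_i^k-\gamma(\bar g^k-g_i^k)\|^2$, which I would split with $\|a-b\|^2\le(1+\rho)\|a\|^2+(1+\rho^{-1})\|b\|^2$ for a suitable $\rho$ of order $p$ into a contracted copy of $V_k$ and the direction-spread term $\frac1n\sum_i\|g_i^k-\bar g^k\|^2\le\frac1n\sum_i\|g_i^k\|^2$.

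The crucial step is to bound the spread using SCAFFOLD's control variate. Since $g_i^k=\big(\hat G_i(z_i^k)-\hat G_i(\tilde z^k)\big)+G(\tilde z^k)$ and the common term $G(\tilde z^k)$ cancels from $g_i^k-\bar g^k$, only the difference $\hat G_i(z_i^k)-\hat G_i(\tilde z^k)$ drives the drift. Decomposing this into its mean part and the two noise parts and using $\beta$-smoothness plus the variance bound gives $\E\|\hat G_i(z_i^k)-\hat G_i(\tilde z^k)\|^2\lesssim\beta^2\|z_i^k-\tilde z^k\|^2+\sigma^2$. This is exactly where the heterogeneity $\zeta$ disappears (it never enters, because the control variate is client-specific) and where the $\sigma^2$ term of the bound is born.

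Next I would expand the drift-from-sync $\|z_i^k-\tilde z^k\|^2$, which accumulates over the round: $z_i^k-\tilde z^k=-\gamma\sum_{l=k'}^{k-1}g_i^l$, so by Jensen $\|z_i^k-\tilde z^k\|^2\le\gamma^2\tau_k\sum_l\|g_i^l\|^2$, where $\tau_k$ is the number of steps since the last sync. Writing $g_i^l=\big(G_i(z_i^l)-G_i(\tilde z^l)\big)+\big(G(\tilde z^l)-G(z^*)\big)$ (using $G(z^*)=0$) bounds $\|g_i^l\|^2$ by drift-from-sync plus the sync-point diversity $\sigma_l$, making the recursion self-referential. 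The first stepsize restriction $\gamma\le\frac{p}{4\beta\sqrt{(1-p)(2+p)}}$ is precisely what forces the self-coupling coefficient below one so the drift can be solved for, and taking expectation over the geometric round length $\tau_k$ (mean $1/p$) converts the $\tau_k,\tau_k^2$ factors into the $p^{-2},p^{-3}$ powers seen in the leading constant and in $H$.

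Finally I would assemble a recursion of the form $\E_k V_{k+1}\le(1-p)(1+\rho)V_k+c\,\beta^2\gamma^2\|z^k-z^*\|^2+(\text{noise in }\sigma^2,\sigma_k)$, multiply by $w_k=(1-\tfrac{\gamma\mu}{8})^{1-k}$ and sum; the second restriction $\gamma\le\frac{p}{2\mu}$ keeps the weight growth $w_{k+1}/w_k$ compatible with the $(1-p)$ contraction. The sync-point diversity terms are controlled by $\sigma_k\le\beta^2\|\tilde z^k-z^*\|^2$ and telescoped into $\sum_k w_k\|z^k-z^*\|^2$, so that only the initial $\sigma_0^2$ survives as the additive $H\gamma^2\sigma_0^2$. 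I expect the main obstacle to be this last bookkeeping: handling the self-referential drift and arranging the geometric sums so that the $\sigma_k$ terms for $k>0$ fold into the distance-to-optimum sum while leaving exactly $\sigma_0^2$, which is what pins down the precise constants $\tfrac{64}{3}$, $(2+p)$, $(8+p)$ and $H$.
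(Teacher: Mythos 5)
Your opening and closing moves match the paper's proof: condition on the synchronization coin (a sync zeroes the drift, giving the $(1-p)$ factor), split the no-sync step into mean and noise parts, apply Young's inequality with parameter of order $p$, and use the weight-growth bound coming from $\gamma\le\frac{p}{2\mu}$ to sum the recursion against $w_k$. You also correctly identify why $\zeta$ never appears (the control variate term $G(\tilde z^k)$ is common to all clients and cancels). But the core of your argument --- bounding the spread by $\beta^2\|z_i^k-\tilde z^k\|^2+\sigma^2$ and then expanding the drift-from-sync as $-\gamma\sum_{l=k'}^{k-1}g_i^l$ over the round --- is not what the paper does, and as sketched it has a genuine gap. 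Your self-referential inequality has self-coupling coefficient of order $\gamma^2\beta^2\tau_k^2$, and $\tau_k$ is an unbounded (geometric-type) random variable, so no choice of stepsize ``forces the self-coupling coefficient below one'' pathwise: with positive probability the round is long enough that the inequality cannot be solved for the drift. Making this route rigorous requires controlling exponential moments such as $\E\bigl[(1+c\gamma\beta)^{2\tau_k}\bigr]$ (which the stepsize restriction does make finite), together with a conditioning argument separating the round length from the trajectory --- substantially more than ``Jensen plus $\E\tau_k=1/p$, $\E\tau_k^2\approx 2/p^2$.''

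The paper avoids this entirely by never expanding within a round: Lemma~\ref{scaffoldlemma} bounds $\frac{1}{n}\sum_{i=1}^n\E\|\E_k g_i^k\|^2$ by inserting $z^*$, giving $4\beta^2\E V_k+4\beta^2\E\|z^k-z^*\|^2+2\E\sigma_k$ with $\sigma_k^2=\frac{1}{n}\sum_i\|G_i(\tilde z^k)-G_i(z^*)\|^2$, so the drift bound remains a one-step recursion whose $\E V_k$ term is absorbed using $\gamma\le\frac{p}{4\beta\sqrt{(1-p)(2+p)}}$; the sync-point quantity $\sigma_k^2$ then gets its own one-step recursion $\E\sigma_{k+1}^2\le(1-p)\E\sigma_k^2+p\beta^2\E\|z^k-z^*\|^2$, and unrolling that recursion back to $k=0$ is precisely what produces the standalone $H\gamma^2\E\sigma_0^2$ term. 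This also exposes a second flaw in your bookkeeping: if, as you propose, you bound the sync-point terms by $\beta^2\|\tilde z^k-z^*\|^2$ and fold them into $\sum_k w_k\|z^k-z^*\|^2$, nothing ``survives'' as $\sigma_0^2$ --- the $k=0$ contribution is just another distance term; the isolated $\sigma_0^2$ in the lemma exists because the paper refuses to make exactly that bound (which would require relating the lagged iterate $\tilde z^k=z^{k'}$ to current weights) and instead exploits the $(1-p)$-contraction of $\sigma_k^2$. Your plan can likely be repaired by replacing the within-round expansion with a per-step recursion for $\|z_i^k-\tilde z^k\|^2$, but as written the central step would fail.
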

\begin{proof}
	This proof is similar to that of \cite{gorbunov2020local}.
	\begin{align}
		\E_k V_{k+1} &= \frac{1}{n} \sum_{i=1}^n \E_k \|z_i^{k+1} - z^{k+1}\|^2 \\
		&= \frac{1 - p}{n} \sum_{i=1}^n \E_k \|z_i^k - \gamma g_i^k - z^k + \gamma g^k\|^2 \\
		&= \frac{1 - p}{n} \sum_{i=1}^n \|z_i^k - z^k - \gamma \E_k g_i^k + \gamma \E_k g^k\|^2 + \frac{(1-p) \gamma^2}{n} \sum_{i=1}^n \E_k \|g_i^k - \E_k g_i^k - (g^k - \E_k g^k)\|^2 \\
		&\leq (1-\frac{p}{2}) V_k + \frac{(1 - p)(2 + p) \gamma^2}{pn} \sum_{i=1}^n \|\E_k g_i^k\|^2 + \frac{(1-p) \gamma^2}{n} \sum_{i=1}^n \E_k \|g_i^k - \E_k g_i^k \|^2
	\end{align}
	Taking full expectation together with Lemma~\ref{scaffoldlemma}, and letting $r_k := \E \|z^k - z^*\|^2$,
	\begin{align}
		\E V_{k+1} &\leq (1-\frac{p}{2}) \E V_k + \frac{(1 - p)(2 + p) \gamma^2}{pn} \sum_{i=1}^n \E \|\E_k g_i^k\|^2 + \frac{(1-p) \gamma^2}{n} \sum_{i=1}^n \E \|g_i^k - \E_k g_i^k \|^2 \\
		&\leq (1-\frac{p}{2}) \E V_k + \frac{(1 - p)(2 + p) \gamma^2}{p} [4 \beta^2 \E V_k + 4 \beta^2 r_k + 2 \E \sigma_k] + (1-p) \gamma^2 \sigma^2  \\
	\end{align}
	If we take $\gamma$ s.t. $\frac{(1-p)(2+p) 4 \beta^2 \gamma^2}{p} \leq \frac{p}{4} \implies \gamma \leq \frac{p}{4 \beta\sqrt{(1-p)(2+p)}}$ then
	\begin{align}
		\E V_{k+1} \leq (1 - \frac{p}{4}) \E V_k + \frac{(1 - p)(2 + p) \gamma^2}{p}[4 \beta^2 r_l + 2 \E \sigma_l] + (1-p) \gamma^2 \sigma^2
	\end{align}
	By unrolling we get
	\begin{align}
		\E V_{k+1} \leq \frac{(1 - p)(2 + p) \gamma^2}{p} \sum_{l=0}^k (1 - \frac{p}{4})^{k-l} [4 \beta^2 r_l + 2 \E \sigma_l] + (1-p) \gamma^2 \sigma^2 \sum_{l=0}^k (1 - \frac{p}{4})^{k-l}
	\end{align}
	And so altogether
	\begin{align}
		\label{altogether}
		\sum_{k=0}^K w_k \E V_{k} &\leq \frac{(1 - p)(2 + p) \gamma^2}{(1 - \frac{p}{4})p} \sum_{k=0}^K \sum_{l=0}^k (1 - \frac{p}{4})^{k-l} w_k [4 \beta^2 r_l + 2 \E \sigma_l] \\
		& \ \ \ \frac{(1-p) \gamma^2 \sigma^2}{n} \sum_{k=0}^K \sum_{l=0}^k (1 - \frac{p}{4})^{k-l -1} w_k
	\end{align}

	Observe that if $\gamma \leq \frac{p}{2\mu}$,
	\begin{align}
		\label{wk}
		w_k = (1 - \frac{\gamma \mu}{8})^{-(k-i + 1)} (1 - \frac{\gamma \mu}{8})^{-i} \leq w_{k-1} (1 + \frac{\gamma\mu}{4})^i \leq w_{k-i} (1 + \frac{p}{8})
	\end{align}
	And so 
	\begin{align}
		\sum_{k=0}^K \sum_{l=0}^k (1 - \frac{p}{4})^{k-l} w_k [4 \beta^2 r_l + 2 \E \sigma_l] 
		&\leq \sum_{k=0}^K \sum_{l=0}^k (1 - \frac{p}{4})^{k-l} (1 + \frac{p}{8})^{k-l} w_l [4 \beta^2 r_l + 2 \E \sigma_l] \\
		&\leq \sum_{k=0}^K \sum_{l=0}^k (1 - \frac{p}{8})^{k-l} w_l [4 \beta^2 r_l + 2 \E \sigma_l] \\
		&\leq (\sum_{k=0}^K w_k [4 \beta^2 r_l + 2 \E \sigma_l])(\sum_{k=0}^{\infty} (1 - \frac{p}{8})^k) \\
		\label{finalp}
		&= \frac{8}{p} \sum_{k=0}^K w_k [4 \beta^2 r_l + 2 \E \sigma_l]
	\end{align}
	and also 
	\begin{align}
		\sum_{k=0}^K \sum_{l=0}^k (1 - \frac{p}{4})^{k-l-1} w_k \leq (\sum_{k=0}^K w_k)(\sum_{k=0}^{\infty} (1 - \frac{p}{4})^k) = \frac{4 W_K}{p}
	\end{align}
	Plugging back in to (\ref{altogether}) and using $1 - \frac{p}{4} \geq \frac{3}{4}$,
	\begin{align}
		\label{backtodrift}
		\sum_{k=0}^K w_k \E V_{k} \leq \frac{32 (1 - p)(2 + p) \gamma^2}{3 p^2}\sum_{k=0}^K w_k [4 \beta^2 r_k + 2 \E \sigma_k] + \frac{4 (1-p) \gamma^2 \sigma^2 W_K}{p}  
	\end{align}
	Now observe that from Lemma~\ref{scaffoldlemma},
	\begin{align}
		\E \sigma_{k+1}^2 &\leq (1 - p) \E \sigma_k^2 + p \beta^2 r_k \\
		&\leq (1 - p)^{k+1} \E \sigma_0^2 + \sum_{l=0}^k (1 - p)^{k-l} p \beta^2 r_l
	\end{align}
	And so we have from (\ref{wk}) and using the calculations ending at (\ref{finalp})
	\begin{align}
		\sum_{k=0}^K w_k \E \sigma_k^2 &\leq \E \sigma_0^2 \sum_{k=0}^K w_k (1 - p)^{k} + \frac{p \beta^2}{1-p}\sum_{k=0}^K \sum_{l=0}^k  (1 - p)^{k-l} w_k r_l \\
		&\leq \E \sigma_0^2 \sum_{k=0}^K w_k (1 - p)^{k} + \frac{p \beta^2}{1-p}\sum_{k=0}^K \sum_{l=0}^k (1 - \frac{p}{4})^{k-l} w_k r_l \\
		\label{sigmas}
		&\leq \E \sigma_0^2 \sum_{k=0}^K w_k (1 - p)^{k} + \frac{ 8 p \beta^2}{p(1-p)}\sum_{k=0}^K w_k r_k
	\end{align}
	And using the fact that $w_k \leq (1 + \frac{\gamma \mu}{4})^{k+1} \leq (1 + \frac{p}{8})^{k+1}$,
	\begin{align}
		\sum_{k=0}^K w_k (1 - p)^{k} &\leq (1 + \frac{p}{8}) \sum_{k=0}^K (1 + \frac{p}{8})^k (1 - p)^k \\
		&\leq (1 + \frac{p}{8})\sum_{k=0}^K (1 - \frac{p}{2})^k \\
		&\leq \frac{2(1 + \frac{p}{8})}{p} = \frac{8 + p}{4p}
	\end{align}
	So plugging back into (\ref{sigmas}),
	\begin{align}
		\sum_{k=0}^K w_k \E \sigma_k^2 \leq \frac{\E \sigma_0^2 (8 + p)}{4p} + \frac{ 8  \beta^2}{(1-p)}\sum_{k=0}^K w_k r_k
	\end{align}
	And now plugging back to (\ref{backtodrift}),
	\begin{align}
		\sum_{k=0}^K w_k \E V_{k} &\leq \frac{128 (1 - p)(2 + p)  \beta^2\gamma^2}{3 p^2}\sum_{k=0}^K w_k r_k + \frac{64 (1 - p)(2 + p) \gamma^2}{3 p^2} \E w_k \sigma_k + \frac{4 (1-p) \gamma^2 \sigma^2 W_K}{p} \\
		&\leq \frac{128 (1 - p)(2 + p)  \beta^2\gamma^2}{3 p^2}\sum_{k=0}^K w_k r_k \\
		& \ \ \ + \frac{64 (1 - p)(2 + p) \gamma^2}{3 p^2} [\frac{\E \sigma_0^2 (8 + p)}{4p} + \frac{ 8  \beta^2}{(1-p)}\sum_{k=0}^K w_k r_k] \\
		& \ \ \ + \frac{4 (1-p) \gamma^2 \sigma^2 W_K}{p} \\
		&= \frac{64 (1 - p)(2 + p) \beta^2 \gamma^2 }{3 p^2}(2 + \frac{8}{1 - p}) \sum_{k=0}^K w_k r_k + \frac{64 (1 - p)(2 + p)(8 + p) \gamma^2}{12 p^3} \E \sigma_0^2 \\
		& \ \ \ \ \ \ \ \ + \frac{4 (1-p) \gamma^2 \sigma^2 W_K}{p}
	\end{align}
	Let $H := \frac{64 (1 - p)(2 + p)(8 + p)}{12 p^3} $, and we are done.
\end{proof}
\subsection{Low Heterogeneity: Unscaled Stepsizes}
\begin{theorem}
    \label{scaffold:unscaled}
	If we set 
	\begin{align}
		\gamma_l = \gamma_g = 
			\gamma = \min \{\gamma_{\max}, \frac{\log(\max\{2, \min \{\frac{a \mu^2 K^2}{c_1}, \frac{a \mu^3 K^3}{c_2}\} \})}{c_3 \mu K}\}
	\end{align}
	where $a = \frac{\|z^0 - z^*\|^2}{2}$, $c_3 = \frac{1}{4}$, $c_1 = \frac{\sigma^2}{2n}$, $c_2 = \frac{2 \beta^2}{\mu}( \frac{4 c \zeta^2}{p^2} + \frac{2 \sigma^2}{p})$, $\gamma_{\max} = \frac{\mu}{4 \beta^2}$, and let $w_k = (1 - \frac{\gamma \mu}{4})^{1-k}$ such that we return $\frac{1}{W_K} \sum_{k=0}^K w_k z^k$, $c = 9$, and $W_K = \sum_{k=0}^K w_k$ then SCAFFOLD-S an upper bound on expected communication complexity of 
	\begin{align}
		 \tilde{\mathcal{O}}(\frac{p \beta^2}{\mu^2} + \frac{p \sigma^2}{n \mu \epsilon}  + \frac{p^{1/2} \beta \sigma}{\mu^{3/2}\epsilon^{1/2} } + \frac{\beta \zeta}{ \mu^{3/2} \epsilon^{1/2} })
	\end{align}
	with respect to $\gap^*(\cdot)$, and an expected communication complexity of 
	\begin{align}
		\tilde{\mathcal{O}}(\frac{p \beta^2}{\mu^2} + \frac{p \sigma^2}{n \mu^2 \epsilon}  + \frac{p^{1/2} \beta \sigma}{\mu^{2}\epsilon^{1/2} }  + \frac{\beta \zeta}{ \mu^{2} \epsilon^{1/2} })
	\end{align}
	with respect to distance to optimum.
\end{theorem}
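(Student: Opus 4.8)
The statement to be proved is word-for-word identical to that of Theorem~\ref{fedavg:proof} for FedAvg-S, so the plan is to reproduce that argument, replacing the plain stochastic direction by the variance-reduced SCAFFOLD-S direction $g_i^k = \hat{G}_i(z_i^k) - \hat{G}_i(\tilde z^k) + G(\tilde z^k)$ and checking that every inequality survives the substitution. The backbone is the one-step gap bound of Lemma~\ref{gradientdescent},
\begin{align*}
	\gap^*(z^k) \leq \frac{(1 - \gamma\mu + \tfrac{\gamma\mu}{\alpha})\|z^k - z^*\|^2 - \E_k\|z^{k+1} - z^*\|^2}{2\gamma} + \frac{\alpha\beta^2}{2\mu} V_k + \frac{\gamma}{2}\E_k\|g^k\|^2,
\end{align*}
which only invokes convex-concavity together with the fact that the averaged direction $g^k$ is an unbiased estimate of $G$, and therefore applies verbatim to SCAFFOLD-S.

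First I would take full expectation and use Lemma~\ref{scaffoldlemma} to control the second moment, $\E_k\|g^k\|^2 \leq 2\beta^2 \E V_k + 2\beta^2\E\|z^k - z^*\|^2 + \tfrac{\sigma^2}{n}$; this is the identical bound already used in the FedAvg-S and scaled SCAFFOLD-S proofs, so it is available unchanged. Choosing $\alpha = 2$ and using $\gamma \leq \tfrac{1}{\beta} \leq \tfrac1\mu$ collapses the coefficient of $\E V_k$ to $\tfrac{2\beta^2}{\mu}$ and leaves a recursion with contraction factor $(1 - \tfrac{\gamma\mu}{2} + \gamma^2\beta^2)$ acting on $\E\|z^k-z^*\|^2$.

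The crux is the client-drift term $\E V_k$, and this is where the SCAFFOLD-S direction must be handled on its own. I would bound it by the heterogeneity-based drift estimate of Lemma~\ref{heterodrift}, giving $\E V_k \leq \gamma^2\big(\tfrac{4c\zeta^2}{p^2} + \tfrac{2\sigma^2}{p}\big)$ with $c = 9$. The key point to verify is that although the SCAFFOLD-S direction carries the extra control variate $-\hat G_i(\tilde z^k) + G(\tilde z^k)$, in the dispersion $g_i^k - g^k$ this correction enters only through per-client gradient differences, so the spread of the local directions is still governed by the same $\zeta$ and $\sigma$ quantities and does not inflate the bound under unscaled stepsizes. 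This is precisely the regime in which SCAFFOLD-S does \emph{not} yet exploit its variance reduction, which is why the constant $c_2 = \tfrac{2\beta^2}{\mu}\big(\tfrac{4c\zeta^2}{p^2} + \tfrac{2\sigma^2}{p}\big)$ matches FedAvg-S exactly. Substituting this drift bound and further restricting $\gamma \leq \gamma_{\max} = \tfrac{\mu}{4\beta^2}$ absorbs the $\gamma^2\beta^2$ term so that the recursion contracts with factor $(1 - \tfrac{\gamma\mu}{4})$, matching the weights $w_k = (1 - \tfrac{\gamma\mu}{4})^{1-k}$.

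Finally I would take the $w_k$-weighted average, telescope the contraction, and feed the resulting bound of the form $\tfrac{a}{\gamma}\exp(-c_3\mu\gamma K) + c_1 \gamma + c_2 \gamma^2$ (with $a,c_1,c_2,c_3$ as defined and $a = \tfrac12\|z^0-z^*\|^2$) into Lemma~\ref{linearconvergence} to obtain the stated $\tilde O$ rate for $\E\gap^*(z')$; strong convex-concavity, $\tfrac{\mu}{2}\|z'-z^*\|^2 \leq \gap^*(z')$, then converts this to the distance-to-optimum rate, and solving each for $\epsilon$ and multiplying by $p$ (the expected number of communication rounds per iteration) yields the two communication complexities. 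The main obstacle is the drift step: confirming that the variance-reduced SCAFFOLD-S direction obeys the same drift estimate (Lemma~\ref{heterodrift}) as FedAvg-S in the unscaled regime, i.e. that the control variate introduces no extra client-drift term of higher order in $1/p$. The complementary scaled-stepsize analysis of Lemma~\ref{lemma:scaffolddrift} illustrates what a genuinely SCAFFOLD-specific drift bound looks like, and one must check that in the unscaled regime it does not improve upon the plain heterogeneity estimate used here.
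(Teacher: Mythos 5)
Your proposal is correct and takes essentially the same approach as the paper: the paper's own proof of Theorem~\ref{scaffold:unscaled} is literally the single line ``This is the same proof as for Theorem~\ref{fedavg:proof},'' and every lemma you invoke (Lemma~\ref{gradientdescent}, Lemma~\ref{scaffoldlemma}, Lemma~\ref{heterodrift}, Lemma~\ref{linearconvergence}) is already stated in the paper for both FedAvg-S and SCAFFOLD-S. The one point you flag as needing verification---that the control variate does not inflate the drift bound---is exactly what the paper's Lemma~\ref{heterodrift} handles, where the SCAFFOLD-S direction contributes the extra term $\gamma(G_i(\tilde{z}^k) - G_j(\tilde{z}^k))$ and raises the constant from $c=4$ to $c=9$, which is precisely the $c=9$ appearing in the theorem statement.
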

\begin{proof}
    This is the same proof as for Theorem~\ref{fedavg:proof}.
\end{proof}

\section{SCAFFOLD-Catalyst-S}

$\E_t[\cdot]$ be the expectation conditioned on everything up to the $t$-th meta-iteration, and $\sigma_0^2(t) = \frac{1}{n} \sum_{i=1}^n \|G_i^{\theta}(\bar{z}^{t-1}; \bar{z}^{t-1}) - G_i^{\theta}(\prox_{\frac{1}{\theta} f} (\bar{z}^{t-1}); \bar{z}^{t-1})\|^2$.  

We define the following proximal operator for minimax optimization:
\begin{align}
	\prox_g(z') = \arg \min_x \max_y g(x,y) + \frac{1}{2} \|x - x'\|^2 - \frac{1}{2} \|y - y'\|^2
\end{align}
where $z' = (x', y')$.

Observe that if $g$ is $\mu$-strongly convex-concave, then we have the following relation:
\begin{align}
	\|\prox_{\frac{1}{\theta}g}(z_1) - \prox_{\frac{1}{\theta}g}(z_2)\| \leq (1 - \frac{\mu}{\theta + \mu})\|z_1 - z_2\|
\end{align}
This is the strongly convex-concave variant of the ``nonexpansiveness" property of proximal operators \cite{parikh2014proximal}.

\begin{algorithm}[tb]
    \caption{SCAFFOLD-Catalyst-S}
    \label{alg:catalyst2}
 \begin{algorithmic}
    \STATE {\bfseries Server Input:} regularization $\theta$, initial meta-iterate $\bar{z}^0$,
    \STATE probability of communication $p \in (0,1]$, target accuracy $\epsilon$
    \STATE {\bfseries Client Input:} local function $f_i$ 
    \FOR{$t=0, 1, \dots$}
        \STATE {\bfseries communicate} $\bar{z}^t$ to all clients 
        \FOR{each client $i$ in parallel}
            \STATE {\bfseries set} $f_i^{\theta}(z, \xi) = f_i(z, \xi) + \frac{\theta}{2}\|x - \bar{x}^t\|^2 - \frac{\theta}{2}\|y - \bar{y}^t\|^2$
        \ENDFOR
        \STATE $\bar{z}^{t+1} \gets \text{SCAFFOLD-S}(\{f_i^{\theta}\}, p)$ s.t. $\E_t \|\bar{z}^{t+1} - \prox_{\frac{1}{\theta}f}(\bar{z}^t)\|^2 \leq (\frac{\mu}{2(\theta + \mu)})^2 \epsilon$
    \ENDFOR
    \STATE {\bfseries RETURN} $z^T$
 \end{algorithmic}
 \end{algorithm}

 In this section, we will prove the convergence properties of Algorithm~\ref{alg:catalyst2}.  Its counterpart in the main paper was slightly modified for simplicity of presentation.

 Once again, while Theorem~\ref{catalyst:zero} and Theorem~\ref{heterocatalyst} suffice to get the guarantee in Theorem~\ref{catalysttheorem}, we also provide Theorem~\ref{small:theorem}.  This provides a communication complexity guarantee under more realistic settings of stepsizes in the federated setting.  For further discussion on this, see Remark~\ref{scaffold-s:remark}.

\subsection{Arbitrary Heterogeneity: Zero Local Stepsize}
\begin{theorem}
    \label{catalyst:zero}
	By setting $\theta= \beta - \mu$ and running SCAFFOLD-Catalyst-S using the configuration where SCAFFOLD-S is run using the setting of Theorem~\ref{scaffold:minibatch}, the communication complexity of SCAFFOLD-Catalyst-S is 

	\begin{align}
		\tilde{O}(\frac{\beta}{\mu})
	\end{align}
	
\end{theorem}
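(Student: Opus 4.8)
The plan is to analyze SCAFFOLD-Catalyst-S as an \emph{inexact proximal point method} on $f$, in which the outer meta-iterations play the role of prox steps and each inner call to SCAFFOLD-S approximately evaluates one prox. First I would note that since $z^*$ is the minimax-optimal point of $f$, it is a fixed point of $\prox_{\frac{1}{\theta}f}$, so the stated strongly-convex-concave nonexpansiveness property yields the contraction $\|\prox_{\frac{1}{\theta}f}(\bar z^t) - z^*\| \le \rho \|\bar z^t - z^*\|$ with $\rho := 1 - \frac{\mu}{\theta+\mu}$. Writing $z_\star^{t+1} := \prox_{\frac{1}{\theta}f}(\bar z^t)$ for the exact prox, the inner-accuracy requirement built into Algorithm~\ref{alg:catalyst2} directly controls $\E_t\|\bar z^{t+1} - z_\star^{t+1}\|^2$, so the remaining work at the outer level is only to propagate this error through the contraction.

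Let $D_t := \E\|\bar z^t - z^*\|^2$. Splitting $\bar z^{t+1} - z^* = (\bar z^{t+1} - z_\star^{t+1}) + (z_\star^{t+1} - z^*)$ and applying Young's inequality with weight $\eta = \frac{1-\rho}{\rho}$, chosen so that the contraction factor on the $\|z_\star^{t+1}-z^*\|^2 \le \rho^2\|\bar z^t - z^*\|^2$ term collapses back to $\rho$, together with the inner bound $\E_t\|\bar z^{t+1}-z_\star^{t+1}\|^2 \le (\frac{\mu}{2(\theta+\mu)})^2\epsilon = (\frac{1-\rho}{2})^2\epsilon$, I obtain after taking full expectation
\begin{align}
D_{t+1} \le \rho\, D_t + \tfrac{1-\rho}{4}\,\epsilon .
\end{align}
Unrolling this geometric recursion gives $D_T \le \rho^T D_0 + \frac{\epsilon}{4}$, so $D_T \le \epsilon$ as soon as $\rho^T D_0 \le \frac{3\epsilon}{4}$, i.e. after $T = \tilde{O}(\frac{1}{1-\rho}) = \tilde{O}(\frac{\theta+\mu}{\mu})$ meta-iterations, using $\log(1/\rho) \asymp 1-\rho$.

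For the per-meta-iteration cost I would observe that the regularization turns each subproblem into a $(\mu+\theta)$-strongly-convex-concave, $(\beta+\theta)$-smooth problem, so Theorem~\ref{scaffold:minibatch}, applied with these regularized constants and $\gamma_l=0$, solves it in $\tilde{O}(\frac{(\beta+\theta)^2}{(\mu+\theta)^2})$ communication rounds; because that rate is linear, the target precision $(\frac{\mu}{2(\theta+\mu)})^2\epsilon$ and the warm-start distance $\|\bar z^t - z_\star^{t+1}\| \le 2\|\bar z^t - z^*\|$ enter only logarithmically and are absorbed into $\tilde{O}$. Multiplying the outer and inner counts gives $\tilde{O}(\frac{\theta+\mu}{\mu}\cdot\frac{(\beta+\theta)^2}{(\mu+\theta)^2})$ total rounds, and substituting $\theta = \beta-\mu$ yields $\frac{\theta+\mu}{\mu} = \frac{\beta}{\mu}$ and $\frac{(\beta+\theta)^2}{(\mu+\theta)^2} = \frac{(2\beta-\mu)^2}{\beta^2} \le 4 = \tilde{O}(1)$, for the claimed $\tilde{O}(\frac{\beta}{\mu})$.

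The main obstacle I anticipate is the bookkeeping of the inexact-solve error: the prox contraction is a \emph{deterministic} bound on norms, whereas both the inner guarantee and the desired conclusion are on \emph{expected squared} norms, so one must pass through Young's inequality rather than the triangle inequality plus Jensen (which would only control $\E\|\cdot\|$, not $\E\|\cdot\|^2$), and the splitting weight must be tuned so the effective rate stays $\rho$ while the additive term remains proportional to $(1-\rho)\epsilon$ — this proportionality is exactly what makes the error floor $\tfrac{\epsilon}{4}$ rather than something that scales with $\frac{1}{1-\rho}$. A secondary point is checking that the warm-start distance for each subproblem stays within a constant factor of $\|\bar z^0 - z^*\|$, so the inner logarithmic factor does not grow with $t$; this follows because $D_t$ is nonincreasing down to the $\tfrac{\epsilon}{4}$ floor.
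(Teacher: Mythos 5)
Your proposal is correct and follows essentially the same route as the paper's proof: the outer loop is treated as an inexact proximal point method, the prox contraction $1-\frac{\mu}{\theta+\mu}$ is combined with the built-in stopping criterion $\E_t\|\bar z^{t+1}-\prox_{\frac{1}{\theta}f}(\bar z^t)\|^2 \le (\tfrac{\mu}{2(\theta+\mu)})^2\epsilon$ to get a geometric recursion with an $O(\epsilon)$ floor, giving $T=\tilde O(\beta/\mu)$ meta-iterations, each costing $\tilde O(1)$ rounds via Theorem~\ref{scaffold:minibatch} because the regularized subproblem has condition number at most $2$. The only difference is bookkeeping: where you tune a Young-inequality weight to run the recursion directly on $D_t=\E\|\bar z^t-z^*\|^2$, the paper applies Minkowski's inequality (the triangle inequality for the $L^2$ norm of random vectors) and runs the same recursion on $(\E\|\bar z^t-z^*\|^2)^{1/2}$, so the ``main obstacle'' you anticipate -- that a triangle-inequality argument would only control $\E\|\cdot\|$ rather than $\E\|\cdot\|^2$ -- does not actually arise in the paper's treatment.
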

\begin{proof}
	First, by Minkowski's inequality,
	\begin{align}
		\label{minkowskis:minibatch}
		(\E \| \bar{z}^t - z^*\|^2)^{1/2} \leq (\E\| \bar{z}^t - \prox_{\frac{1}{\theta} f} (\bar{z}^{t-1})\|^2)^{1/2} + (\E \| \prox_{\frac{1}{\theta}f}(\bar{z}^{t-1}) - z^*\|^2)^{1/2}
	\end{align}
	Now observe that by strong convex-concavity, nonexpansiveness of the proximal operator, and the setting of $\theta = \beta - \mu$
	\begin{align}
		\| \prox_{\frac{1}{\theta}f}(\bar{z}^{t-1}) - z^*\|^2 &= \| \prox_{\frac{1}{\theta}f}(\bar{z}^{t-1}) - \prox_{\frac{1}{\theta}f}(z^*) \|^2 \\
		&\leq (1 - \frac{\mu}{\beta})^2  \|\bar{z}^{t-1} - z^*\|^2
	\end{align}
	Notice that the stopping criteria for SCAFFOLD-S is, with the setting of $\theta = \beta - \mu$,
	\begin{align}
		\label{minibatch:stop}
		\E_{t-1} \|\bar{z}^t - \prox_{\frac{1}{\theta} f} (\bar{z}^{t-1})\|^2 \leq (\frac{\mu}{2\beta})^2 \epsilon
	\end{align}
	We can plug back into (\ref{minkowskis:minibatch}):
	\begin{align}
		(\E \| \bar{z}^t - z^*\|^2)^{1/2} \leq \frac{\mu}{2 \beta} \epsilon^{1/2} + (1 - \frac{\mu}{\beta}) \E \|\bar{z}^{t-1} - z^*\|
	\end{align}
	If we unroll the recurrence, we get 
	\begin{align}
		(\E \| \bar{z}^T - z^*\|^2)^{1/2} \leq  (1 - \frac{\mu}{\beta})^T \| \bar{z}^0 - z^*\| + \epsilon^{1/2}
	\end{align}
	By setting $T = \frac{\beta}{\mu} \log(\frac{2 \|\bar{z}^0 - z^*\|}{\epsilon^{1/2}})$
	Then we have that 
	\begin{align}
		\E \| \bar{z}^T - z^*\|^2 \leq \epsilon
	\end{align}
	Now returning to (\ref{minibatch:stop}), we can see from Theorem~\ref{scaffold:minibatch} that this would take $\tilde{O}(1)$ communication rounds for each meta-iteration.  Therefore, the total communication complexity is $\tilde{O}(\frac{\beta}{\mu})$.

\end{proof}

\subsection{Arbitrary Heterogeneity: Scaled Stepsizes}
\begin{theorem}
	\label{small:theorem}
	By setting $\theta = \beta - \mu$ and running SCAFFOLD-Catalyst-S using the configuration where SCAFFOLD-S is run using the setting of Theorem~\ref{scaffold:scaled}, the communication complexity of SCAFFOLD-Catalyst-S is 
	\begin{align}
		 \tilde{O}(\frac{p \beta \sigma^2}{n \mu^3 \epsilon} + \frac{p^{1/2} \beta \sigma}{ \mu^2 \epsilon^{1/2}} +  \frac{\beta}{\mu})
	\end{align}
\end{theorem}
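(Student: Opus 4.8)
The plan is to reuse the outer proximal-point analysis of Theorem~\ref{catalyst:zero} essentially verbatim, and to change only the accounting of the cost of each inner SCAFFOLD-S call. Since both the stopping criterion of Algorithm~\ref{alg:catalyst2} and the choice $\theta = \beta - \mu$ are identical to those used in Theorem~\ref{catalyst:zero}, the meta-iterate recurrence is unchanged: Minkowski's inequality, nonexpansiveness of $\prox_{\frac{1}{\theta}f}$, and the stopping criterion together give
\begin{align}
    (\E \|\bar{z}^t - z^*\|^2)^{1/2} \leq \tfrac{\mu}{2\beta}\epsilon^{1/2} + (1 - \tfrac{\mu}{\beta})(\E \|\bar{z}^{t-1} - z^*\|^2)^{1/2},
\end{align}
so that $T = \tilde{O}(\beta/\mu)$ meta-iterations drive $\E \|\bar{z}^T - z^*\|^2 \leq \epsilon$. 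Hence the \emph{number} of meta-iterations is the same as in the zero-local-stepsize case; what differs is only the number of communication rounds spent inside each one. (The stopping criterion is a conditional statement $\E_t\|\bar z^{t+1}-\prox_{\frac{1}{\theta}f}(\bar z^t)\|^2\le(\tfrac{\mu}{2\beta})^2\epsilon$; because the randomness of the $t$-th SCAFFOLD-S call is independent of the past, taking total expectation recovers the displayed recurrence.)

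First I would pin down the inner subproblem. Regularizing by $\theta = \beta - \mu$ turns (\ref{eq:regularizedglobal}) into a problem of smoothness $\beta + \theta = 2\beta - \mu = \Theta(\beta)$ and strong convex-concavity $\mu + \theta = \beta$, hence of condition number $\Theta(1)$; the query variance $\sigma^2$, the synchronization probability $p$, and the number of clients $n$ are unchanged by the deterministic regularizer. The stopping criterion requires solving this subproblem to accuracy $\epsilon' = (\tfrac{\mu}{2(\theta+\mu)})^2\epsilon = \tfrac{\mu^2}{4\beta^2}\epsilon$. Next I would substitute these subproblem parameters into the distance-to-optimum rate of Theorem~\ref{scaffold:scaled}, i.e. replace $\beta \mapsto 2\beta - \mu$, $\mu \mapsto \beta$, and $\epsilon \mapsto \epsilon'$:
\begin{align}
    \tilde{O}\Big(\tfrac{p \sigma^2}{n \beta^2 \epsilon'} + \tfrac{p^{1/2}(2\beta-\mu) \sigma}{\beta^2 (\epsilon')^{1/2}} + \tfrac{(2\beta-\mu)^2}{\beta^2}\Big) = \tilde{O}\Big(\tfrac{p \sigma^2}{n \mu^2 \epsilon} + \tfrac{p^{1/2} \sigma}{\mu \epsilon^{1/2}} + 1\Big),
\end{align}
using $\epsilon' = \tfrac{\mu^2}{4\beta^2}\epsilon$ and $2\beta - \mu = \Theta(\beta)$. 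Multiplying this per-meta-iteration cost by the $T = \tilde{O}(\beta/\mu)$ meta-iterations yields exactly
\begin{align}
    \tilde{O}\Big(\tfrac{p \beta \sigma^2}{n \mu^3 \epsilon} + \tfrac{p^{1/2}\beta \sigma}{\mu^2 \epsilon^{1/2}} + \tfrac{\beta}{\mu}\Big),
\end{align}
which is the claimed bound.

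The step I expect to be the main obstacle is making this substitution rigorous, because Theorem~\ref{scaffold:scaled} carries warm-start data that must be controlled uniformly across meta-iterations. Concretely, its rate depends, through the parameter $a = \tfrac{\|z^0 - z^*\|^2}{2} + \tfrac{2\beta^2}{\mu}H\gamma^3\sigma_0^2$, on the inner initial distance $\|\bar{z}^t - \prox_{\frac{1}{\theta}f}(\bar{z}^t)\|^2$ and on the initial gradient-dissimilarity $\sigma_0^2(t)$; crucially, these quantities sit inside a \emph{sublinear}, noise-limited rate rather than the purely linear rate analyzed in \cite{lin2015universal,balamurugan2016stochastic}, so the standard catalyst arguments do not directly apply. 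I would bound them uniformly in $t$: using $\prox_{\frac{1}{\theta}f}(z^*) = z^*$ (valid since $\nabla f(z^*)=0$) together with the relaxed triangle inequality and nonexpansiveness gives $\|\bar{z}^t - \prox_{\frac{1}{\theta}f}(\bar{z}^t)\|^2 = O(\|\bar{z}^t - z^*\|^2)$, and $(\beta+\theta)$-smoothness of the $f_i^\theta$ gives $\sigma_0^2(t) = O(\beta^2\|\bar{z}^t - z^*\|^2)$, while the meta-recurrence above shows $\E\|\bar{z}^t - z^*\|^2 = O(\|\bar{z}^0 - z^*\|^2 + \epsilon)$ for every $t$. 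Since $a$ enters only the exponentially decaying term of Theorem~\ref{scaffold:scaled}, these uniform bounds contribute only logarithmic factors, absorbed into $\tilde{O}$; this is precisely the extra noise-handling alluded to in the remark following Theorem~\ref{catalysttheorem}.
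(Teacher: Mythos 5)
Your proposal is correct and follows essentially the same route as the paper's proof: the outer proximal-point recurrence (Minkowski, nonexpansiveness, stopping criterion) is reused verbatim from Theorem~\ref{catalyst:zero} to get $T = \tilde{O}(\beta/\mu)$ meta-iterations, and the per-meta-iteration cost $\tilde{\mathcal{O}}(\frac{p\sigma^2}{n\mu^2\epsilon} + \frac{p^{1/2}\sigma}{\mu\epsilon^{1/2}} + 1)$ is obtained exactly as you do, by instantiating Theorem~\ref{scaffold:scaled} with smoothness $2\beta-\mu$, strong convex-concavity $\beta$, and target accuracy $(\frac{\mu}{2\beta})^2\epsilon$. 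Your additional uniform control of the warm-start quantities $\|\bar{z}^t - \prox_{\frac{1}{\theta}f}(\bar{z}^t)\|^2$ and $\sigma_0^2(t)$ is a point the paper leaves implicit, and your treatment of it (they enter only the exponentially decaying term, hence cost only logarithmic factors) is sound.
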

\begin{proof}
	First, by Minkowski's inequality,
	\begin{align}
		\label{minkowskis:scaled}
		(\E \| \bar{z}^t - z^*\|^2)^{1/2} \leq (\E\| \bar{z}^t - \prox_{\frac{1}{\theta} f} (\bar{z}^{t-1})\|^2)^{1/2} + (\E \| \prox_{\frac{1}{\theta}f}(\bar{z}^{t-1}) - z^*\|^2)^{1/2}
	\end{align}
	Now observe that by strong convex-concavity, nonexpansiveness of the proximal operator, and the setting of $\theta = \beta - \mu$
	\begin{align}
		\| \prox_{\frac{1}{\theta}f}(\bar{z}^{t-1}) - z^*\|^2 &= \| \prox_{\frac{1}{\theta}f}(\bar{z}^{t-1}) - \prox_{\frac{1}{\theta}f}(z^*) \|^2 \\
		&\leq (1 - \frac{\mu}{\beta})^2  \|\bar{z}^{t-1} - z^*\|^2
	\end{align}
	Notice that the stopping criteria for SCAFFOLD-S is, with the setting of $\theta = \beta - \mu$,
	\begin{align}
		\label{scaled:stop}
		\E_{t-1} \|\bar{z}^t - \prox_{\frac{1}{\theta} f} (\bar{z}^{t-1})\|^2 \leq (\frac{\mu}{2\beta})^2 \epsilon
	\end{align}
	We can plug back into (\ref{minkowskis:scaled}):
	\begin{align}
		(\E \| \bar{z}^t - z^*\|^2)^{1/2} \leq \frac{\mu}{2 \beta} \epsilon^{1/2} + (1 - \frac{\mu}{\beta}) \E \|\bar{z}^{t-1} - z^*\|
	\end{align}
	If we unroll the recurrence, we get 
	\begin{align}
		(\E \| \bar{z}^T - z^*\|^2)^{1/2} \leq  (1 - \frac{\mu}{\beta})^T \| \bar{z}^0 - z^*\| + \epsilon^{1/2}
	\end{align}
	By setting $T = \frac{\beta}{\mu} \log(\frac{2 \|\bar{z}^0 - z^*\|}{\epsilon^{1/2}})$
	Then we have that 
	\begin{align}
		\E \| \bar{z}^T - z^*\|^2 \leq \epsilon
	\end{align}
	Now returning to (\ref{scaled:stop}), we can see from Theorem~\ref{scaffold:scaled} that this would take $\tilde{\mathcal{O}}(\frac{p \sigma^2}{n \mu^2 \epsilon} + \frac{p^{1/2} \sigma}{ \mu \epsilon^{1/2}} +  1)$ communication rounds for each meta-iteration.  Therefore, the total communication complexity is $\tilde{O}(\frac{p \beta \sigma^2}{n \mu^3 \epsilon} + \frac{p^{1/2} \beta \sigma}{ \mu^2 \epsilon^{1/2}} +  \frac{\beta}{\mu})$.

\end{proof}

\subsection{Low Heterogeneity: Unscaled Stepsizes}
\begin{theorem}
	\label{heterocatalyst}
	By setting $\theta = c \mu$ and running SCAFFOLD-Catalyst-S using the configuration where SCAFFOLD-S is run using the setting of Theorem~\ref{scaffold:unscaled}, the communication complexity of SCAFFOLD-Catalyst-S is 
	\begin{align}
		\tilde{\mathcal{O}}(\lceil 1 + c  \rceil \lceil \frac{p (\beta + c \mu)^2}{(1 + c)^2\mu^2} + \frac{p \sigma^2}{n \mu^2 \epsilon}  + \frac{p^{1/2} (\beta + c \mu) \sigma}{(1 + c)\mu^{2}\epsilon^{1/2} }  + \frac{(\beta + c \mu) \zeta}{ (1 + c)\mu^{2} \epsilon^{1/2} } \rceil )
	\end{align}
\end{theorem}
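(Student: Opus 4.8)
The plan is to mirror the two-level structure already used in the proofs of Theorem~\ref{catalyst:zero} and Theorem~\ref{small:theorem}, changing only the regularization choice to $\theta = c\mu$ and invoking the distance-to-optimum guarantee of Theorem~\ref{scaffold:unscaled} (the unscaled-stepsize, low-heterogeneity regime) for the inner solver instead of Theorem~\ref{scaffold:minibatch}. The outer recursion is the proximal point method: each meta-iterate $\bar{z}^{t+1}$ approximately equals $\prox_{\frac{1}{\theta}f}(\bar{z}^t)$, and I would control $\|\bar{z}^t - z^*\|$ by splitting via Minkowski's inequality into (i) the inner solve error $\bar{z}^t - \prox_{\frac{1}{\theta}f}(\bar{z}^{t-1})$ and (ii) the exact prox step $\prox_{\frac{1}{\theta}f}(\bar{z}^{t-1}) - z^*$.

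First I would bound the exact prox step using nonexpansiveness of $\prox_{\frac{1}{\theta}f}$, which with $\theta = c\mu$ gives a contraction factor $1 - \frac{\mu}{\theta+\mu} = 1 - \frac{1}{c+1}$, so that $\|\prox_{\frac{1}{\theta}f}(\bar{z}^{t-1}) - z^*\| \leq (1 - \frac{1}{c+1})\|\bar{z}^{t-1} - z^*\|$ (using $\prox_{\frac{1}{\theta}f}(z^*) = z^*$ since $z^*$ is the saddle point of the regularized objective). The stopping criterion of Algorithm~\ref{alg:catalyst2} guarantees $\E_{t-1}\|\bar{z}^t - \prox_{\frac{1}{\theta}f}(\bar{z}^{t-1})\|^2 \leq (\frac{\mu}{2(\theta+\mu)})^2\epsilon = \frac{\epsilon}{4(c+1)^2}$. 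Combining these through Minkowski yields the recursion $(\E\|\bar{z}^t - z^*\|^2)^{1/2} \leq \frac{\epsilon^{1/2}}{2(c+1)} + (1 - \frac{1}{c+1})\,\E\|\bar{z}^{t-1} - z^*\|$; unrolling and summing the geometric series of the additive term (which totals at most $\frac{1}{2}\epsilon^{1/2}$) shows that $T = \tilde{O}(1+c)$ meta-iterations suffice to reach $\E\|\bar{z}^T - z^*\|^2 \leq \epsilon$, which is exactly the outer $\lceil 1 + c\rceil$ factor.

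Next I would compute the per-meta-iteration cost by applying Theorem~\ref{scaffold:unscaled} to the regularized subproblem $f^{\theta}$. The regularizer raises smoothness to $\beta' = \beta + c\mu$ and strong convex-concavity to $\mu' = (1+c)\mu$, while the heterogeneity stays $\zeta$ because, as noted in the discussion preceding Theorem~\ref{catalysttheorem}, the regularizer cancels in $G_i^{\theta} - G_j^{\theta}$. The inner target accuracy is $\epsilon' = \frac{\epsilon}{4(c+1)^2}$. Substituting $(\beta', \mu', \epsilon', \zeta)$ into the distance-to-optimum rate of Theorem~\ref{scaffold:unscaled} and simplifying produces the four bracketed terms of the statement: the $(1+c)^2$ from $\mu'^2$ and the $(1+c)^2$ hidden in $\epsilon'$ cancel in the noise-variance term, leaving $\frac{p\sigma^2}{n\mu^2\epsilon}$, while in the cross-variance and heterogeneity terms only one factor of $(1+c)$ survives in the denominator, leaving $\frac{p^{1/2}(\beta+c\mu)\sigma}{(1+c)\mu^2\epsilon^{1/2}}$ and $\frac{(\beta+c\mu)\zeta}{(1+c)\mu^2\epsilon^{1/2}}$, and the leading term becomes $\frac{p(\beta+c\mu)^2}{(1+c)^2\mu^2}$. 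Multiplying the $\tilde{O}(1+c)$ meta-iterations by this per-iteration cost gives the claimed product bound.

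The bulk of the argument is routine substitution, so the main thing to get right is the bookkeeping of the $(1+c)$ powers, since the accuracy $\epsilon'$ shrinks by $(c+1)^2$, the strong-convexity denominator grows by $(1+c)^2$, and these interact differently across the four terms. The one conceptual point I would verify carefully is that the stopping criterion is phrased in terms of distance to $\prox_{\frac{1}{\theta}f}(\bar{z}^{t-1})$, which is precisely the optimum of the subproblem $f^{\theta}$, so the distance-to-optimum guarantee of Theorem~\ref{scaffold:unscaled} applies directly, with no need for the gap-based bound and without incurring any extra $\kappa$ factor in the conversion.
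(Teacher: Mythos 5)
Your proposal is correct and follows essentially the same route as the paper's proof: Minkowski's inequality splits the meta-iterate error into the inner solve error (controlled by the stopping criterion $\E_{t-1}\|\bar{z}^t - \prox_{\frac{1}{\theta}f}(\bar{z}^{t-1})\|^2 \leq \frac{\epsilon}{4(1+c)^2}$) and the exact prox step (contracted by nonexpansiveness with factor $1 - \frac{1}{1+c}$), giving $\tilde{O}(1+c)$ meta-iterations, each costing the Theorem~\ref{scaffold:unscaled} rate applied to the regularized subproblem with $(\beta', \mu') = (\beta + c\mu, (1+c)\mu)$. Your explicit bookkeeping of how the $(1+c)$ powers cancel or survive across the four terms, and the observation that $\zeta$ is unchanged by the regularizer, are actually spelled out more carefully than in the paper, which simply asserts the per-meta-iteration cost.
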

\begin{proof}
	First, by Minkowski's inequality,
	\begin{align}
		\label{minkowskis:unscaled}
		(\E \| \bar{z}^t - z^*\|^2)^{1/2} \leq (\E\| \bar{z}^t - \prox_{\frac{1}{\theta} f} (\bar{z}^{t-1})\|^2)^{1/2} + (\E \| \prox_{\frac{1}{\theta}f}(\bar{z}^{t-1}) - z^*\|^2)^{1/2}
	\end{align}
	Now observe that by strong convex-concavity, nonexpansiveness of the proximal operator, and the setting of $\theta = c \mu$
	\begin{align}
		\| \prox_{\frac{1}{\theta}f}(\bar{z}^{t-1}) - z^*\|^2 &= \| \prox_{\frac{1}{\theta}f}(\bar{z}^{t-1}) - \prox_{\frac{1}{\theta}f}(z^*) \|^2 \\
		&\leq (1 - \frac{1}{1 + c})^2  \|\bar{z}^{t-1} - z^*\|^2
	\end{align}
	Notice that the stopping criteria for SCAFFOLD-S is, with the setting of $\theta = c\mu$,
	\begin{align}
		\label{unscaled:stop}
		\E_{t-1} \|\bar{z}^t - \prox_{\frac{1}{\theta} f} (\bar{z}^{t-1})\|^2 \leq (\frac{1}{2(1 + c)})^2 \epsilon
	\end{align}
	We can plug back into (\ref{minkowskis:unscaled}):
	\begin{align}
		(\E \| \bar{z}^t - z^*\|^2)^{1/2} \leq \frac{1}{2(1 + c)} \epsilon^{1/2} + (1 - \frac{\mu}{\beta}) \E \|\bar{z}^{t-1} - z^*\|
	\end{align}
	If we unroll the recurrence, we get 
	\begin{align}
		(\E \| \bar{z}^T - z^*\|^2)^{1/2} \leq  (1 - \frac{1}{1 + c})^T \| \bar{z}^0 - z^*\| + \frac{1}{2}\epsilon^{1/2}
	\end{align}
	By setting $T = (c + 1) \log(\frac{2 \|\bar{z}^0 - z^*\|}{\epsilon^{1/2}})$
	Then we have that 
	\begin{align}
		\E \| \bar{z}^T - z^*\|^2 \leq \epsilon
	\end{align}
	Now returning to (\ref{unscaled:stop}), we can see from Theorem~\ref{scaffold:scaled} that this would take $\tilde{\mathcal{O}}(\frac{p (\beta + c \mu)^2}{(1 + c)^2\mu^2} + \frac{p \sigma^2}{n \mu^2 \epsilon}  + \frac{p^{1/2} (\beta + c \mu) \sigma}{(1 + c)\mu^{2}\epsilon^{1/2} }  + \frac{(\beta + c \mu) \zeta}{ (1 + c)\mu^{2} \epsilon^{1/2} })$ communication rounds for each meta-iteration.  Therefore, the total communication complexity is 
	\begin{align}
		\tilde{\mathcal{O}}(\lceil c + 1  \rceil \lceil \frac{p (\beta + c \mu)^2}{(1 + c)^2\mu^2} + \frac{p \sigma^2}{n \mu^2 \epsilon}  + \frac{p^{1/2} (\beta + c \mu) \sigma}{(1 + c)\mu^{2}\epsilon^{1/2} }  + \frac{(\beta + c \mu) \zeta}{ (1 + c)\mu^{2} \epsilon^{1/2} } \rceil )
	\end{align}

\end{proof}
\section{Technical Lemmas}
\subsection{Mirror Descent Lemma}
\begin{lemma}
	\label{gradientdescent}
	For SCAFFOLD-S or FedAvg-S and any $\alpha > 0$,
	\begin{align}
		\gap^*(z^k) \leq \frac{(1 - \gamma \mu + \frac{\gamma \mu}{\alpha})\|z^k - z^*\|^2 - \E_k \|z^{k+1} - z^*\|^2}{2 \gamma} + \frac{\alpha \beta^2}{2 \mu} V_k + \frac{\gamma}{2} \E_k \|g^k\|^2
	\end{align}
\end{lemma}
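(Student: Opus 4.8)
The plan is to reduce the lemma to a single gradient-descent step taken by the \emph{virtual averaged iterate} $z^k=\frac1n\sum_i z_i^k$, and then to convert the resulting inner product into a duality gap via strong convex-concavity. First I would record the key structural fact of framework (\ref{eq:localframework}) with $\gamma_l=\gamma_g=\gamma$: whether or not synchronization occurs at step $k$, the averaged iterate obeys the clean recursion $z^{k+1}=z^k-\gamma g^k$ with $g^k=\frac1n\sum_i g_i^k$. In the non-synchronizing case this is immediate from averaging the local updates; in the synchronizing case it follows because the server rewinds to the last synced point $z^{k'}$ and subtracts the accumulated average direction $\gamma\sum_{l=k'}^k g^l$, which telescopes back to $z^k-\gamma g^k$. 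Hence $z^{k+1}$ does not depend on the coin flip, and expanding the square gives
\begin{align*}
\E_k\|z^{k+1}-z^*\|^2 = \|z^k-z^*\|^2 - 2\gamma\langle \E_k g^k, z^k-z^*\rangle + \gamma^2\E_k\|g^k\|^2.
\end{align*}

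Next I would compute the conditional mean direction uniformly for both algorithms. For FedAvg-S, $\E_k g_i^k = G_i(z_i^k)$; for SCAFFOLD-S the control variate is unbiased and the $\tilde z^k$ terms cancel upon averaging, so again $\E_k g^k = \frac1n\sum_i G_i(z_i^k)$. I would then split this average into its value at the common point $z^k$ plus a drift error,
\begin{align*}
\langle \E_k g^k, z^k-z^*\rangle = \langle G(z^k), z^k-z^*\rangle + \frac1n\sum_i \langle G_i(z_i^k)-G_i(z^k), z^k-z^*\rangle .
\end{align*}

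The heart of the argument is lower-bounding the first term by the gap. Writing $G(z^k)=(\nabla_x f(z^k),-\nabla_y f(z^k))$ and applying the strong-convexity inequality in $x$ (with $x'=x^*$, $y=y^k$) together with the strong-concavity inequality in $y$ (with $y'=y^*$, $x=x^k$), the cross terms $f(x^k,y^k)$ cancel and I obtain $\langle G(z^k),z^k-z^*\rangle \ge \gap^*(z^k)+\frac{\mu}{2}\|z^k-z^*\|^2$. This is exactly the step that sidesteps gradient co-coercivity (cf. Remark~\ref{losingkappa}): it uses only the variational structure of the saddle operator, not any co-coercivity of $G$. For the drift term I would apply Cauchy–Schwarz, $\beta$-Lipschitzness of each $G_i$ (from $\beta$-smoothness), and Young's inequality with free parameter $\rho$, giving the lower bound $-\frac{\beta^2}{2\rho}V_k-\frac{\rho}{2}\|z^k-z^*\|^2$ after averaging and recognizing $V_k=\frac1n\sum_i\|z^k-z_i^k\|^2$.

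Substituting both bounds into the expansion, rearranging for $\gap^*(z^k)$, and dividing by $2\gamma$ yields
\begin{align*}
\gap^*(z^k)\le \frac{(1-\gamma\mu+\gamma\rho)\|z^k-z^*\|^2-\E_k\|z^{k+1}-z^*\|^2}{2\gamma}+\frac{\beta^2}{2\rho}V_k+\frac{\gamma}{2}\E_k\|g^k\|^2,
\end{align*}
and the final substitution $\rho=\mu/\alpha$ turns $\gamma\rho$ into $\gamma\mu/\alpha$ and $\frac{\beta^2}{2\rho}$ into $\frac{\alpha\beta^2}{2\mu}$, matching the stated coefficients exactly. The main obstacle is the averaging identity of the first paragraph: one must check that the synchronization resets leave the averaged trajectory on a single centralized descent path, so that \emph{one} saddle-point inequality governs both FedAvg-S and SCAFFOLD-S; everything afterward is the standard variational-inequality computation plus a Young split that isolates the client-drift cost $V_k$ for control by the later drift lemmas.
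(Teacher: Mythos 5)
Your proposal is correct and follows essentially the same route as the paper's proof: expand $\E_k\|z^{k+1}-z^*\|^2$ for the averaged iterate, split $\E_k g^k$ into $G(z^k)$ plus a drift error, lower-bound $\langle G(z^k), z^k-z^*\rangle$ by $\gap^*(z^k)+\frac{\mu}{2}\|z^k-z^*\|^2$ via strong convex-concavity, and control the drift term by Young's inequality and smoothness to produce $\frac{\alpha\beta^2}{2\mu}V_k$ (your $\rho=\mu/\alpha$ is just a reparametrization of the paper's Fenchel--Young step). The only substantive difference is that you explicitly verify the averaging identity $z^{k+1}=z^k-\gamma g^k$ across both the synchronizing and non-synchronizing branches (valid since $\gamma_l=\gamma_g$), a point the paper passes over with ``using the definition of the algorithm''; this is a welcome added detail, not a different approach.
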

\begin{proof}
	Let $g^k = \frac{1}{n} \sum_{i=1}^n g_i^k$, and $g_i^k$ be either a gradient or a cross-client variance-reduced gradient that client $i$ uses at the $k$-th step.  Note that $g^k = \frac{1}{n} \sum_{i=1}^n G_i(z_i^k)$ in either case.  Using the definition of the algorithm,
	\begin{align}
		\E_k \|z^{k+1} - z^*\|^2 = \|z^k - z^*\|^2 - 2 \gamma \langle g^k, z^k - z^* \rangle + \gamma^2 \E_k \|g^k\|^2
	\end{align}
	We bound the middle term: 
	\begin{align}
		- 2 \gamma \langle g^k, z^k - z^* \rangle &= -2 \gamma \langle G(z^k), z^k - z^* \rangle + 2 \gamma \langle G(z^k) - g^k, z^k - z^* \rangle \\
	\end{align}
	By strong convexity,
	\begin{align}
		-2 \gamma \langle G(z^k), z^k - z^* \rangle \leq -2 \gamma \gap^*(z^k) - \mu \gamma \|z^k - z^*\|^2
	\end{align}
	And by Fenchel-Young, for some constant $\alpha$,
	\begin{align}
		2 \gamma \langle G(z^k) - g^k, z^k - z^* \rangle &\leq \frac{\alpha \gamma}{\mu} \|G(z^k) - g^k\|^2 + \frac{\gamma \mu}{\alpha}\|z^k - z^*\|^2 \\
		&\leq \frac{\alpha \gamma}{\mu} \|G(z^k) - g^k\|^2 + \frac{\gamma \mu}{\alpha}\|z^k - z^*\|^2 \\
		&= \frac{\alpha \gamma}{\mu} \|\frac{1}{n} \sum_{i=1}^n G_i(z^k) - G_i(z_i^k)\|^2 + \frac{\gamma \mu}{\alpha}\|z^k - z^*\|^2 \\
		&\leq \frac{\alpha \gamma \beta^2 }{\mu} V_k + \frac{\gamma \mu}{\alpha}\|z^k - z^*\|^2
	\end{align}
	Where $V_k := \frac{1}{n} \sum_{i=1}^n \|z_i^k - z^k\|^2$ (client drift).
	
	So altogether, we have that
	\begin{align}
		\gap^*(z^k) \leq \frac{(1 - \gamma \mu + \frac{\gamma \mu}{\alpha})\|z^k - z^*\|^2 - \E_k \|z^{k+1} - z^*\|^2}{2 \gamma} + \frac{\alpha \beta^2}{2 \mu} V_k + \frac{\gamma}{2} \E_k \|g^k\|^2
	\end{align}
\end{proof}

\subsection{Client Drift: Large Local Stepsize}
\begin{lemma}
	\label{heterodrift}
	FedAvg-S and SCAFFOLD-S's client drift satisfies for a universal constant $c \leq 9$
	\begin{align}
		\E V_k \leq \frac{4 c \gamma^2 \zeta^2}{p^2} + \frac{2 \gamma^2 \sigma^2}{p}
	\end{align}
\end{lemma}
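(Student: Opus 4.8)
The plan is to establish a one-step geometric recursion for $\E V_{k+1}$ in terms of $\E V_k$ and then unroll it, exploiting that the drift resets to zero at every synchronization. First I would condition on the history through iteration $k$. Because the synchronization coin $c_k$ is independent of the past and of the gradient noise, with probability $p$ we have $z_i^{k+1}=\tilde z^{k+1}$ for all $i$, hence $V_{k+1}=0$, while with probability $1-p$ every client takes a local step, so $z_i^{k+1}-z^{k+1}=(z_i^k-z^k)-\gamma(g_i^k-g^k)$. Thus $\E_k V_{k+1}=(1-p)\,\E_k\frac1n\sum_i\|(z_i^k-z^k)-\gamma(g_i^k-g^k)\|^2$.

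Next I would split the increment into its mean and noise parts. Writing $g_i^k=\hat{G}_i(z_i^k)$ (or the SCAFFOLD-S variant, in which the control term $G(\tilde z^k)$ cancels out of $g_i^k-g^k$), the zero-mean noise is orthogonal to the deterministic part in conditional expectation, so the square separates into a deterministic term built from $\bar g_i^k:=G_i(z_i^k)$ plus a noise term bounded by $\gamma^2\sigma^2$ (the averaged per-client variance, reduced by subtracting the mean noise). On the deterministic part I would apply Young's inequality $\|u-\gamma v\|^2\le(1+a)\|u\|^2+(1+1/a)\gamma^2\|v\|^2$ and then bound the gradient dispersion $\frac1n\sum_i\|\bar g_i^k-\bar g^k\|^2$. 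Centering at $G(z^k)$ and using the relaxed triangle inequality gives $\frac1n\sum_i\|\bar g_i^k-\bar g^k\|^2\le 2\cdot\frac1n\sum_i\|G_i(z_i^k)-G_i(z^k)\|^2+2\cdot\frac1n\sum_i\|G_i(z^k)-G(z^k)\|^2$; the first piece is at most a constant times $\beta^2 V_k$ by $\beta$-smoothness of each $G_i$, and the second is at most $\zeta^2$ by the heterogeneity definition together with Jensen. This yields $\E_k V_{k+1}\le(1-p)[(1+a)V_k+(1+1/a)\gamma^2(C_1\beta^2 V_k+2\zeta^2)+\gamma^2\sigma^2]$.

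Then I would tune the constants so that the recursion contracts. Choosing $a=\frac{p}{2(1-p)}$ makes $(1-p)(1+a)=1-\frac p2$ and $(1-p)(1+1/a)\le\frac2p$, so after taking full expectation the $V_k$ coefficient is $1-\frac p2+\frac{2C_1\beta^2\gamma^2}{p}$ and the forcing term is $\frac{4\gamma^2\zeta^2}{p}+\gamma^2\sigma^2$ (using $(1-p)\le1$). Imposing the stepsize restriction $\gamma\lesssim p/\beta$ — which holds in this regime, since the local stepsize is already taken $\le\mu/(4\beta^2)$ — absorbs the $\frac{2C_1\beta^2\gamma^2}{p}\E V_k$ back into the contraction, leaving $\E V_{k+1}\le(1-\frac p4)\E V_k+\frac{4\gamma^2\zeta^2}{p}+\gamma^2\sigma^2$. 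Unrolling from the last synchronization (where $V=0$), i.e.\ summing the geometric series, multiplies the forcing term by at most $\frac2p$ and gives $\E V_k\le\frac{8\gamma^2\zeta^2}{p^2}+\frac{2\gamma^2\sigma^2}{p}$, which is the claim with $c\le9$ absorbing the smoothness and Young's-inequality constants.

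I expect the main obstacle to be the dispersion bound and the stepsize bookkeeping: unlike the variance-reduced (scaled-stepsize) analysis of Lemma~\ref{lemma:scaffolddrift}, here the per-client gradients must be compared across different iterates, so the smoothness-induced $\beta^2 V_k$ term is genuinely present and must be reabsorbed into the contraction. Making that absorption work is exactly what forces the $p$-dependence in the admissible stepsize and, after unrolling, produces the $1/p^2$ (heterogeneity) and $1/p$ (noise) scalings in the statement; keeping the noise term at precisely $\frac{2\gamma^2\sigma^2}{p}$ rather than a looser multiple relies on retaining the $(1-p)$ factors through the geometric sum. Verifying that the same cancellation of the SCAFFOLD-S control variates leaves an identical (indeed no worse) recursion is the secondary point to check, so that the single bound covers both algorithms.
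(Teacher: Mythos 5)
Your overall skeleton (one-step recursion for the drift, reset to zero at synchronization, geometric unrolling) matches the paper, but the key step where you bound the gradient dispersion is genuinely different from the paper's, and it is exactly where your argument breaks. You center the per-client gradients at the \emph{common} iterate, writing $G_i(z_i^k)-G(z^k) = [G_i(z_i^k)-G_i(z^k)]+[G_i(z^k)-G(z^k)]$, so smoothness forces a $\beta^2 V_k$ term into the recursion, which you then absorb by demanding $\gamma \lesssim p/\beta$. That restriction is \emph{not} available: the lemma is invoked in Theorems~\ref{fedavg:proof} and \ref{scaffold:unscaled} under the stepsize cap $\gamma \le \gamma_{\max} = \mu/(4\beta^2) = 1/(4\kappa\beta)$, and $1/(4\kappa\beta) \le C p/\beta$ only when $p \gtrsim 1/\kappa$. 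For small $p$ (many local steps — the regime the lemma exists to handle) your recursion's coefficient $1-\tfrac p2 + \tfrac{2C_1\beta^2\gamma^2}{p}$ need not be a contraction, so the claim that the needed condition ``holds in this regime'' is false and the proof does not go through. The paper avoids the $\beta^2 V_k$ term entirely by centering at the \emph{global} operator evaluated at each client's \emph{own} iterate: it writes the local update as $z_i^k - \gamma G(z_i^k) - \gamma\,(g_i^k - G(z_i^k))$, so the deviation $g_i^k - G(z_i^k) = G_i(z_i^k)-G(z_i^k)$ is bounded by $\zeta$ with \emph{no smoothness at all} (the $\zeta$-heterogeneity bound holds uniformly at every point, in particular at $z_i^k$), and all iterate dependence sits inside the common map $z \mapsto z-\gamma G(z)$, applied at two points ($z_i^k$ and $z_j^k$ in the paper's pairwise formulation). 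Nonexpansiveness of that map under the theorems' stepsize (via strong monotonicity, $\gamma \le 2\mu/\beta^2$) preserves the full $(1-\tfrac p2)$ contraction with no extra condition linking $\gamma$ to $p$.

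Two smaller points. First, even where your recursion does contract, degrading the factor to $1-\tfrac p4$ makes the geometric multiplier $4/p$, not the $2/p$ you wrote (that is an internal arithmetic slip), so your noise term comes out as $\tfrac{4\gamma^2\sigma^2}{p}$ rather than the stated $\tfrac{2\gamma^2\sigma^2}{p}$; the paper gets the constant $2$ precisely because its contraction stays at $1-\tfrac p2$. Second, your mean-deviation formulation (as opposed to the paper's pairwise differences) is fine in itself; one can even repair your proof within it by the paper's centering, using that for the nonexpansive map $T = I - \gamma G$ one has $\tfrac1n\sum_i\|T(z_i^k)-\overline{T(z_\cdot^k)}\|^2 \le \tfrac1n\sum_i\|T(z_i^k)-T(z^k)\|^2 \le V_k$, and bounding the residuals $g_i^k - G(z_i^k)$ by $\zeta$ (or $3\zeta$ for SCAFFOLD-S) directly. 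The fix is thus a change of decomposition, not of architecture — but as written, the absorption step fails for small $p$ and the stated constants are not recovered.
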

\begin{proof}
	This proof is similar to that of \cite{woodworth2020minibatch}.  Given that $\gamma \leq \frac{1}{\beta}$,
	\begin{align}
		& \ \ \E \|z^{k+1}_i - z^{k+1}_j\|^2 \\
		&\leq (1 - p) \E \|z^{k}_i - \gamma G(z^k_i) - z^{k}_j + \gamma G(z^k_j) - \gamma (g_i - G(z^k_i)) + \gamma(g_j - G(z^k_j))\|^2 + \gamma^2 \sigma^2 \\
		&\leq (1 - p)(1 + \frac{p}{2}) \E \|z^{k}_i - \gamma G(z^k_i) - z^{k}_j + \gamma G(z^k_j)\|^2 + \gamma^2 \sigma^2\\
		& \ \ \ \ + (1-p)(1 + \frac{2}{p}) \E \|\gamma (g_i - G(z^k_i)) + \gamma(g_j - G(z^k_j))\|^2 + \gamma^2 \sigma^2\\
		&\leq (1 - \frac{p}{2}) \E \|z^{k}_i - \gamma G(z^k_i) - z^{k}_j + \gamma G(z^k_j)\|^2 \\
		& \ \ \ \ + \frac{2}{p} \E \|\gamma (g_i - G(z^k_i)) + \gamma(g_j - G(z^k_j))\|^2 + \gamma^2 \sigma^2
	\end{align}
	In the FedAvg-S case, we have that
	\begin{align}
		\E \|\gamma (g_i - G(z^k_i)) + \gamma(g_j - G(z^k_j))\|^2 &= \E \|\gamma (G_i(z^k_i) - G(z^k_i)) + \gamma(G_j(z^k_j) - G(z^k_j))\|^2 \\
		&\leq 2 \zeta^2 + 2 \zeta^2 = 4 \gamma^2 \zeta^2
	\end{align}
	In the SCAFFOLD-S case, we have that (where $\tilde{z}^k$ is the reference iterate at the k-th step)
	\begin{align}
		& \ \ \ \E \|\gamma (g_i - G(z^k_i)) + \gamma (g_j - G(z^k_j))\|^2 \\
		&= \E \|\gamma (G_i(z^k_i) - G(z^k_i)) + \gamma(G_j(z^k_j) - G(z^k_j)) + \gamma(G_i(\tilde{z}^k) - G_j(\tilde{z}^k))\|^2 \\
		& \leq 9 \gamma^2 \zeta^2
	\end{align}
	Since the two cases only differ by a constant, from this point on in the proof we will proceed as $\E \|\gamma (g_i - G(z^k_i)) + \gamma(g_j - G(z^k_j))\|^2 \leq c \gamma^2 \zeta^2$, where $c$ is either $4$ or $9$. 
	By unrolling, we see that
	\begin{align}
		\E \|z_i^k - z_j^k\|^2 \leq \frac{2}{p}( \frac{2 c \gamma^2 \zeta^2}{p} + \gamma^2 \sigma^2)
	\end{align}
	By convexity, we therefore have
	\begin{align}
		\E V_k \leq \frac{4 c \gamma^2 \zeta^2}{p^2} + \frac{2 \gamma^2 \sigma^2}{p}
	\end{align}
\end{proof}

\subsection{Gradient Mapping Bounds}
\begin{lemma}
	\label{scaffoldlemma}
	For FedAvg-S and SCAFFOLD-S
	\begin{align}
		\E \| g^k \|^2 \leq 2 \beta^2 \E V_k + 2 \beta^2 \E \|z^k - z^*\|^2 + \frac{\sigma^2}{n}
	\end{align}
	For SCAFFOLD-S, we have that, if we let $\sigma_k^2 := \frac{1}{n} \|G_i(\tilde{z}^k) - G_i(z^*)\|^2$,
	\begin{align}
		\frac{1}{n} \sum_{i=1}^n  \E \|\E_k g_i^k\|^2 \leq 4 \beta^2 \E V_k + 4 \beta^2 \E \|z^k - z^*\|^2 + 2 \E \sigma_k
	\end{align}
	\begin{align}
		\frac{1}{n} \sum_{i=1}^n \E \|g_i^k - \E_k g_i^k\|^2 \leq \sigma^2
	\end{align}
	\begin{align}
		\E \sigma_{k+1}^2 \leq (1 - p) \E \sigma_k^2 + p \beta^2 \E \|z^k - z^*\|^2
	\end{align}
\end{lemma}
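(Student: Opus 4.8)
The plan is to establish the four inequalities separately, each resting on the same three ingredients: the unbiasedness of $\hat{G}_i$, the Lipschitzness of the gradient mappings $G_i,G$ (which is the content of $\beta$-smoothness), and the fact that $G(z^*)=0$ at the saddle point. For the first bound I would use the bias--variance split $\E_k\|g^k\|^2=\|\E_k g^k\|^2+\E_k\|g^k-\E_k g^k\|^2$. Since $\hat{G}_i$ is unbiased, $\E_k g^k=\frac1n\sum_i G_i(z_i^k)$ for both FedAvg-S and SCAFFOLD-S (the control variate cancels in expectation). I would then insert $G(z^k)=\frac1n\sum_i G_i(z^k)$, apply the relaxed triangle inequality, Jensen, and smoothness to obtain $\|\E_k g^k\|^2\le 2\beta^2 V_k+2\beta^2\|z^k-z^*\|^2$ (using $G(z^*)=0$ on the second piece); the variance term is $\le \sigma^2/n$ because the client noises are independent and each has variance at most $\sigma^2$. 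Taking full expectation gives the first claim.

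The second bound is the delicate one. Writing $\E_k g_i^k=G_i(z_i^k)-G_i(\tilde z^k)+G(\tilde z^k)$ and using $G(z^*)=0$, I would regroup as
\begin{align*}
\E_k g_i^k=\big[G_i(z_i^k)-G_i(z^*)\big]-\Big[\big(G_i(\tilde z^k)-G_i(z^*)\big)-\big(G(\tilde z^k)-G(z^*)\big)\Big].
\end{align*}
The point of this grouping is that the bracketed control-variate residual is exactly $u_i-\bar u$ with $u_i:=G_i(\tilde z^k)-G_i(z^*)$ and $\bar u:=\frac1n\sum_j u_j$, so averaging its square over clients produces a \emph{variance}, which is bounded by the second moment $\frac1n\sum_i\|u_i\|^2=\sigma_k^2$, where $\sigma_k^2:=\frac1n\sum_i\|G_i(\tilde z^k)-G_i(z^*)\|^2$ (the quantity written $\sigma_k$ in the statement). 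Applying $\|a+b\|^2\le 2\|a\|^2+2\|b\|^2$, smoothness on the first term, and the further split $\|z_i^k-z^*\|^2\le 2\|z_i^k-z^k\|^2+2\|z^k-z^*\|^2$ yields exactly $4\beta^2 V_k+4\beta^2\|z^k-z^*\|^2+2\sigma_k^2$ after averaging. It is precisely this centering that keeps the heterogeneity-like term at $2\sigma_k^2$ rather than a larger constant.

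For the third bound I would observe that, under the assumption that the full gradients $G_i(\tilde z^k)$ and $G(\tilde z^k)$ are computed exactly at each synchronization (the noiseless-per-round assumption marked (*)), the control-variate terms are deterministic given the history up to step $k$. Hence $g_i^k-\E_k g_i^k=\hat G_i(z_i^k)-G_i(z_i^k)$, whose conditional second moment is at most $\sigma^2$ by the definition of $\hat G_i$; averaging over $i$ gives the claim. The fourth bound follows by conditioning on the synchronization coin $c_k$: with probability $1-p$ no synchronization occurs and $\tilde z^{k+1}=\tilde z^k$, so $\sigma_{k+1}^2=\sigma_k^2$; with probability $p$ the new reference is the freshly synchronized iterate $z^k$, which is common to all clients, and smoothness gives $\frac1n\sum_i\|G_i(z^k)-G_i(z^*)\|^2\le\beta^2\|z^k-z^*\|^2$. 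Combining the two cases and taking full expectation gives $\E\sigma_{k+1}^2\le(1-p)\E\sigma_k^2+p\beta^2\E\|z^k-z^*\|^2$.

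I expect the main obstacle to be the second bound: the naive three- or four-term expansion of $\E_k g_i^k$ loses a factor in front of $\sigma_k^2$, and only the $u_i-\bar u$ regrouping (turning the control-variate error into an across-client variance) recovers the stated constant. A secondary point requiring care is making the (*) noiseless-synchronization assumption explicit, so that the per-client variance in the third bound collapses to $\sigma^2$ rather than accumulating the noise of both stochastic evaluations.
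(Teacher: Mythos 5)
Your proof is correct and follows essentially the same route as the paper's: the same bias--variance split plus smoothness for the first bound, the identical regrouping of $\E_k g_i^k$ into $[G_i(z_i^k)-G_i(z^*)]$ minus the centered control-variate residual $u_i-\bar u$ (whose across-client variance is bounded by its second moment $\sigma_k^2$) for the second, and the same conditioning on the synchronization coin for the fourth. Your explicit treatment of the noiseless-synchronization assumption in the third bound is a detail the paper compresses to ``follows by definition of $\sigma^2$,'' but it matches the paper's intended reading and is the right way to justify the constant $\sigma^2$ rather than $2\sigma^2$.
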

\begin{proof}
	\begin{align}
		\E \| g^k \|^2 &\leq \E \| \frac{1}{n} \sum_{i=1}^n G_i(z_i^k)\|^2 + \frac{\sigma^2}{n} \\
		&= \E \| \frac{1}{n} \sum_{i=1}^n G_i(z_i^k) - G_i(z^k) + G_i(z^k) - G_i(z^*)\|^2 + \frac{\sigma^2}{n} \\
		&\leq 2 \beta^2 \E V_k + 2 \beta^2 \E \|z^k - z^*\|^2 + \frac{\sigma^2}{n}
	\end{align}
	\begin{align}
		\frac{1}{n} \sum_{i=1}^n \E \|\E_k g_i^k\|^2 &\leq \frac{1}{n} \sum_{i=1}^n \E \|G_i(z_i^k) - G_i(\tilde{z}^k) + G(\tilde{z}^k)\|^2 \\
		&= \frac{1}{n} \sum_{i=1}^n \E \|G_i(z_i^k) - G_i(z^*) + G_i(z^*) - G_i(\tilde{z}^k) + G(\tilde{z}^k) - G(z^*)\|^2  \\
		&= \frac{2}{n} \sum_{i=1}^n \E \|G_i(z_i^k) - G_i(z^*)\|^2 \\
		& \ \ \ \ + \frac{2}{n} \sum_{i=1}^n \E \|G_i(\tilde{z}^k) - G_i(z^*) - [G(\tilde{z}^k) - G(z^*)]\|^2\\
		&= \frac{2}{n} \sum_{i=1}^n \E \|G_i(z_i^k)- G_i(z^k) + G_i(z^k) - G_i(z^*)\|^2 \\
		& \ \ \ \ + \frac{2}{n} \sum_{i=1}^n \E \|G_i(\tilde{z}^k) - G_i(z^*) - [G(\tilde{z}^k) - G(z^*)]\|^2\\
		&\leq 4 \beta^2 \E V_k + 4 \beta^2 \E \|z^k - z^*\|^2 + 2 \E \sigma_k^2 \\
	\end{align}
	The third follows by definition of $\sigma^2$.
	\begin{align}
		\E_k \sigma_{k+1}^2 &= (1 - p) \sigma_k^2 + p \frac{1}{n} \sum_{i=1}^n \|G_i(z^k) - G(z^*)\|^2 \\
		&\leq (1 - p) \sigma_k^2 + p \beta^2 \|z^k - z^*\|^2
	\end{align}
\end{proof}
\subsection{Linear Convergence Rate}
The following lemma mostly follows \cite{gorbunov2020local}'s Lemma I.2.
\begin{lemma}
	\label{linearconvergence}
	Let a sequence $\{r_k\}_{k \geq 0}$ satisfy
	\begin{align}
		r_K \leq \frac{a}{\gamma W_K} + c_1 \gamma + c_2 \gamma^2
	\end{align}
	where $W_K = \sum_{k=0}^K w_k$, for some $w_k = (1 - c_3 \gamma \mu)^{1-k}$, $c_3 \leq 1$, and with $\gamma \leq \gamma_{\max}$.  Then if we choose
	\begin{align}
		\gamma = \min \{\gamma_{\max}, \frac{\log(\max\{2, \min \{\frac{a \mu^2 K^2}{c_1}, \frac{a \mu^3 K^3}{c_2}\} \})}{c_3 \mu K}\}
	\end{align}
	we will have 
	\begin{align}
		\tilde{O}(\frac{a \exp(- c_3 \mu \gamma_{\max} K)}{\gamma_{\max}} + \frac{c_1}{c_3 \mu K} + \frac{c_2}{c_3^2 \mu^2 K^2})
	\end{align}
\end{lemma}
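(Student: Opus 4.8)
The plan is to reduce the weighted-average bound to a clean three-term expression in $\gamma$ and then tune $\gamma$ exactly as in the standard stepsize lemma (this mostly follows \cite{gorbunov2020local}, Lemma~I.2). The first step is to control $\frac{1}{W_K}$. Since $w_k = (1-c_3\gamma\mu)^{1-k}$ is increasing in $k$ (the base lies in $(0,1)$ and the exponent $1-k$ decreases), the sum $W_K$ is at least its largest term $w_K = (1-c_3\gamma\mu)^{1-K}$, so $\frac{1}{W_K} \le (1-c_3\gamma\mu)^{K-1}$. Using $1-x \le e^{-x}$ together with the fact that $c_3\gamma\mu \le c_3\gamma_{\max}\mu$ is bounded by a constant (so that $(1-c_3\gamma\mu)^{-1} \le 2$), this gives $\frac{1}{W_K} \le 2\,e^{-c_3\gamma\mu K}$. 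Substituting into the hypothesis yields
\begin{align}
    r_K \le \frac{2a}{\gamma}\,e^{-c_3\gamma\mu K} + c_1\gamma + c_2\gamma^2 =: \Psi(\gamma),
\end{align}
so it remains to bound $\Psi$ at the stated choice of $\gamma \in (0,\gamma_{\max}]$.

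Next I would split according to which argument attains the minimum in the definition of $\gamma$. Write $L := \log\!\big(\max\{2,\min\{a\mu^2K^2/c_1,\, a\mu^3K^3/c_2\}\}\big)$, so that $\gamma = \min\{\gamma_{\max},\, L/(c_3\mu K)\}$. In the case $\gamma = \gamma_{\max}$, the exponential term is exactly $\frac{2a}{\gamma_{\max}}e^{-c_3\mu\gamma_{\max}K}$, which matches the first term of the target; and because $\gamma_{\max} \le L/(c_3\mu K)$ in this case, the remaining terms satisfy $c_1\gamma_{\max} \le c_1 L/(c_3\mu K) = \tilde O(c_1/(c_3\mu K))$ and $c_2\gamma_{\max}^2 \le c_2 L^2/(c_3^2\mu^2K^2) = \tilde O(c_2/(c_3^2\mu^2K^2))$, since $L$ is only logarithmic.

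In the complementary case $\gamma = L/(c_3\mu K) \le \gamma_{\max}$, the reason for choosing $L$ as the logarithm of the $\min$ of the two ratios is that $e^{-c_3\gamma\mu K} = e^{-L} = \min\{1/2,\ \max\{c_1/(a\mu^2K^2),\, c_2/(a\mu^3K^3)\}\}$, so the exponential term becomes $\frac{2a c_3\mu K}{L}\,e^{-L} = \tilde O\!\big(\max\{c_1/(\mu K),\, c_2/(\mu^2K^2)\}\big)$, which (using $c_3 \le 1$) is dominated by $\tilde O\!\big(c_1/(c_3\mu K) + c_2/(c_3^2\mu^2K^2)\big)$. The other two terms are $c_1\gamma = c_1L/(c_3\mu K) = \tilde O(c_1/(c_3\mu K))$ and $c_2\gamma^2 = c_2L^2/(c_3^2\mu^2K^2) = \tilde O(c_2/(c_3^2\mu^2K^2))$. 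Combining the two cases and folding the logarithmic factors into $\tilde O$ produces the claimed bound.

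Finally I would record the single corner case the $\max\{2,\cdot\}$ clamp is there to handle: when $\min\{a\mu^2K^2/c_1,\, a\mu^3K^3/c_2\} < 2$ (the small-$a$ regime) we have $L = \log 2$ and $\gamma = \log 2/(c_3\mu K)$; then $a = O(\min\{c_1/(\mu^2K^2),\, c_2/(\mu^3K^3)\})$, so the exponential term $\frac{2a c_3\mu K}{\log 2}\cdot\frac12$ is again absorbed into the polynomial terms. I expect the main obstacle to be precisely this bookkeeping in the second case: verifying that one choice of $L$ simultaneously balances the exponential term against both the $c_1\gamma$ and the $c_2\gamma^2$ contributions, and that the clamp does not spoil the bound. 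Everything else is routine geometric-series summation and $1-x \le e^{-x}$ estimation.
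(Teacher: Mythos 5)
Your proof is correct and takes essentially the same route as the paper's: lower-bound $W_K$ by its last term to get $r_K \le \frac{2a}{\gamma}\exp(-c_3\mu\gamma K)+c_1\gamma+c_2\gamma^2$, then split on which argument of the $\min$ defines $\gamma$, and absorb the logarithmic factor $L$ into $\tilde{O}$. One harmless slip in your final corner-case remark: $\min\{a\mu^2K^2/c_1,\, a\mu^3K^3/c_2\}<2$ only yields $a=O\left(\max\{c_1/(\mu^2K^2),\, c_2/(\mu^3K^3)\}\right)$ (not the $\min$), but this does not affect the argument since your identity $e^{-L}=\min\{1/2,\,\max\{c_1/(a\mu^2K^2),\, c_2/(a\mu^3K^3)\}\}$ already handles the clamped case and the $\max$ bound suffices to absorb the exponential term into the polynomial ones.
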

\begin{proof}
	First observe that $W_K \geq w_K \geq (1 - c_3 \gamma \mu)^{-K}$.  Therefore
	\begin{align}
		r_K \leq \frac{a}{\gamma} \exp(-c_3 \mu \gamma K) + c_1 \gamma + c_2 \gamma^2
	\end{align}
	The first case is if $\gamma_{\max} \leq \frac{\log(\max\{2, \min \{\frac{a \mu^2 K^2}{c_1}, \frac{a \mu^3 K^3}{c_2}\} \})}{c_3 \mu K} $.  If this is the case, we set $\gamma = \gamma_{\max}$, which gives us
	\begin{align}
		& \ \ \ r_K \\
		&\leq \frac{a}{\gamma} \exp(-c_3 \mu \gamma K) + c_1 \gamma + c_2 \gamma^2 \\
		&\leq \frac{a}{\gamma_{\max}} \exp( - c_3 \mu \gamma_{\max} K) + c_1 \gamma_{\max} + c_2 \gamma_{\max}^2 \\
		&\leq \frac{a}{\gamma_{\max}} \exp( - c_3 \mu \gamma_{\max} K) +  \frac{c_1 \log(\max\{2, \min \{\frac{a \mu^2 K^2}{c_1}, \frac{a \mu^3 K^3}{c_2}\} \})}{c_3 \mu K} + \frac{c_2 \log^2(\max\{2, \min \{\frac{a \mu^2 K^2}{c_1}, \frac{a \mu^3 K^3}{c_2}\} \})}{c_3^2 \mu^2 K^2} \\
		&= \tilde{O}(\frac{a}{\gamma_{\max}} \exp( - c_3 \mu \gamma_{\max} K) + \frac{c_1}{c_3 \mu K} + \frac{c_2}{c_3^2 \mu^2 K^2})
	\end{align}
	The other case is if $\gamma_{\max} \geq \frac{\log(\max\{2, \min \{\frac{a \mu^2 K^2}{c_1}, \frac{a \mu^3 K^3}{c_2}\} \})}{c_3 \mu K} $.  
	
	If this is the case, we set 
	$\gamma = \frac{\log(\max\{2, \min \{\frac{a \mu^2 K^2}{c_1}, \frac{a \mu^3 K^3}{c_2}\} \})}{c_3 \mu K}$, which gives us 
	\begin{align}
		& \ \ \ r_K \\
		&\leq \frac{a}{\gamma} \exp(-c_3 \mu \gamma K) + c_1 \gamma + c_2 \gamma^2 \\
		&\leq \frac{c_3 a \mu K}{\max\{2, \min \{\frac{a \mu^2 K^2}{c_1}, \frac{a \mu^3 K^3}{c_2}\} \} \log(\max\{2, \min \{\frac{a \mu^2 K^2}{c_1}, \frac{a \mu^3 K^3}{c_2}\} \})} \\
		& \ \ \ + \frac{c_1 \log(\max\{2, \min \{\frac{a \mu^2 K^2}{c_1}, \frac{a \mu^3 K^3}{c_2}\} \})}{c_3 \mu K} \\
		& \ \ \ + \frac{c_2 \log^2(\max\{2, \min \{\frac{a \mu^2 K^2}{c_1}, \frac{a \mu^3 K^3}{c_2}\} \})}{c_3^2 \mu^2 K^2} \\
		&= \tilde{O}(\frac{c_1}{c_3 \mu K} + \frac{c_2}{c_3^2\mu^2 K^2} + \frac{c_2}{c_3 \mu^2 K^2})
	\end{align}
\end{proof}
\section{Experimental Details}
In our experiments, we move on to the next meta-iteration given sufficient objective decrease as a heuristic, as the requirements on $K_t$ are often too conservative.  The same heuristic was used in \cite{balamurugan2016stochastic} for their experiments.  More principled stopping criterion can be found in \cite{yang2020catalyst}.  Incorporating more convenient stopping criterion for our algorithm is a direction for future work.

Each result for a setting of $s$ took around 5 minutes to run on a 2015 Macbook Pro, though for all our settings of $s$ (0 to 15, counting up by 1) we ran them in parallel on a cluster with 15 Dell Optiplex nodes.

\end{document}